\newcommand{\figref}[1]{Fig.~\ref{#1}}
\newcommand{\algAgn}{\textsc{Agnostic}\xspace}
\newcommand{\algOmn}{\textsc{OmniTeacher}\xspace}
\newcommand{\algBbox}{\textsc{BboxTeacher}\xspace}
\newtheorem{lemma}{Lemma}
\newtheorem{theorem}{Theorem}
\newtheorem{proposition}{Proposition}
\newcommand{\Rmnum}[1]{\expandafter\@slowromancap\romannumeral #1@}
\newcommand{\citet}[1]{\citeauthor{#1} (\citeyear{#1})}
\DeclareMathOperator*{\argmin}{arg\,min}
\DeclareMathOperator*{\argmax}{arg\,max}
\newcommand{\R}{\mathbb{R}}
\newcommand{\abs}[1]{\left\vert#1\right\vert}
\def \argmax {\mathop{\rm arg\,max}}
\def \argmin {\mathop{\rm arg\,min}}
\newcommand{\norm}[1]{\left\lVert#1\right\rVert}
\newcommand{\bcc}[1]{\left\{{#1}\right\}}
\newcommand{\brr}[1]{\left({#1}\right)}
\newcommand{\bss}[1]{\left[{#1}\right]}
\newcommand{\ipp}[2]{\left\langle{#1},{#2}\right\rangle}
\newcommand{\newreptheorem}[2]{\newtheorem*{rep@#1}{\rep@title} 
	\newenvironment{rep#1}[1]{\def\rep@title{#2 \ref*{##1}}\begin{rep@#1}}{\end{rep@#1}}
}
\title{Interactive Teaching Algorithms for Inverse Reinforcement Learning}
\author{
Parameswaran Kamalaruban$^{1}$\thanks{Authors contributed equally to this work.}
\and
Rati Devidze$^{2}$\footnotemark[1]
\and
Volkan Cevher$^{1}$
\And
Adish Singla$^{2}$%\thanks{corresponding author}
\affiliations
$^1$LIONS, EPFL\\
$^2$Max Planck Institute for Software Systems (MPI-SWS)\\
\emails
\{kamalaruban.parameswaran,volkan.cevher\}@epfl.ch,
\{rdevidze,adishs\}@mpi-sws.org
}
\begin{document}
\maketitle

%%%%%%%%%%%%%%%%%%%%%%%%%%%%%%%%%%%%%%%%%%%%%%%%%%%%%%%%
\newtoggle{longversion}
\settoggle{longversion}{true}

%%%%%%%%%%%%%%%%%%%%%%%%%%%%%%%%%%%%%%%%%%%%%%%%%%%%%%%%%% ABSTRACT
%!TEX root = main.tex
%%%%%%%%%%%%%%%%%%%%%%%%%%%%%%%%%%%%%%%%%%%%%%%%%%%%%%%%%
\begin{abstract}
%\vspace{-1mm}
We study the problem of inverse reinforcement learning (IRL) with the added twist that the learner is assisted by a helpful teacher. More formally, we tackle the following algorithmic question: How could a teacher provide an informative sequence of demonstrations to an IRL learner to speed up the learning process? We present an interactive teaching framework where a teacher adaptively chooses the next demonstration based on learner's current policy. In particular, we design teaching algorithms for two concrete settings: an omniscient setting where a teacher has full knowledge about the learner's dynamics and a blackbox setting where the teacher has minimal knowledge. Then, we study a sequential variant of the popular MCE-IRL learner and prove convergence guarantees of our teaching algorithm in the omniscient setting. Extensive experiments with a car driving simulator environment show that the learning progress can be speeded up drastically as compared to an uninformative teacher.
\end{abstract}
\section{Introduction}\label{sec:intro}
%\vspace{-2mm}
Imitation Learning, also known as Learning from Demonstrations, enables a learner to acquire new skills by observing a teacher’s behavior. It plays an important role in many real-life learning settings, including human-to-human interaction \cite{buchsbaum2011children,shafto2014rational}, and human-to-robot interaction \cite{schaal1997learning,billard2008robot,argall2009survey,chernova2014robot}. 

Inverse reinforcement learning (IRL) is one of the popular approaches to imitation learning: IRL algorithms operate by first inferring an intermediate reward function explaining the demonstrated behavior, and then obtaining a  policy corresponding to the inferred reward~\cite{russell1998learning,abbeel2004apprenticeship}.
IRL has been extensively studied in the context of designing efficient learning algorithms for a given set of demonstrations \cite{abbeel2004apprenticeship,ratliff2006maximum,ziebart2008maximum,boularias2011relative,wulfmeier2015maximum,finn2016guided}. There has also been some recent work on designing active/interactive IRL algorithms that focus on reducing the number of demonstrations that needs to be requested from a teacher  \cite{kareem2018_repeated,dorsa2017active}. Despite these advances, the problem of generating an optimal sequence of demonstrations to teach an IRL agent is still not well understood.
%%%%%%%%%%%%%%%%%%%
%The two popular approaches for IL include (i) behavioral cloning, which directly replicates the desired behavior \cite{bain1999framework}, and (ii) inverse reinforcement learning (IRL), which infers the reward function explaining the desired behavior \cite{russell1998learning}.
%%%%%
%In this paper, we focus on IRL approach to imitation learning. 
%%%%%%%

%in designing efficient IRL algorithms

Motivated by applications of intelligent tutoring systems to teach sequential decision-making tasks, such as surgical training\footnote{Simulators for surgical training: \url{https://www.virtamed.com/en/}\label{footnote:surgical}} or car driving\footnote{Simulators for car driving: \url{https://www.driverinteractive.com/}\label{footnote:driving}},  we study the IRL framework from the viewpoint of a ``teacher" in order to best assist an IRL ``learner". 
\citet{cakmak2012algorithmic,danielbrown2018irl} have studied the problem of teaching an IRL learner in a batch setting, i.e., the teacher has to provide a near-optimal set of demonstrations at once. Their teaching algorithms are  \emph{non-interactive}, i.e., they construct their teaching sequences without incorporating any feedback from the learner and hence unable to adapt the teaching to the learner's progress.

\looseness-1
In real-life pedagogical settings, it is evident that a teacher can leverage the learner's feedback in adaptively choosing next demonstrations/tasks to accelerate the learning progress. For instance, consider a scenario where a driving instructor wants to teach a student certain driving skills. The instructor can easily identify the mistakes/weaknesses of the student (e.g., unable to do rear parking), and then carefully choose tasks that this student should perform, along with specific demonstrations to rectify any mistakes. In this paper, we study the problem of designing an \emph{interactive} teaching algorithm for an IRL learner.
%and show that it can considerable speed up the learning process compared to an uninformative teacher.

% an efficient teacher leverages the feedback from the learner to adapt and personalize the teaching sequence accordingly.

% it is important to leverage feedback and adapt the teaching sequen

% it is evident that a carefully chosen demonstrations and tasks can considerably accelerate the learning progress \cite{ho2016showing}. For instance, consider a real-life scenario where a driving instructor wants to teach a student certain driving skills. The instructor can easily identify the mistakes/weaknesses of the student (e.g., unable to do rear parking), and then carefully choose tasks that student should perform, along with demonstrations to rectify any mistakes.  

% However, the notion of interactively teaching an IRL learner and generating an optimal sequence of demonstrations is still not well understood computationally.

%%%%%%%%%%%%%%%%%%%
%\vspace{-1mm}
\subsection{Overview of Our Approach}
%\vspace{-1mm}
%performance
% to the learner
We consider an interactive teaching framework where at any given time: (i) the teacher observes the learner's current policy, (ii) then, the teacher provides the next teaching task/demonstration, and (iii) the learner performs an update. We design interactive teaching algorithms for two settings:
%\vspace{-0.5mm}
\begin{itemize}
    \item an ``omniscient" teaching setting where the teacher has full knowledge of the learner's dynamics and can fully observe the learner's current policy.
    \item a ``blackbox" teaching setting where the teacher doesn't know learner's dynamics and has only a noisy estimate of the learner's current policy.
\end{itemize}

In the omniscient teaching setting, we study a sequential variant of the popular IRL algorithm, namely Maximum Causal Entropy (MCE) IRL algorithm \cite{ziebart2008maximum,rhinehart2017first}). Our main idea in designing our omniscient teaching algorithm, \algOmn, is to first reduce the problem of teaching a target policy to that of teaching a corresponding hyperparameter (see Section~\ref{sec.omni.assistive}); then, the teacher greedily steers the learner towards this hyperparameter. We then prove convergence guarantees of the \algOmn algorithm and show that it can significantly reduce the number of demonstrations required to achieve a desired performance of the learner (see Theorem~\ref{greedy-mce-theorem}, Section~\ref{sec.omni.complexity}).  
%that of steering the learner towards a corresponding hyperparameter (see Section~\ref{sec.omni.assistive}). 
%and show that it can considerable speed up the learning process compared to an uninformative teacher.

%\newcommand{\algAgn}{\textsf{Agnostic}\xspace}
%\newcommand{\algOmn}{\textsf{AssistiveOmn}\xspace}
%\newcommand{\algBbox}{\textsf{AssistiveBbox}\xspace}

While omniscient teacher yields strong theoretical guarantees, it's applicability is limited given that the learner's dynamics are unknown and difficult to infer in practical applications. Based on insights from the omniscient teacher, we develop a simple greedy teaching algorithm, \algBbox, for a more practical blackbox setting (see Section~\ref{sec.black.box.teaching}).
%theoretical

% We perform extensive experiments on two environments:
% \begin{itemize}
%     \item a synthetic learning environment inspired by a car driving simulator with a linear reward setting%~\cite{ng2000algorithms}
%     \item a grid-world environment with non-linear rewards%~\cite{wulfmeier2015maximum} 
% \end{itemize}

% We perform extensive experiments on two environments:
% \begin{itemize}
%     \item a synthetic learning environment inspired by a car driving simulator with a linear reward setting%~\cite{ng2000algorithms}
%     \item a grid-world environment with non-linear rewards%~\cite{wulfmeier2015maximum} 
% \end{itemize}

%For car driving simulator environment, our results show that the sequence of demonstrations picked by our algorithm exhibits a natural, interpretable curriculum.

%In particular, our results show that the sequence of demonstrations picked by our algorithm exhibits a natural, interpretable curriculum: The teaching starts from ``easier" tasks (e.g., driving on a free highway and avoiding HOV lane) and then proceeds to more ``challenging" tasks (e.g., maintaining safe distance to pedestrians).

We perform extensive experiments in a synthetic learning environment (with both linear and non-linear reward settings) inspired by a car driving simulator~\cite{ng2000algorithms,levine2010feature}. We demonstrate that our teaching algorithms can bring significant improvements in speeding up the learning progress compared to an uninformative teaching strategy that picks demonstrations at random. Furthermore, the performance of the \algBbox algorithm is close to the \algOmn algorithm even though it operates with limited information about the learner. 
\section{Problem Setup}\label{sec.model}
%In this section, we introduce the model for teaching an inverse reinforcement learning (IRL) learner via demonstrations.
We now formalize the problem addressed in this paper.

%\vspace{-1mm}
\subsection{Environment}
%, Teacher, and Learner
The environment is formally represented by an MDP $\mathcal{M} := \brr{\mathcal{S},\mathcal{A},T,\gamma,P_0,R^E}$. The sets of possible states and actions are denoted by $\mathcal{S}$ and $\mathcal{A}$ respectively. $T: \mathcal{S} \times \mathcal{S} \times \mathcal{A} \rightarrow \bss{0,1}$ captures the state transition dynamics, i.e., $T\brr{s' \mid s,a}$ denotes the probability of landing in state $s'$ by taking action $a$ from state $s$. Here $\gamma$ is the discounting factor, and $P_0: \mathcal{S} \rightarrow \bss{0,1}$ is an initial distribution over states $\mathcal{S}$. We denote a policy $\pi: \mathcal{S} \times \mathcal{A} \rightarrow \bss{0,1}$ as a mapping from a state to a probability distribution over actions. The underlying reward function is given by $R^E: \mathcal{S} \times \mathcal{A} \rightarrow \mathbb{R}$.
%R^E_{\mathrm{max}}and is bounded by $\max_{s,a}\abs{R^E(s,a)} \leq R^E_{\mathrm{max}}$.

%\looseness-1

% \in \mathbb{R}
%deep

%, but doesn't know the parameter $\lambda$ of the reward function $R_\lambda$.

%where 
% $\phi: \mathcal{S} \times \mathcal{A} \rightarrow \mathbb{R}^d$ is a given feature representation
% We assume that there a feature representation $\phi \brr{s,a} \in \mathbb{R}^d$ for
% We consider the reward functions $R_\lambda: \mathcal{S} \times \mathcal{A} \rightarrow \mathbb{R}$ which are parameterized by $\lambda \in \mathbb{R}^d$ 
%and a non-linear reward could be represented by 
%, \emph{e.g.}, linear reward of the form $R_\lambda\brr{s,a} = \ipp{\lambda}{\phi\brr{s,a}}$, where $\phi: \mathcal{S} \times \mathcal{A} \rightarrow \mathbb{R}^d$ is a feature vector; or $\lambda$ being the weights of a deep neural network with $\brr{s,a}$ (or $\phi \brr{s,a}$) as input layer and $R_\lambda\brr{s,a} \in \mathbb{R}$ as output layer.

%\vspace{-1mm}
\subsection{Interaction between Learner and Teacher}\label{sec.interaction}
%parameters.
%, i.e., has only access to $\mathcal{M} \textbackslash R$. 
 In our setting, we have two entities: a teacher and a sequential IRL learner. 
The teacher has access to the full MDP $\mathcal{M}$ and has a \emph{target policy} $\pi^E$ computed as an optimal policy w.r.t. $R^E$. % in the MDP $\mathcal{M}$.
The learner knows the MDP $\mathcal{M}$ but not the reward function $R^E$, i.e., has only access to $\mathcal{M} \textbackslash R^E$. 
 The teacher's goal is to provide an informative sequence of demonstrations to teach the policy $\pi^E$ to the learner. Here, a teacher's demonstration $\xi = \bcc{\brr{s_\tau,a_\tau}}_{\tau = 0,1,\dots}$ is obtained by first choosing an initial state $s_0 \in \mathcal{S}$ (where $P_0(s_0) > 0$) and then choosing a trajectory of state-action pairs obtained by executing the policy $\pi^E$ in the MDP $\mathcal{M}$.

%(a ``task") 
%
%representing a sequence of state-action pairs 

%Let   (\emph{e.g.}, an optimal policy in the MDP $\mathcal{M}$) that we want to teach to the learner.

We consider an interactive teaching framework with three key steps formally described in Algorithm~\ref{algo:learner-pseudo}. At any time $t$, the teacher observes an estimate of learner's current policy $\pi^L_t$ and accordingly provides an informative demonstration $\xi_t$ to assist the learner. Then, the learner updates it's policy to $\pi^L_{t+1}$ based on demonstration $\xi_t$ and it's internal learning dynamics.
%current policy $\pi_{t}$, 

% Similarly, for any trajectory $\xi = \bcc{\brr{s_\tau,a_\tau}}_{\tau = 0,1,\dots}$ representing a sequence of state-action pairs when executing a policy in the MDP $\mathcal{M}$, we define the following:

% XXXXXXX
% A simple teaching strategy is to provide demonstrations at random by executing the policy $\pi^E$ in the MDP $\mathcal{M}$, i.e., $\xi_t \sim (\pi^E,\mathcal{M})$. 

% where $s$ is the starting state of $\xi$.
% XXXXXXX

%(which could be based on teacher's current estimate of learner's policy 
%a sequential IRL learner who knows the MDP $\mathcal{M}$, but not the reward function $R$, i.e., has only access to $\mathcal{M} \textbackslash R$.  In addition, we have a helpful teacher with full access to the MDP $\mathcal{M}$ who would assist the learner by providing informative demonstrations. 

%The learner receives a sequence of demonstrations $\Xi = \bcc{\xi_t}_{t=1,2,\dots}$ from the teacher and outputs a policy $\pi^L$. The learner-teacher interaction is described formally in Algorithm~\ref{algo:learner-pseudo}. 

\begin{algorithm}[t!]
	\caption{Interactive Teaching Framework}\label{algo:learner-pseudo}
	\begin{algorithmic}[1]
		%\State  \textbf{Initialization:}
		%initial policy $\pi_1$, probing interval $B$ %\\% \;
		\For{$t=1,2,\dots,T$}
		        %\State if $t ~\%~ B = 0$: teacher queries learner to estimate $\pi_t$
	    		%\begin{enumerate}[(1)]
	    		    %\item
	    		    \State teacher observes an estimate of learner's policy  $\pi^L_t$ \;
	    		    %\item
	    		    \State teacher provides a demonstration $\xi_t$ to the learner%\\%\;
	    		    %\item 
	    		    \State learner updates the policy to $\pi^L_{t+1}$ using $\xi_t$%\\% \;
	    		%\end{enumerate}
	    		%$\pi^L_{t}$ and 
  		\EndFor
  		%\State \textbf{Output:} compute policy $\pi^L$ from $\pi_{1}, \ldots, \pi_{T+1}$
  		%\textbf{Output:} compute policy $\pi^L$ from $\pi_{1:T}$ 
  	\end{algorithmic}
\end{algorithm}

\subsection{Occupancy Measure and Expected Reward}
We introduce the following two notions to formally define the teaching objective. For any policy $\pi$, the occupancy measure $\rho$ and the total expected reward $\nu$ of $\pi$ in the MDP $\mathcal{M}$ are defined as follows respectively:
%(with respect to the state $s \in \mathcal{S}$, and action $a \in \mathcal{A}$) 
%\vspace{-2mm}
\begin{align}
\rho^\pi \brr{s,a} :=& \brr{1-\gamma} \pi(a \mid s) \sum_{\tau=0}^\infty \gamma^\tau \mathbb{P}\bcc{S_\tau = s \mid \pi, \mathcal{M}} \label{state-action-visit-eq} \\
\nu^{\pi}:=& \frac{1}{1-\gamma} \sum_{s,a} \rho^\pi \brr{s,a} R^E \brr{s,a} 
\label{state-action-featurecount-eq}
%\nu^{\pi}(s,a) :=& \frac{1}{1-\gamma}\sum_{s,a} \rho^\pi \brr{s,a} \phi \brr{s,a}
%\phi(s) + \sum_{a}{ \pi(a \mid s) \bss{\gamma \sum_{s'}{T(s' \mid s,a) \nu^{\pi}(s')}} }  \nonumber
%\vspace{-4mm}
\end{align}
Here, $\mathbb{P}\bcc{S_\tau = s \mid \pi, \mathcal{M}}$ denotes the probability of visiting the state $s$ after $\tau$ steps by following the policy $\pi$.
%in the MDP $\mathcal{M}$. 
%Then by taking expectation over $s \in \mathcal{S}$ with respect to the initial state distribution $P_0$, we have $V^{\pi} := \mathbb{E}_{s \sim P_0} \bss{V^{\pi,s}}$. 
%Note that $\rho^\pi$ satisfies the following recursive relationship: $\sum_a \rho^\pi \brr{s,a} = P_0\brr{s} + \gamma \sum_{s',a} T(s \mid s',a) \rho^\pi \brr{s',a}$. 
%%%
%representing a sequence of state-action pairs when executing a policy in the MDP $\mathcal{M}$
Similarly, for any demonstration $\xi = \bcc{\brr{s_\tau,a_\tau}}_{\tau = 0,1,\dots}$, we define
% the following
%empirical counterpart of the above defined identity:
%\vspace{-2mm}
\begin{align*}
\rho^\xi \brr{s,a} ~:=~& \brr{1-\gamma} \sum_{\tau = 0}^{\infty} \gamma^\tau \mathbb{I}\bcc{s_\tau = s , a_\tau = a}
%\\
%V^{\xi} ~:=~& \sum_{\tau=0}^{\infty}{\gamma^\tau R(s_\tau, a_\tau)} 
%\vspace{-4mm}
\end{align*}
Then for a collection of demonstrations $\Xi = \bcc{\xi_t}_{t=1,2,\dots}$, we have $\rho^{\Xi} \brr{s,a} := \frac{1}{\abs{\Xi}}\sum_{t}{\rho^{\xi_t} \brr{s,a}}$.
%, and $V^{\Xi} := \frac{1}{\abs{\Xi}}\sum_{t}{V^{\xi_t}}$. 

\subsection{Teaching Objective}\label{sec.model.objective}
%Let  $\pi^E$ be the \emph{target policy} (\emph{e.g.}, an optimal policy in the MDP $\mathcal{M}$) that we want to teach to the learner. 
Let $\pi^L$ denote the learner's final policy at the end of teaching. The performance of the policy $\pi^L$ (w.r.t. $\pi^E$) in $\mathcal{M}$ can be evaluated via the following measures (for some fixed $\epsilon > 0$):
\begin{enumerate}
%\item $\max_s \abs{V^{\pi^L}(s) - V^{\pi^E}(s)} ~\leq~ \epsilon$, ensuring that learner's policy has reward close to teacher's policy.
\item $\abs{\nu^{\pi^E} - \nu^{\pi^L}} \leq \epsilon$, ensuring high reward \cite{abbeel2004apprenticeship,ziebart2010modeling}.
%for any linear reward function .
\item $D_{\mathrm{TV}}\brr{\rho^{\pi^E},\rho^{\pi^L}} \leq \epsilon$, ensuring that learner's behavior induced by the policy $\pi^L$ matches that of the teacher \cite{ho2016generative}. Here $D_{\mathrm{TV}}\brr{p,q}$ is the total variation distance between two probability measures $p$ and $q$.
\end{enumerate}
%of the form $R_\lambda(s) = \ipp{\lambda}{\phi\brr{s}}$
% (with some unknown parameter $\lambda$)
%the occupancy measure match.
% (which is the objective of the learner for a given demonstrations)
%$R_\lambda\brr{s,a} = \ipp{\lambda}{\phi\brr{s,a}}$

IRL learner's goal is to $\epsilon$-\emph{approximate} the teacher's policy w.r.t. one of these performance measures \cite{ziebart2010modeling,ho2016generative}. In this paper, we study this problem from the viewpoint of a teacher in order to provide a near-optimal sequence of demonstrations $\bcc{\xi_t}_{t=1,2,\dots}$ to the learner to achieve the desired goal. The teacher's performance is then measured by the number of demonstrations required to achieve the above objective. 
\section{Omniscient Teaching Setting}\label{sec.omni-teaching}
In this section, we consider the omniscient teaching setting where the teacher knows the learner’s dynamics including the learner’s current parameters at any given time. We begin by introducing a specific learner model that we study for this setting.

%%%%%%%%%%%%%%%%%%%%%%%%%%%%%%%%%%%%%%%%%%%%%%%%%%%%%%%%%
%%%%%%%%%%%%%%%%%%%%%%%%%%%%%%%%%%%%%%%%%%%%%%%%%%%%%%%%%
\subsection{Learner Model}\label{sec.learner-algos}
We consider a learner implementing an IRL algorithm based on Maximum Causal Entropy approach (MCE-IRL) \cite{ziebart2008maximum,ziebart2010modeling,wulfmeier2015maximum,rhinehart2017first,zhou2018mdce}. First, we discuss the parametric reward function of the learner and then introduce a sequential variant of the MCE-IRL algorithm used in our interactive teaching framework.
%discuss the batch formulation of the MCE-IRL and finally

%%%%%%%%%%%%%%%%%%%%%%%%%%%%%%%%%%%%%%%%%%%%%%%%%%%%%%%%%
\begin{algorithm}[t!]
	\caption{Sequential MCE-IRL}
	\label{algo:sequential-mce-irl}
	\begin{algorithmic}[1]
 		\State \textbf{Initialization:} parameter $\lambda_1$, policy $\pi^L_1 := \pi^L_{\lambda_1}$ 
 		\For{$t=1,2,\dots,T$} 
    			\State receives a demonstration $\xi_t$ with starting state $s_{t,0}$ 
    			\State update  $\lambda_{t+1} \gets \Pi_{\Omega} \bss{\lambda_{t} - \eta_t (\mu^{\pi^L_t , s_{t,0}} - \mu^{\xi_t})}$ \State compute  $\pi^L_{t+1} \gets \textnormal{Soft-Value-Iter}(\mathcal{M} \textbackslash R^E, R^L_{\lambda_{t+1}})$
    		\EndFor  
    	\State \textbf{Output:} policy $\pi^L \gets \pi^L_{T+1}$ 
	\end{algorithmic}
\end{algorithm}
%%%%%%%%%%%%%%%%%%%%%%%%%%%%%%%%%%%%%%%%%%%%%%%%%%%%%%%%%

%%%%%%%%%%%%%%%%%%%%%%%%%%%%%%%%%%%%%%%%%%%%%%%%%%%%%%%%%
\paragraph{Parametric reward function.}
We consider the learner model with parametric reward function $R^L_\lambda: \mathcal{S} \times \mathcal{A} \rightarrow \mathbb{R}$ where $\lambda \in \mathbb{R}^d$ is a parameter. The reward function  also depends on the learner's feature representation $\phi^L: \mathcal{S} \times \mathcal{A} \rightarrow \mathbb{R}^{d'}$. For linear reward function $R^L_\lambda\brr{s,a} := \ipp{\lambda}{\phi^L\brr{s,a}}$, $\lambda$ represents the weights. 
As an example of non-linear rewards, the function $R^L_\lambda$ could be high-order polynomial in $\lambda$ (see Section~\ref{sec:experiments.nonlinear}). As a more powerful non-linear reward model, $\lambda$ could be the weights of a neural network with $\phi^L \brr{s,a}$ as input layer and $R^L_\lambda\brr{s,a}$ as output \cite{wulfmeier2015maximum}. 

%\todo{Reviewer 2, Rebuttal --- First, note that the sequential MCE-IRL learner (Algorithm 2) is using a parametric reward function $R^L_\lambda$ that includes complex reward settings such as (i) deep MaxEnt model [Wulfmeier et al. 2015] and (ii) quadratic reward function (used in Section 5.2).  While Algorithm 2 provides a well-defined gradient learner, this algorithm in general does not correspond to the standard MCE-IRL formulation doing feature-matching.}

%in  function is where  Another example is where $R^L_\lambda$ is polynomial $\lambda$ with coefficients determined by the components of the feature vector, similar to the one we considered in the experiments (see Section~\ref{sec:experiments.nonlinear}).

%%%%%%%%%%%%%%%%%%%%%%%%%%%%%%%%%%%%%%%%%%%%%%%%%%%%%%%%%
%\paragraph{Batch MCE-IRL.}
\paragraph{Soft Bellman policies.}
Given a fixed parameter $\lambda$, we model the learner's behavior via the following soft Bellman policy\footnote{For the case of linear reward functions, soft Bellman policies are obtained as the solution to the MCE-IRL optimization problem \cite{ziebart2010modeling,zhou2018mdce}. Here, we extend this idea to model the learner's policy  with the general parametric reward function.  However, we note that for our learner model, it is not always possible to formulate a corresponding optimization problem that leads to the policy form mentioned above.}:
\begin{align}
\pi^L_{\lambda} (a \mid s) ~=~& \exp \brr{Q_\lambda \brr{s, a} - V_\lambda \brr{s}} \label{mce-policy-lambda} \\
V_\lambda \brr{s}  ~=~&\log \sum_{a}{\exp \brr{ Q_\lambda \brr{s, a} }} \nonumber \\
Q_\lambda \brr{s, a} ~=~& R^L_\lambda \brr{s, a} + \gamma \sum_{s'}{T(s' \mid s , a) V_\lambda \brr{s'}}. \nonumber 
\end{align}

For any given $\lambda$, the corresponding policy $\pi^L_\lambda$ can be efficiently computed via Soft-Value-Iteration procedure (see  \cite[Algorithm.~9.1]{ziebart2010modeling}, \cite{zhou2018mdce}).

%The learner's policy we considered above can be seen as a direct generalization of the MCE-IRL solution for the general parametric reward function. 

%For the general parametric reward function, the above form can be viewed as a generalization ..
%.....
%We consider a learner with the following parametric form of policies \cite{ziebart2010modeling,wulfmeier2015maximum,zhou2018mdce}

%\label{sec.batch.irl}
% Given a collection of demonstrations $\Xi = \bcc{\xi_t}_{t=1,2,\dots}$ (where $\xi_t = \bcc{\brr{s_{t,\tau}, a_{t,\tau}}}_{\tau=0,1,\dots}$), the MCE-IRL algorithm returns a parametric policy of the form 

%%%%%%%%%%%%%%%%%%%%%%%%%%%%%%%%%%%%%%%%%%%%%%%%%%%%%%%%%
\paragraph{Sequential MCE-IRL and gradient update.}

%The above optimization problem can be solved by the gradient descent update rule given by
%\begin{equation}
%\lambda^{+} ~\gets~ \lambda - \eta g ,
%\label{mce-batch-full-grad-eq}
%\end{equation}
%where $\eta$ denotes the learning rate, and the gradient is given by 
%\[
%g ~=~ \sum_{s,a} \bcc{\rho^{\pi^L_\lambda} \brr{s,a} - \rho^{\Xi} \brr{s,a}} \frac{\partial R^L_\lambda \brr{s,a}}{\partial \lambda}  .
%\]

%The gradient update rule given in Eq.~\eqref{mce-batch-full-grad-eq} has a  natural online variant 
We consider a sequential MCE-IRL learner for our interactive setting where, at time step $t$, the learner receives next demonstration $\xi_t = \bcc{(s_{t,\tau} , a_{t,\tau})}_{\tau=0,1,\dots}$ with starting state $s_{t,0}$. Given the learner's current parameter $\lambda_{t}$ and policy $\pi^L_{t} := \pi^L_{\lambda_t}$, 
the learner updates its parameter via an online gradient descent update rule given by $\lambda_{t+1} ~=~ \lambda_t - \eta_t g_t$ with $\eta_t$ as the learning rate.
% \[
% \lambda_{t+1} ~=~ \lambda_t - \eta_t g_t ,
% %\label{mce-batch-full-grad-eq}
% \]
The gradient $g_t$ at time $t$ is computed as
\[
g_t ~=~ \sum_{s,a} \bss{\rho^{\pi^L_t , s_{t,0}} \brr{s,a} - \rho^{\xi_t} \brr{s,a}} \nabla_\lambda R^L_\lambda \brr{s,a} \big \vert_{\lambda = \lambda_t} ,
\]
where $\rho^{\pi^L_t , s_{t,0}} \brr{s,a}$ is given by Eq.~\eqref{state-action-visit-eq} and computed with $s_{t,0}$ as the only initial state, i.e., $P_0(s_{t,0}) = 1$. 
%As shown in the longer version of the paper~\cite{kamalaruban2019interactive-arxiv}, 
As shown in Appendix~\ref{sec.appendix.maxent}, $g_t$ can be seen as an empirical counterpart of the gradient of the following loss function:
\begin{equation*}
%\label{batch-mce-opt}
c\brr{\lambda ; \pi^E} ~=~ \underset{\bcc{\brr{s_\tau , a_\tau}}_\tau \sim \brr{\pi^E , \mathcal{M}}}{\mathbb{E}} \bss{- \sum_\tau \gamma^\tau \log \pi^L_\lambda \brr{a_{\tau} \mid s_{\tau}}}
\end{equation*}
capturing the discounted negative log-likelihood of teacher's demonstrations.
%capturing the discounted negative log-likelihood of demonstrations obtained by first sampling the state $s_0 ~ P_0$ and than a random rollout of the teacher's policy $\pi^E$.
%demonstration $\xi_t$.
%capturing the discounted negative log-likelihood of teacher's demonstration $\xi_t$. 
%
%Details are provided in the longer version of the paper \cite{kamalaruban2019interactive-arxiv}.

%Given a demonstration $\xi_t = \bcc{(s_{t,\tau} , a_{t,\tau})}_{\tau=0,1,\dots}$, we define the discounted negative log likelihood loss function as follows 
%\begin{equation}
%\label{batch-mce-opt}
%c\brr{\lambda ; \xi} ~:=~ - \sum_\tau \gamma^\tau \log \pi^L_\lambda \brr{a_{\tau} \mid s_{\tau}} .
%\end{equation}

%%%%%%%%%%%%%%%%%%%%%%%%%%%%%%%%%%%%%%%%%%%%%%%%%%%%
For brevity, we define the following quantities:
\begin{align}
\mu^{\pi^L_t , s_{t,0}} ~=~& \sum_{s,a} \rho^{\pi^L_t , s_{t,0}} \brr{s,a} \nabla_\lambda R^L_\lambda \brr{s,a} \big \vert_{\lambda = \lambda_t} \label{eq-feature-expectation} \\
\mu^{\xi_t} ~=~& \sum_{s,a} \rho^{\xi_t} \brr{s,a} \nabla_\lambda R^L_\lambda \brr{s,a} \big \vert_{\lambda = \lambda_t} , \label{eq-feature-expectation-empirical}
\end{align}
and we write the gradient compactly as $g_t = \mu^{\pi^L_t , s_{t,0}} - \mu^{\xi_t}$. 
Algorithm~\ref{algo:sequential-mce-irl} presents the proposed sequential MCE-IRL learner. In particular, we use the online projected gradient descent update given by $\lambda_{t+1} ~\gets~ \Pi_{\Omega} \bss{\lambda_{t} - \eta_t g_t}$,
% \begin{equation*}
% \lambda_{t+1} ~\gets~ \Pi_{\Omega} \bss{\lambda_{t} - \eta_t g_t} ,
% \label{mce-irl-online-gradient-update}
% \end{equation*}
where $\Omega = \bcc{\lambda: \norm{\lambda}_2 \leq z}$ for large enough $z$ (cf. Section~\ref{sec.omni.complexity.theorem1}). 

\subsection{Omniscient Teacher}\label{sec.omni.assistive}
%In this section, we consider the omniscient teaching setting where the teacher knows the learner's dynamics including the learner's current parameters at any given time. 

Next, we present our omniscient teaching algorithm, \algOmn, for  sequential MCE-IRL learner.

%%%%%%%%%%%%%%%%%%%%%%%%%%%%%%%%%%%%%%%%%%%%%%%%%%%%%%%%%
\begin{algorithm}[t!]
	\caption{\algOmn for sequential MCE-IRL}\label{algo:sequential-mce-irl-assistive}
	\begin{algorithmic}[1]
		\For{$t=1,2,\dots,T$}
   		 	\State teacher picks $\xi_t$ by solving Eq.~\eqref{mce-omni-teach-eq} 
   		 	\State learner receives $\xi_t$ and updates using Algorithm~\ref{algo:sequential-mce-irl}
		\EndFor  
  	\end{algorithmic}  		
\end{algorithm}
%%%%%%%%%%%%%%%%%%%%%%%%%%%%%%%%%%%%%%%%%%%%%%%%%%%%%%%%%

\paragraph{Policy to hyperparameter teaching.}
The main idea in designing our algorithm, \algOmn, is to first reduce the problem of teaching a target policy $\pi^E$ to that of teaching a corresponding hyperparameter, denoted below as $\lambda^*$. Then, under certain technical conditions, teaching this $\lambda^*$ to the learner ensures that the learner's policy $\epsilon$-approximate the target policy $\pi^E$. We defer the technical details to Section~\ref{sec.omni.complexity}.

\paragraph{Omniscient teaching algorithm.}

%Given $m = \Omega\brr{\frac{1}{\epsilon^2}}$, let $\Xi^E = \bcc{\xi_t}_{t=1}^m$ be a collection of demonstrations generated by following $\pi^E$ with initial state $s \sim P_0$. Then the target hyperparameter $\lambda^*$ is given by
%\begin{equation}
%\label{target-hyper}
%\lambda^* = \argmax_\lambda ~~ c\brr{\lambda ; \Xi^E} .
%\end{equation}

%Then, \algOmn steers the learner's parameter $\lambda_t$ towards this target $\lambda^*$, i.e., $\lambda_t$ approximately converges to $\lambda^*$.
% Then the optimal solution $\lambda^*$ is given by
%Now, we design a teaching strategy to greedily steer the learner towards the target hyperparameter $\lambda^*$. 

%, i.e., $\lambda_t$ approximately converges to $\lambda^*$
Now, we design a teaching strategy to greedily steer the learner's parameter $\lambda_t$ towards the target hyperparameter $\lambda^*$. The main idea is to pick a demonstration which minimizes the distance between the learner's current parameter $\lambda_t$ and $\lambda^*$, i.e., minimize $\norm{\lambda_{t+1} - \lambda^*} = \norm{\lambda_{t} - \eta_t {g}_t - \lambda^*}$. We note that, \citet{liu2017iterative,pmlr-v80-liu18b} have used this idea in their iterative machine teaching framework to teach regression and classification tasks. In particular, we consider the following optimization problem for selecting an informative demonstration at time $t$:
\begin{equation}
\label{mce-omni-teach-eq}
\underset{s, \xi}{\mathrm{min}} ~~ \eta_t^2 \norm{\mu^{\pi^L_t , s} - \mu^{\xi}}^2 - 2 \eta_t \ipp{\lambda_t - \lambda^*}{\mu^{\pi^L_t , s} - \mu^{\xi}} , 
\end{equation}
where $s \in \mathcal{S}$ is an initial state (where $P_0(s_0) > 0$) and $\xi$ is a trajectory obtained by executing the policy $\pi^E$ starting from $s$.\footnote{Note that $\xi$ is not constructed synthetically by the teacher, but is obtained by executing policy $\pi^E$ starting from $s$. However, $\xi$ is not just a random rollout of the policy $\pi^E$: When there are multiple possible trajectories from $s$ using $\pi^E$, the teacher can choose the most desirable one as per the joint optimization problem in Eq.~\eqref{mce-omni-teach-eq}.}
The resulting teaching strategy is given in Algorithm~\ref{algo:sequential-mce-irl-assistive}.

\section{Omniscient Teaching Setting: Analysis} \label{sec.omni.complexity}
In this section, we analyze the teaching complexity of our algorithm \algOmn; the proofs are provided Appendix~\ref{sec.appendix.maxent-assistive-teaching}.
%in the longer version of the paper
%\cite{kamalaruban2019interactive-arxiv}.
%.\footnote{arXiv: \url{https://tinyurl.com/ijcai-teach-irl-sup-gdrive} \label{footnote:longer}}
% for sequential MCE-IRL learner

%\subsection{Richness of Demonstrations}
\subsection{Convergence to Hyperparameter $\lambda^*$}
%\label{sec.power.teacher}
In this section, we analyze the teaching complexity, i.e., the total number of time steps $T$ required, to steer the learner towards the target hyperparameter. For this analysis, we quantify the ``richness" of demonstrations (i.e., $\brr{s_{t,0} , \xi_t}$ as solution to Eq.~\eqref{mce-omni-teach-eq}) in terms of providing desired gradients to the learner.
% at time step $t$ picked by the teacher in Eq.~\eqref{mce-omni-teach-eq}
%teacher's ability in generating ``rich" demonstrations so that it can provide desired gradients in Eq.~\eqref{mce-omni-teach-eq}. 
%

%Here, we formalize the ability of the teacher to generate ``rich" demonstrations to provide desired gradients in Eq.~\eqref{mce-omni-teach-eq} and quickly steer the learner towards the target.

%\textbf{Step 1.} 

%based on the optimization problem in Eq.~\eqref{mce-omni-teach-eq} 
For the state  $s_{t,0}$ picked by the teacher at time step $t$, let us first define the following objective function given by $f_t \brr{\mu} := \norm{\lambda_{t} - \lambda^* - \eta_t \mu^{\pi^L_t , s_{t,0}} + \eta_t \mu }^2$.
% \[
% f_t \brr{\mu} := \norm{\lambda_{t} - \lambda^* - \eta_t \mu^{\pi^L_t , s_{t,0}} + \eta_t \mu }^2 .
% \]
Note that the optimal solution $\mu^{\mathrm{syn}}_t := \argmin_{\{\mu ~\in~ \R^d\}} f_t \brr{\mu}$ takes the closed form as $\mu^{\mathrm{syn}}_t ~=~ \mu^{\pi^L_t , s_{t,0}} - \frac{1}{\eta_t} \brr{\lambda_t - \lambda^*}$.
% \[
% \mu^{\mathrm{syn}}_t ~=~ \mu^{\pi^L_t , s_{t,0}} - \frac{1}{\eta_t} \brr{\lambda_t - \lambda^*} .
% \]

%(see Section~\ref{sec.interaction} and constrained optimization problem~Eq.~\eqref{mce-omni-teach-eq})
Ideally, the teacher would like to provide a demonstration $\xi_t$ (starting from $ s_{t,0}$) for which $\mu^{\xi_t} = \mu^{\mathrm{syn}}_t$. In this case, the learner's hyperparameter converges to $\lambda^*$ after this time step.  However, in our setting, the teacher is restricted to only provide demonstrations obtained by executing the teacher's policy $\pi^E$ in the MDP $\mathcal{M}$. We say that a teacher is $\Delta_{\mathrm{max}}$-rich, if for every $t$, the teacher can provide a demonstration $\xi_t$ that satisfies the following:
\[
\mu^{\xi_t} ~=~ \mu^{\pi^L_t , s_{t,0}} - \beta_t \brr{\lambda_t - \lambda^*} + \delta_t ,
\]
for some $\delta_t$ s.t. $\norm{\delta_t}_{2} \leq \Delta_{\mathrm{max}}$, and $\beta_t \in \bss{0,\frac{1}{\eta_t}}$. The main intuition behind these two quantities is the following: (i) $\beta_t$ bounds the magnitude of the gradient in the desired direction of $\brr{\lambda_t - \lambda^*}$ and (ii) $\delta_t$ accounts for the deviation of the gradient from the desired direction of $\brr{\lambda_t - \lambda^*}$.
The following lemma provides a bound on the teaching complexity of \algOmn to steer the learner's parameter to the target $\lambda^*$.  

\begin{lemma}
\label{greedy-mce-lemma}
Given $\epsilon' > 0$, let the \algOmn be $\Delta_{\mathrm{max}}$-rich with $\Delta_{\mathrm{max}} = \frac{{\epsilon'}^2 \beta^2}{4 \eta_{\mathrm{max}} \bss{4 (1-\beta) z + 1}}$, where $\beta = \min_t \eta_t \beta_t$, and $\eta_{\mathrm{max}} = \max_t \eta_t$. Then for the \algOmn algorithm with $T = \mathcal{O}\brr{\log \frac{1}{\epsilon'}}$, we have $\norm{\lambda_{T} - \lambda^*} \leq \epsilon'$.
\end{lemma}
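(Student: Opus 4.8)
The plan is to track the squared distance $r_t := \norm{\lambda_t - \lambda^*}^2$ and show it obeys a contracting affine recursion of the form $r_{t+1} \le (1-\beta)^2 r_t + E$ for a small error $E$, and then to solve this recursion by a geometric-series argument. First I would substitute the $\Delta_{\mathrm{max}}$-richness condition into the gradient. Since $g_t = \mu^{\pi^L_t , s_{t,0}} - \mu^{\xi_t}$, the richness identity gives $g_t = \beta_t \brr{\lambda_t - \lambda^*} - \delta_t$ with $\norm{\delta_t} \le \Delta_{\mathrm{max}}$. Writing the pre-projection iterate $\tilde\lambda_{t+1} := \lambda_t - \eta_t g_t$, this yields the clean decomposition
\[
\tilde\lambda_{t+1} - \lambda^* = \brr{1 - \eta_t \beta_t}\brr{\lambda_t - \lambda^*} + \eta_t \delta_t .
\]
To handle the projection $\lambda_{t+1} = \Pi_\Omega\bss{\tilde\lambda_{t+1}}$, I would invoke that $\lambda^* \in \Omega$ (guaranteed by choosing $z$ large enough) and that Euclidean projection onto the convex ball $\Omega$ is non-expansive, so $\norm{\lambda_{t+1} - \lambda^*} \le \norm{\tilde\lambda_{t+1} - \lambda^*}$; hence it suffices to bound the pre-projection distance.

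Next I would expand $\norm{\tilde\lambda_{t+1} - \lambda^*}^2$, bound the cross term $2 \eta_t \brr{1 - \eta_t \beta_t} \ipp{\lambda_t - \lambda^*}{\delta_t}$ via Cauchy--Schwarz (noting $1 - \eta_t\beta_t \ge 0$ since $\beta_t \le 1/\eta_t$), and apply the crude but sufficient bounds $\norm{\lambda_t - \lambda^*} \le 2z$ (both iterates lie in $\Omega$), $1 - \eta_t \beta_t \le 1 - \beta$, and $\eta_t \le \eta_{\mathrm{max}}$. Collecting the non-contracting terms gives $E \le \eta_{\mathrm{max}} \Delta_{\mathrm{max}} \brr{4(1-\beta)z + \eta_{\mathrm{max}} \Delta_{\mathrm{max}}}$; assuming the mild side condition $\eta_{\mathrm{max}} \Delta_{\mathrm{max}} \le 1$ and plugging in the prescribed $\Delta_{\mathrm{max}} = \frac{{\epsilon'}^2 \beta^2}{4 \eta_{\mathrm{max}} \bss{4(1-\beta)z + 1}}$ collapses this to exactly $E \le \frac{{\epsilon'}^2 \beta^2}{4}$. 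This is precisely why $\Delta_{\mathrm{max}}$ carries the factor $4(1-\beta)z + 1$ in its denominator: it is designed to cancel the bracketed term. The recursion then reads $r_{t+1} \le (1-\beta)^2 r_t + \frac{{\epsilon'}^2 \beta^2}{4}$.

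Finally I would unroll the recursion to obtain
\[
r_T \le (1-\beta)^{2(T-1)} r_1 + \frac{{\epsilon'}^2 \beta^2}{4} \sum_{k \ge 0} (1-\beta)^{2k} .
\]
The geometric sum equals $\frac{1}{\beta(2-\beta)} \le \frac{1}{\beta}$, so the steady-state contribution is at most $\frac{{\epsilon'}^2 \beta}{4(2-\beta)} \le \frac{{\epsilon'}^2}{4}$, while the transient term is bounded by $4 z^2 (1-\beta)^{2(T-1)}$ using $r_1 = \norm{\lambda_1 - \lambda^*}^2 \le 4z^2$. Choosing $T$ large enough that the transient is at most $\frac{3 {\epsilon'}^2}{4}$, i.e. $T - 1 \ge \frac{1}{2 \log(1/(1-\beta))} \log \frac{16 z^2}{3 {\epsilon'}^2}$, which is $T = \mathcal{O}\brr{\log \frac{1}{\epsilon'}}$ for fixed $\beta$ and $z$, yields $r_T \le {\epsilon'}^2$ and hence $\norm{\lambda_T - \lambda^*} \le \epsilon'$.

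The main obstacle I anticipate is bookkeeping rather than conceptual: matching the exact constant in $\Delta_{\mathrm{max}}$ requires the crude bounds to be applied consistently and the implicit conditions $\eta_{\mathrm{max}} \Delta_{\mathrm{max}} \le 1$ and $0 < \beta \le 1$ to be made explicit. The one genuinely delicate step is the non-expansiveness reduction, which relies on $\lambda^*$ actually lying inside the projection set $\Omega$; everything downstream is a standard contraction-plus-noise argument.
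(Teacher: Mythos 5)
Your proposal is correct and follows essentially the same route as the paper's own proof: the same substitution of the $\Delta_{\mathrm{max}}$-richness identity into the gradient, the same use of non-expansiveness of $\Pi_\Omega$ (requiring $\lambda^* \in \Omega$), Cauchy--Schwarz on the cross term with the crude bounds $\norm{\lambda_t - \lambda^*} \le 2z$ and $\eta_t \le \eta_{\mathrm{max}}$, and a geometric unrolling of the resulting contraction-plus-noise recursion, with $\Delta_{\mathrm{max}}$ chosen exactly to cancel the bracket $4(1-\beta)z+1$. The only cosmetic differences are that you unroll the squared-distance recursion and take one square root at the end, whereas the paper applies $\sqrt{a+b} \le \sqrt{a}+\sqrt{b}$ first and unrolls the norm recursion, and that you make explicit the side condition $\eta_{\mathrm{max}} \Delta_{\mathrm{max}} \le 1$ which the paper uses implicitly when it replaces $\eta_t^2 \norm{\delta_t}^2$ by $\eta_t \norm{\delta_t}$ in its step $(iv)$.
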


Note that the above lemma only guarantees the convergence to the target hyperparameter $\lambda^*$. To provide guarantees on our teaching objective, we need additional technical conditions which we discuss below.
%in the following sections.
%to exploit the smoothness of the reward function $R^L_\lambda$ as formally stated below in Eq.~\eqref{lips-reward}.

%\paragraph{Hyperparameter teaching.}
%We require that the learner's model $(R^L_\lambda, \phi^L)$ is powerful enough to capture the teacher's reward $R^E$ and policy $\pi^E$. 
%\subsection{Smoothness Condition} 
%Teacher's Policy
\subsection{Convergence to  Policy $\pi^L_{\lambda^*}$} 
%\todo{expected reward or teacher's policy}
%as per the desired objective in Section~\ref{sec.model.objective}
%In addition to the approximate convergence to the target hyperparameter $\lambda^*$, 
We require a smoothness condition on the learner's reward function to ensure that the learner's output policy $\pi^L$ is close to the policy $\pi^L_{\lambda^*}$
%%%%%%%%%%%%%%%
in terms of the total expected reward $\nu^{\pi}$ (see Eq.~\eqref{state-action-featurecount-eq}). 
%%%%%%%%%%%%%%%
%in terms of the expected reward difference $\abs{\nu^{\pi^L} - \nu^{\pi^L_{\lambda^*}}}$ (see Eq.~\eqref{state-action-featurecount-eq}). 
%%%%%%%%%%%%%%%
Formally $R^L_\lambda : \mathcal{S} \times \mathcal{A} \rightarrow \mathbb{R}$ is $m$-smooth if the following holds:
\begin{equation}
\max_{s,a} \abs{R^L_\lambda \brr{s,a} - R^L_{\lambda'} \brr{s,a}} ~\leq~ m \norm{\lambda - \lambda'} . 
\label{lips-reward}
\end{equation}
Note that in the linear reward case, the smoothness parameter is $m = \sqrt{d}$.
Given that the reward function $R^L_\lambda$ is smooth, the following lemma illustrates the inherent smoothness of the
soft Bellman policies given in Eq.~\eqref{mce-policy-lambda}. 
%policies calculated by Soft-Value-Iteration in the MCE-IRL algorithm.
%measured w.r.t. the reward function $R^E$, 

\begin{lemma}
\label{smooth-lemma}
Consider an MDP $\mathcal{M}$, and a learner model with parametric reward function $R^L_\lambda$ which is $m$-smooth. Then for any $\lambda,\lambda' \in \mathbb{R}^d$, the associated soft Bellman policies $\pi^L_{\lambda}$ and $\pi^L_{\lambda'}$ satisfy the following: 
\begin{align*}
%\brr{\rho^{\pi^L_\lambda},\rho^{\pi^L_{\lambda'}}} ~\leq~& \brr{\frac{2}{1 - \gamma}}^{3/2} \sqrt{L \norm{\lambda - \lambda'}} \quad \text{and} \\
\abs{\nu^{\pi^L_{\lambda}} - \nu^{\pi^L_{\lambda'}}} ~\leq~& R^E_{\mathrm{max}} \cdot \sqrt{\frac{8 m}{\brr{1 - \gamma}^5} \cdot \norm{\lambda - \lambda'}} \, ,
\end{align*}
where $R^E_{\mathrm{max}} := \max_{s,a}\abs{R^E(s,a)}$.
\end{lemma}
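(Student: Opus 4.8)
The plan is to bound the return gap $\abs{\nu^{\pi^L_\lambda} - \nu^{\pi^L_{\lambda'}}}$ by a chain of four reductions --- return gap $\to$ occupancy-measure gap $\to$ per-state policy gap $\to$ KL gap $\to$ parameter gap --- with the square root entering at the Pinsker step. First I would use the definition in Eq.~\eqref{state-action-featurecount-eq}: since $\sum_{s,a}\rho^\pi\brr{s,a}=1$, Hölder's inequality gives $\abs{\nu^{\pi^L_\lambda}-\nu^{\pi^L_{\lambda'}}} \le \frac{R^E_{\mathrm{max}}}{1-\gamma}\norm{\rho^{\pi^L_\lambda}-\rho^{\pi^L_{\lambda'}}}_1$, reducing everything to controlling the $\ell_1$ distance between the two occupancy measures.

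Next I would relate the occupancy-measure gap to the state-wise discrepancy between the policies. Writing $\rho^\pi\brr{s,a}=d^\pi(s)\,\pi(a\mid s)$, where $d^\pi(s):=\brr{1-\gamma}\sum_\tau\gamma^\tau\,\mathbb{P}\bcc{S_\tau=s\mid\pi,\mathcal{M}}$ is the discounted state-visitation distribution, the triangle inequality splits the gap into $\norm{d^{\pi^L_\lambda}-d^{\pi^L_{\lambda'}}}_1$ plus $\mathbb{E}_{s\sim d^{\pi^L_{\lambda'}}}\bss{\norm{\pi^L_\lambda(\cdot\mid s)-\pi^L_{\lambda'}(\cdot\mid s)}_1}$. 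For the visitation term I would use the resolvent identity $(I-\gamma P^{\pi})^{-1}-(I-\gamma P^{\pi'})^{-1}=\gamma(I-\gamma P^{\pi})^{-1}(P^{\pi}-P^{\pi'})(I-\gamma P^{\pi'})^{-1}$, with $P^\pi$ the state-transition kernel induced by $\pi$, together with the facts that $\brr{1-\gamma}(I-\gamma P^\pi)^{-1}$ is $\ell_1$-nonexpansive on signed measures and that $\norm{P^\pi(\cdot\mid s)-P^{\pi'}(\cdot\mid s)}_1\le\norm{\pi(\cdot\mid s)-\pi'(\cdot\mid s)}_1$. This yields $\norm{d^{\pi^L_\lambda}-d^{\pi^L_{\lambda'}}}_1\le\frac{\gamma}{1-\gamma}\max_s\norm{\pi^L_\lambda(\cdot\mid s)-\pi^L_{\lambda'}(\cdot\mid s)}_1$, and combining the two terms gives the clean constant $\norm{\rho^{\pi^L_\lambda}-\rho^{\pi^L_{\lambda'}}}_1\le\frac{1}{1-\gamma}\max_s\norm{\pi^L_\lambda(\cdot\mid s)-\pi^L_{\lambda'}(\cdot\mid s)}_1$.

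The third and fourth reductions convert the per-state policy gap into a parameter gap. By Pinsker's inequality, $\norm{\pi^L_\lambda(\cdot\mid s)-\pi^L_{\lambda'}(\cdot\mid s)}_1\le\sqrt{2\,D_{\mathrm{KL}}\!\brr{\pi^L_\lambda(\cdot\mid s)\,\|\,\pi^L_{\lambda'}(\cdot\mid s)}}$, and since $\log\pi^L_\lambda(a\mid s)=Q_\lambda\brr{s,a}-V_\lambda\brr{s}$ by Eq.~\eqref{mce-policy-lambda}, the KL term is at most $\max_a\abs{\log\pi^L_\lambda(a\mid s)-\log\pi^L_{\lambda'}(a\mid s)}\le\norm{Q_\lambda-Q_{\lambda'}}_\infty+\norm{V_\lambda-V_{\lambda'}}_\infty$. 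To bound these I would use the soft Bellman backup $\mathcal{B}_\lambda V(s)=\log\sum_a\exp\!\brr{R^L_\lambda\brr{s,a}+\gamma\sum_{s'}T(s'\mid s,a)V\brr{s'}}$: because log-sum-exp is $1$-Lipschitz in $\ell_\infty$, $\mathcal{B}_\lambda$ is a $\gamma$-contraction and the two operators differ pointwise by at most $\max_a\abs{R^L_\lambda\brr{s,a}-R^L_{\lambda'}\brr{s,a}}\le m\norm{\lambda-\lambda'}$ via the $m$-smoothness assumption of Eq.~\eqref{lips-reward}. The standard fixed-point perturbation argument (with $V_\lambda,V_{\lambda'}$ the respective unique fixed points) then gives $\norm{V_\lambda-V_{\lambda'}}_\infty\le\frac{m}{1-\gamma}\norm{\lambda-\lambda'}$, and feeding this back into the $Q$-recursion gives the same bound for $\norm{Q_\lambda-Q_{\lambda'}}_\infty$, hence $D_{\mathrm{KL}}\le\frac{2m}{1-\gamma}\norm{\lambda-\lambda'}$.

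Chaining the four steps yields $\abs{\nu^{\pi^L_\lambda}-\nu^{\pi^L_{\lambda'}}}\le\frac{R^E_{\mathrm{max}}}{\brr{1-\gamma}^2}\sqrt{\frac{4m}{1-\gamma}\norm{\lambda-\lambda'}}$, which matches the claimed bound (the stated constant $8$ being a loose version of the sharper $4$ obtained here and thus comfortably implied). I expect the occupancy-measure perturbation bound of the second step to be the main obstacle, as it is the only place requiring a genuinely global rather than pointwise argument: one must correctly manipulate the resolvent identity and verify the $\ell_1$-nonexpansiveness of the discounted visitation operator on signed measures. By contrast, the smoothness chain in the last step is a routine contraction-mapping computation once the $1$-Lipschitzness of log-sum-exp is in place.
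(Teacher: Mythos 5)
Your proof is correct and, at the level of the decomposition, follows the same route as the paper's --- return gap $\to$ occupancy gap $\to$ per-state policy gap $\to$ KL via Pinsker $\to$ parameter gap via $m$-smoothness --- but you implement the two nontrivial links differently, and self-containedly. Where the paper imports the occupancy-perturbation bound $D_{\mathrm{TV}}(\rho^{\pi},\rho^{\pi'}) \le \frac{2}{1-\gamma}\max_s D_{\mathrm{TV}}(\pi(\cdot\mid s),\pi'(\cdot\mid s))$ as a black box (Lemma~A.1 of Sun et al.), you prove it from scratch via the split $\rho^\pi(s,a)=d^\pi(s)\,\pi(a\mid s)$, the resolvent identity, and $\ell_1$-nonexpansiveness of $(1-\gamma)(I-\gamma P^\pi)^{-1}$; all of these steps check out, and you even obtain the sharper constant $\frac{1}{1-\gamma}$. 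For the KL-to-parameter link, the paper works directly with the partition functions $Z_{a\mid s,\lambda}$: the log-sum inequality cancels the $V$-difference and an argmax-over-$(s,a)$ unrolling yields $\max_s D_{\mathrm{KL}} \le \frac{1}{1-\gamma}\max_{s,a}\abs{R^L_\lambda(s,a)-R^L_{\lambda'}(s,a)}$, whereas you bound the KL by $\norm{Q_\lambda-Q_{\lambda'}}_\infty+\norm{V_\lambda-V_{\lambda'}}_\infty$ and invoke the standard fixed-point perturbation bound for the soft Bellman operator (a $\gamma$-contraction by $1$-Lipschitzness of log-sum-exp), giving $\frac{2m}{1-\gamma}$ --- a factor of $2$ looser, but the paper's argmax manipulation is essentially your contraction estimate written out by hand, and your version is more transparent and reusable. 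The two factor-of-$2$ discrepancies net out in your favor, since your occupancy saving sits outside the square root while your KL loss sits inside it: you end with $R^E_{\mathrm{max}}\sqrt{\frac{4m}{(1-\gamma)^5}\norm{\lambda-\lambda'}}$, a $\sqrt{2}$-strengthening of the stated bound, so the lemma follows. An incidental bonus of your route: the paper's proof asserts $D_{\mathrm{TV}}(\pi^L_{\lambda}(\cdot\mid s),\pi^L_{\lambda'}(\cdot\mid s))\le 1$ after defining $D_{\mathrm{TV}}$ as the $\ell_1$ distance (which can reach $2$); your derivation sidesteps this convention ambiguity entirely and still confirms the stated constant $8$ with room to spare.
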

%via {Soft-Value-Iteration} procedure (Eq.~\eqref{mce-policy-lambda}) 
%\looseness-1
The above lemma suggests that convergence to the target hyperparameter $\lambda^*$ also guarantees  convergence to the policy $\pi^L_{\lambda^*}$ associated with the target hyperparameter in terms of total  expected reward. 
%$\nu^{\pi^L_{\lambda^*}}$ 
%(with rewards $R^L_{\lambda}$  and $R^L_{\lambda'}$ respectively)

%%%%%%%%%%%%%%%%%%%%%%%%%%%%%
%\subsection{Powerful Learner}
%\subsection{Convergence w.r.t. Expected Reward} 
\subsection{Convergence to Policy $\pi^{E}$} \label{sec.omni.complexity.theorem1}
% w.r.t. Reward
%\todo{expected reward or teacher's policy}
%%%%%%%%%%%%%%%%%%%%%%%%%%%%%%%%%
%that the learner's representation is powerful enough that 

 %For this, we require that the learner's model $(R^L_\lambda, \phi^L)$ is powerful enough to capture the teacher's reward $R^E$ and policy $\pi^E$.

%We require that the learner's model $(R^L_\lambda, \phi^L)$ is powerful enough to capture the teacher's reward $R^E$ and policy $\pi^E$.
%%%%%%%%%%%%%%%%%%%%%%%%%%%%%%%%%
Finally, we need to ensure that the learner's model $(R^L_\lambda, \phi^L)$ is powerful enough to capture the teacher's reward $R^E$ and policy $\pi^E$. Intuitively, this would imply that there exists a target hyperparameter $\lambda^*$ for which the soft Bellmann policy $\pi^L_{\lambda^*}$ has a total expected reward close to that of the teacher's policy $\pi^E$ w.r.t. the reward function $R^E$.
%%%%%%%%%%%%%%%%%%%%%%
Formally, we say that a learner is $\ell$-learnable (where $\ell > 0$) if there exists a $\lambda \in \Omega$, such that the following holds:
\[
\abs{\nu^{\pi^L_{\lambda}} - \nu^{\pi^E}} ~\leq~ \ell .
\]
In particular, for achieving the desired teaching objective (see Theorem~\ref{greedy-mce-theorem}), we require that the learner is $\frac{\epsilon}{2}$-learnable. This in turn implies that there exists a $\lambda^* \in \Omega$ such that $\abs{\nu^{\pi^L_{\lambda^*}} - \nu^{\pi^E}} ~\leq~ \frac{\epsilon}{2}$.\footnote{\label{lambda-exist-note}For the case of linear rewards, such a $\lambda^*$ is guaranteed to exist and it can be computed efficiently; further details are provided in Appendix~\ref{sec.appendix.policy-hyper-teaching}.}
%the longer version of the paper \cite{kamalaruban2019interactive-arxiv}.} 
%We note that in certain settings, such a $\lambda^*$ exists and it can be computed efficiently (cf. Appendix~\ref{sec.appendix.policy-hyper-teaching}).
%%%%%%%%%%%%%%%%%%%%%%%%%%%%%
%Then using the above lemma and the smoothness condition on the learner's reward function given in Eq.~\eqref{lips-reward}, we obtain the following result:
%The above lemma suggests that convergence to the target hyperparameter guarantees the convergence to the target expected reward. 
Then, combining Lemma~\ref{greedy-mce-lemma} and Lemma~\ref{smooth-lemma}, we obtain our main result:
\begin{theorem}
\label{greedy-mce-theorem}
%, \delta
Given $\epsilon > 0$.
Define $R^E_{\mathrm{max}} := \max_{s,a}\abs{R^E(s,a)}$. 
Let $R^L_\lambda$ be $m$-smooth.
Set $\epsilon' = \frac{(1 - \gamma)^5 \epsilon^2}{32 m \brr{R^E_{\mathrm{max}}}^2}$. In addition, we have the following:
\begin{itemize}
    \item Teacher is $\Delta_{\mathrm{max}}$-rich with $\Delta_{\mathrm{max}} = \frac{{\epsilon'}^2 \beta^2}{4 \eta_{\mathrm{max}} \bss{4 (1-\beta) z + 1}}$, where $\beta = \min_t \eta_t \beta_t$, and $\eta_{\mathrm{max}} = \max_t \eta_t$.
    \item Learner is $\frac{\epsilon}{2}$-learnerable.
    \item Teacher has access to $\lambda^*$ such that $\abs{\nu^{\pi^L_{\lambda^*}} - \nu^{\pi^E}} \leq \frac{\epsilon}{2}$. 
    %with probability at least $1 - \delta$.
\end{itemize}
Then, for the \algOmn algorithm with $T = \mathcal{O}\brr{\log \frac{1}{\epsilon}}$, we have $\abs{\nu^{\pi^{E}} - \nu^{\pi^{L}}} \leq \epsilon$. 
%with probability at least $1 - \delta$.
\end{theorem}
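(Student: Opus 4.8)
The plan is to combine the two lemmas through a triangle inequality on the total expected reward. First I would decompose the target gap by inserting the best in-class soft Bellman policy $\pi^L_{\lambda^*}$:
\[
\abs{\nu^{\pi^E} - \nu^{\pi^L}} ~\leq~ \abs{\nu^{\pi^E} - \nu^{\pi^L_{\lambda^*}}} + \abs{\nu^{\pi^L_{\lambda^*}} - \nu^{\pi^L}} ,
\]
so that the performance gap splits into (i) an \emph{approximation} error from representing $\pi^E$ via the best reachable hyperparameter $\lambda^*$, and (ii) an \emph{optimization} error because the teacher only steers the learner's parameter close to $\lambda^*$ rather than exactly to it.

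For the first term, I would invoke the $\frac{\epsilon}{2}$-learnability assumption (third bullet), which directly gives $\abs{\nu^{\pi^E} - \nu^{\pi^L_{\lambda^*}}} \leq \frac{\epsilon}{2}$; for linear rewards the existence of such a $\lambda^*$ is guaranteed (Footnote~\ref{lambda-exist-note}). For the second term, I would first apply Lemma~\ref{greedy-mce-lemma} with the stated choice of $\epsilon'$ and $\Delta_{\mathrm{max}}$: since the teacher is $\Delta_{\mathrm{max}}$-rich, running \algOmn for $T = \mathcal{O}(\log \frac{1}{\epsilon'})$ steps yields $\norm{\lambda_T - \lambda^*} \leq \epsilon'$, i.e.\ the output parameter lands in an $\epsilon'$-ball around $\lambda^*$. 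I would then invoke Lemma~\ref{smooth-lemma} with the output parameter and $\lambda^*$, using $m$-smoothness of $R^L_\lambda$, to obtain $\abs{\nu^{\pi^L_{\lambda^*}} - \nu^{\pi^L}} \leq R^E_{\mathrm{max}} \sqrt{\frac{8m}{(1-\gamma)^5}\,\norm{\lambda_T - \lambda^*}} \leq R^E_{\mathrm{max}} \sqrt{\frac{8m}{(1-\gamma)^5}\,\epsilon'}$.

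The final step is to verify that the prescribed calibration $\epsilon' = \frac{(1-\gamma)^5 \epsilon^2}{32 m (R^E_{\mathrm{max}})^2}$ makes the second term exactly $\frac{\epsilon}{2}$: substituting this $\epsilon'$ into the radical cancels the factors of $m$, $(1-\gamma)^5$, and $R^E_{\mathrm{max}}$, leaving $R^E_{\mathrm{max}} \cdot \frac{\epsilon}{2 R^E_{\mathrm{max}}} = \frac{\epsilon}{2}$. Adding the two bounds gives $\abs{\nu^{\pi^E} - \nu^{\pi^L}} \leq \epsilon$. For the iteration count, since $\epsilon'$ depends on $\epsilon$ only through the factor $\epsilon^2$ (the remaining quantities being treated as constants), we have $\log \frac{1}{\epsilon'} = \mathcal{O}(\log \frac{1}{\epsilon})$, hence $T = \mathcal{O}(\log \frac{1}{\epsilon})$ as claimed.

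The proof is largely a bookkeeping assembly of the two lemmas, so I do not expect a serious obstacle; the only genuinely delicate point is the square-root (rather than linear) dependence on $\norm{\lambda - \lambda^*}$ in Lemma~\ref{smooth-lemma}, which is precisely why $\epsilon'$ must scale like $\epsilon^2$ and not $\epsilon$ — mis-setting this exponent would destroy the clean cancellation, so I would double-check it before concluding. A minor caveat to reconcile is the index mismatch between the output policy $\pi^L = \pi^L_{T+1}$ in Algorithm~\ref{algo:sequential-mce-irl} and the guarantee $\norm{\lambda_T - \lambda^*} \leq \epsilon'$ stated in Lemma~\ref{greedy-mce-lemma}; I would resolve it by applying the lemma's bound at the final produced parameter, absorbing any extra step into the $\mathcal{O}(\cdot)$.
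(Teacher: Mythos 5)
Your proposal is correct and follows essentially the same route as the paper, which proves the theorem exactly by combining Lemma~\ref{greedy-mce-lemma} (convergence $\norm{\lambda_T - \lambda^*} \leq \epsilon'$ under $\Delta_{\mathrm{max}}$-richness) with Lemma~\ref{smooth-lemma} (smoothness bound on $\abs{\nu^{\pi^L_{\lambda_T}} - \nu^{\pi^L_{\lambda^*}}}$) and the $\frac{\epsilon}{2}$-approximation of $\pi^E$ by $\pi^L_{\lambda^*}$ via the triangle inequality. Your calibration check that $\epsilon' = \frac{(1-\gamma)^5 \epsilon^2}{32 m (R^E_{\mathrm{max}})^2}$ makes the optimization term exactly $\frac{\epsilon}{2}$, and your observation that the $\epsilon^2$ scaling is forced by the square-root dependence in Lemma~\ref{smooth-lemma}, are both accurate.
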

%\todo{Reviewer 3 --- Why is epsilon' defined in theorem 1 but only epsilon appears in the abound? Rebuttal --- Theorem 1 specifies the value of $\epsilon’$ used in Lemma 1 in terms of the desired objective $\epsilon$ (introduced in Section 2.4).}
% \begin{proof}
% We have a target hyperparameter $\lambda^*$ s.t. $\abs{\nu^{\pi^L_{\lambda^*}} - \nu^{\pi^E}}\leq \frac{\epsilon}{2}$. For $\epsilon' = \frac{(1 - \gamma)^5 \epsilon^2}{32 L \brr{R^E_{\mathrm{max}}}^2}$, by Lemma~\ref{greedy-mce-lemma} and Lemma~\ref{smooth-lemma}, we get
% \begin{align*}
% \abs{\nu^{\pi^L_{\lambda_T}} - \nu^{\pi^L_{\lambda^*}}} ~\leq~& R^E_{\mathrm{max}} \cdot \sqrt{\frac{8 L}{\brr{1 - \gamma}^5} \cdot \norm{\lambda_T - \lambda^*}} \\
% ~\leq~& R^E_{\mathrm{max}} \cdot \sqrt{\frac{8 L}{\brr{1 - \gamma}^5} \cdot \epsilon'} ~=~ \frac{\epsilon}{2} .
% \end{align*}
% The proof completes by the following:
% \[
% \abs{\nu^{\pi^L_{\lambda_T}} - \nu^{\pi^E}} ~\leq~ \abs{\nu^{\pi^L_{\lambda_T}} - \nu^{\pi^L_{\lambda^*}}} + \abs{\nu^{\pi^L_{\lambda^*}} - \nu^{\pi^E}} ~=~ \epsilon .
% \]
% %Note that
% %\begin{itemize}
%     %\item In the generic reward case (cf. Appendix~\ref{sec:learner.model.generic}), the smoothness parameter $L = 1$.
%     %\item In the linear reward case (cf. Appendix~\ref{sec:learner.model.linear}), the smoothness parameter $L = \sqrt{d}$.
% %\end{itemize}
% \end{proof}

The above result states that the number of demonstrations required to achieve the desired objective is only $\mathcal{O}\brr{\log \frac{1}{\epsilon}}$. In practice, this can lead to a drastic speed up in teaching compared to an ``agnostic" teacher (referred to as \algAgn) that provides demonstrations at random, i.e., by randomly choosing the initial state $s_0 \sim P_0$ and then picking a random rollout of the policy $\pi^E$ in the MDP $\mathcal{M}$. In fact, for teaching regression and classification tasks, \citet{liu2017iterative} showed that an omniscient teacher achieves an exponential improvement in teaching complexity as compared to \algAgn teacher who picks the examples at random.

\begin{algorithm}[t!]
	\caption{\algBbox for a sequential IRL learner}\label{algo:black-teacher}
	\begin{algorithmic}[1]
		\State  \textbf{Initialization:} probing parameters ($B$ frequency, $k$ tests) %\\% \;
		\For{$t=1,2,\dots,T$}
            	\State \textbf{if $t ~\%~ B = 1$:} teacher estimates $\pi^L_t$ using $k$ tests
  	  		\State teacher picks $\xi_t$ by solving Eq.~\eqref{mce-black-teach-alt-true-eq} % and Eq.~\eqref{assitive-example}%\\% \;
   		 	\State learner receives $\xi_t$ and updates using it's algorithm%\\% \;
  		\EndFor
  	\end{algorithmic}
\end{algorithm}
%%%%%%%%%%%%%%%%%%%%%%%%%%%%%%%%%%%%%%%%%%%%%
%\vspace{2mm}
\section{Blackbox Teaching Setting}\label{sec.black.box.teaching}

In this section, we study a more practical setting where the teacher (i) cannot directly observe the learner's policy $\pi^L_t$ at any time $t$ and
(ii) does not know the learner's dynamics.

%%%%%%%%%%%%%%%%%%%%%%%%%%%%%%%%%%%%%%%%
%, assuming for now that the teacher knows the learner's dynamics
%\looseness-1
\paragraph{Limited observability.}
We first address the challenge of limited observability. The main idea is that in real-world applications, the teacher could approximately infer the learner’s policy $\pi^L_t$ by probing the learner, for instance, by asking the learner to perform certain tasks or ``tests". Here, the notion of a test is to pick an initial state $s \in \mathcal{S}$ (where $P_0(s) > 0$) and then asking the learner to execute the current policy $\pi^L_t$ from $s$. Formally, we characterize this probing via two parameters $(B, k)$: After an interval of $B$ time steps of teaching, the teacher asks learner to perform $k$ ``tests" for every initial state. 

% as follows
%Then, based on these $k$ demonstrations of the learner's policy, the teacher can either approximately infer the policy $\hat{\pi_t} \approx \pi_t$ (for instance, via using an imitation learning approach). 

%these $k$ 
Then, based on observed demonstrations of the learner's policy, the teacher can approximately estimate the occupancy measure $\widehat{\rho^{\pi^L_t , s}} \approx \rho^{\pi^L_t , s}, \forall{s \text{ s.t. } P_0(s) > 0}$. 
%By plugging this estimate in Eq.~\eqref{mce-omni-teach-eq} results in an approximate variant of the omniscient teacher.
%$\widehat{\rho^{\pi^L_t , s}}$
%, a step towards developing our 
%, lifting the assumption of knowing $\pi_t $
%first blackbox teaching strategy when the learner's dynamics is known.

%%%%%%%%%%%%%%%%%%%%%%%%%%%%%%%%%%%%%%%%%%%%%
\paragraph{Unknown learner's dynamics.}
To additionally deal with the second challenge of unknown learner's dynamics, we propose a simple greedy strategy of picking an informative demonstration. In particular, we pick the demonstration at time $t$ by solving the following optimization problem (see the corresponding equation Eq.~\eqref{mce-omni-teach-eq} for the omniscient teacher): 
%\vspace{-1mm}
\begin{equation}
\label{mce-black-teach-alt-true-eq}
%\nonumber
\underset{s, \xi}{\mathrm{max}}  \bigg| \sum_{s',a'} \bcc{\widehat{\rho^{\pi^L_t , s}} \brr{s',a'} - \rho^{\xi} \brr{s',a'}} R^E \brr{s',a'} \bigg| , 
%\vspace{-1mm}
\end{equation}
%(also, see Eq.~\ref{mce-omni-teach-eq} and Eq.~\ref{assitive-example})
where $s \in \mathcal{S}$ is an initial state (where $P_0(s_0) > 0$) and $\xi$ is a trajectory obtained by executing the policy $\pi^E$ starting from $s$.
% 
%where $s$ is the starting state of $\xi$. 
%%
Note that we have used the estimate $\widehat{\rho^{\pi^L_t , s}}$ instead of $\rho^{\pi^L_t , s}$. This strategy is inspired from insights of the omniscient teaching setting and can be seen as picking a demonstration $(s, \xi)$ with maximal discrepancy between the learner's current policy and the teacher's policy in terms of  expected reward.
%$\nu$. 

The resulting teaching strategy for the blackbox setting, namely \algBbox, is given in Algorithm~\ref{algo:black-teacher}. 

\section{Experimental Evaluation}\label{sec:experiments}

In this section, we demonstrate the performance of our algorithms in a synthetic learning environment (with both linear and non-linear reward settings) inspired by a car driving simulator~\cite{ng2000algorithms,levine2010feature}.

%in a learning environment inspired by a car driving simulator application \cite{ng2000algorithms}.
%We perform extensive experiments in a synthetic learning environment (with both linear and non-linear reward settings) inspired by a car driving simulator~\cite{ng2000algorithms,levine2010feature}.

%(denoWe wish to equip the learner with multiple skills, and hence subject him to different tasks. 
%Each of the Task introduces a learner to learn one or more new skills.

%%%%%%%%%%%%%%%%%%%%%%%%%%%%%%%%%%%%%%%%%%%%%%%
%vspace{-2mm}
\paragraph{Environment setup.}
%\label{sec.env.exp.setup}
\figref{fig:templates} illustrates a car driving simulator environment consisting of $9$ different lane types (henceforth referred to as tasks), denoted as \texttt{T0}, \texttt{T1}, $\ldots$, \texttt{T8}. Each of these tasks is associated with different driving skills. For instance, task \texttt{T0} corresponds to a basic setup representing a traffic-free highway---it has a very small probability of the presence of another car. However, task  \texttt{T1} represents a crowded highway with $0.25$ probability of encountering a car. Task \texttt{T2} has stones on the right lane, whereas task \texttt{T3} has a mix of both cars and stones.  Similarly, tasks \texttt{T4} has grass on the right lane, and \texttt{T5} has a mix of both grass and cars. Tasks  \texttt{T6}, \texttt{T7}, and \texttt{T8} introduce more complex features such as pedestrians, HOV, and police.
%hat the agent should avoid
%\figref{fig:templates} illustrates a car driving simulator environment that we use in our experiments. It consists of $8$ different types of lanes (henceforth referred as tasks), denoted as \texttt{T0}, \texttt{T1}, \texttt{T2}, \texttt{T3}, \texttt{T4}, \texttt{T5}, \texttt{T6}, \texttt{T7}, and \texttt{T8}. Each of these tasks is associated with different driving skills. For instance, task \texttt{T0} corresponds to the most basic setup representing a traffic-free highway---it has a very small probability of the presence of another car. However, the task  \texttt{T1} represents a crowded highway with $0.25$ probability of grid cell being a car. Task \texttt{T2} has stones on the right lane, whereas task \texttt{T3} has a mix of both cars and stones.  Similarly, tasks \texttt{T4} has grass on the right lane, and \texttt{T5} has a mix of both grass and cars. Task \texttt{T6} also has pedestrians which the agent should maintain distance to and avoid hitting. Tasks  \texttt{T7} and \texttt{T8} introduce more complex features such as an HOV lane and the presence of police.

%%%%%%%%%%%%%%%%%%%%%%%%%%%%%%%%%%%%%%%%%%%%%%%%%%%%%%%%%%%%%%%%
%%%%%%%%%%%%%%%%%%%%%%%%%%%%%%%%%%%%%%%%%%%%%%%%%%%%%%%%%%%%%%%%
\begin{figure*}[t!]
\centering
\begin{minipage}{0.60\textwidth}
\centering
	\includegraphics[width=1\linewidth]{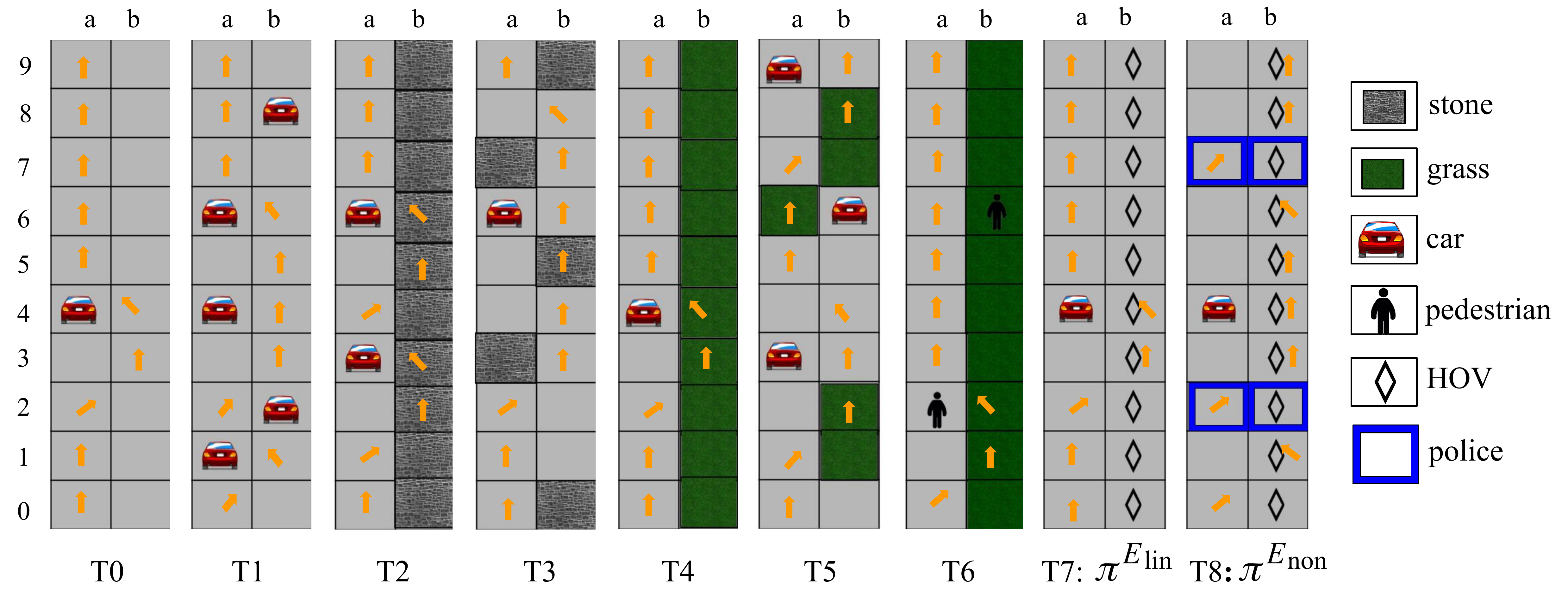}
	%\vspace{-6mm}	
	\caption{Car environment with $9$ different lane types (tasks). In any given lane, an agent starts from the bottom-left corner and the goal is to reach the top of the lane.  Arrows represent the path taken by the teacher's policy.}%optimal
	%simulator
	\label{fig:templates}
\end{minipage}
\qquad
\begin{minipage}{0.28\textwidth}
\centering
\begingroup
\renewcommand{\arraystretch}{1.12} % Default value: 1
	\centering
	\begin{tabular}{|c|c|}
		\hline
		%\\[-1.0em]
	    \textbf{$\phi(s)$} & \textbf{$w$} \\
	    %\textbf{$\phi^{E_{\mathrm{lin}}}(s)$} & \textbf{$w^{E_{\mathrm{lin}}}$} \\
		\hline
		%\texttt{path} & 0.5\\ 
		\texttt{stone} & -1\\
		\texttt{grass} & -0.5\\
		\texttt{car} & -5\\
		\texttt{ped} & -10\\
		\texttt{HOV} & -1\\
		\texttt{police} & 0\\
		\texttt{car-in-f} & -2\\
		\texttt{ped-in-f} & -5\\
		%\texttt{police} & 0\\
		%\texttt{HOV + police} & 0 & -5 \\
		\hline
	\end{tabular}
	%\vspace{-2mm}
	\caption{
	$\phi\brr{s}$ represents $8$ features for a state $s$. Weight vector $w$ is used to define the teacher's reward function $R^{E_{\mathrm{lin}}}$.
	%$\phi^{E_{\mathrm{lin}}}\brr{s}$ represents $7$ features for a state $s$ in the MDP $\mathcal{M}_{\mathrm{lin}}$. Weight vector $w^{E_{\mathrm{lin}}}$ is used to define the teacher's reward function $R^{E_{\mathrm{lin}}}$.
	%environment 
	}
	\label{tab:wstar}
	%\vspace{-2mm}	
\endgroup
\end{minipage}
%\vspace{-2.5mm}
\end{figure*}
%%%%%%%%%%%%%%%%%%%%%%%%%%%%%%%%%%%%%%%%%%%%%%%%%%%%%%%%%%%%%%%%
%%%%%%%%%%%%%%%%%%%%%%%%%%%%%%%%%%%%%%%%%%%%%%%%%%%%%%%%%%%%%%%%

%%%%%%%%%%%%%%%%%%%%%%%%%%%%%%%%%%%%%%%%%%%%%%%%%%%%%%%%%%%%%%%%
%%%%%%%%%%%%%%%%%%%%%%%%%%%%%%%%%%%%%%%%%%%%%%%%%%%%%%%%%%%%%%%%
\begin{figure*}[t!]
\centering
\begin{minipage}{0.48\textwidth}
    \centering
	\begin{subfigure}{0.47\textwidth}
		\centering
		{
			\includegraphics[height=1\textwidth]{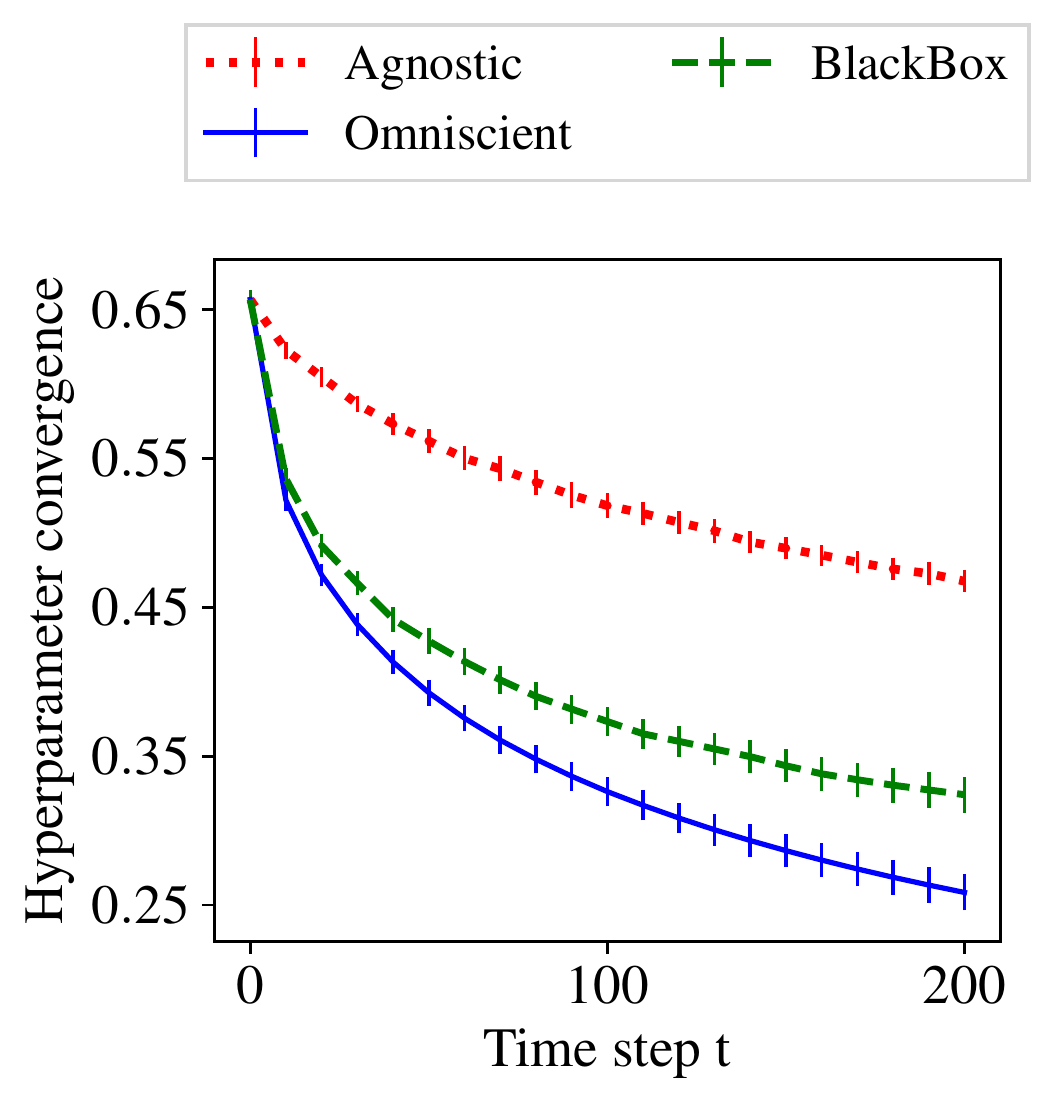}
			%\vspace{-3mm}		
			%\caption{Convergence of the learner's hyperparameter $\lambda_t$ to the target $\lambda^*$}
			\caption{}
			\label{fig:results.lin.lambda}
		}
	\end{subfigure}
	\quad
	%\hfill{1mm}
	\begin{subfigure}{0.47\textwidth}
		\centering
		{
			\includegraphics[height=1\textwidth]{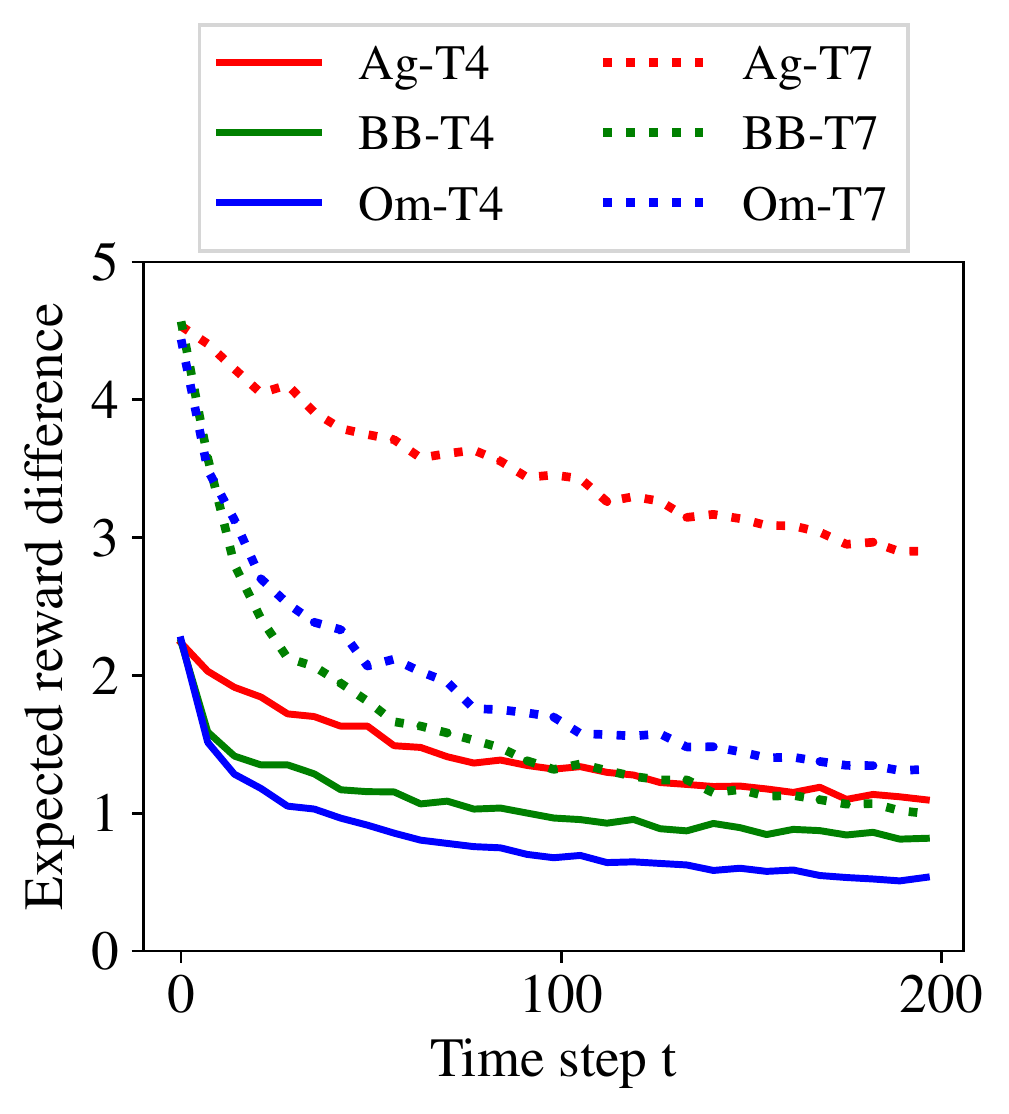}
			%\vspace{-3mm}	
			%\caption{Convergence of the learner's total expected reward $\nu$ in different tasks}
			% (see Section~\ref{sec.model.objective})
			\caption{}
			\label{fig:results.lin.reward}
		}
	\end{subfigure}
	%\vspace{-3.5mm}	
	\caption{Linear setting of Section~\ref{sec:experiments.linear}. (a) Convergence of the learner's $\lambda_t$ to the target $\lambda^*$. (b) Difference of total expected reward $\nu$ of learner's policy w.r.t. teacher's policy in different tasks.
	%(a,b) Convergence plots of our teaching algorithms (\algBbox and \algOmn) in comparison to the \algAgn teacher. (c) Teaching curriculum (i.e., the task associated with the picked state $s_{t,0}$).
	%Results for linear setting in Section~\ref{sec:experiments.linear}
	%hyperparameter
	}
	\label{fig:results.lin}
\end{minipage}
\quad
\begin{minipage}{0.48\textwidth}
    \centering
	\begin{subfigure}{.47\textwidth}
		\centering
		{
			\includegraphics[height=1\textwidth]{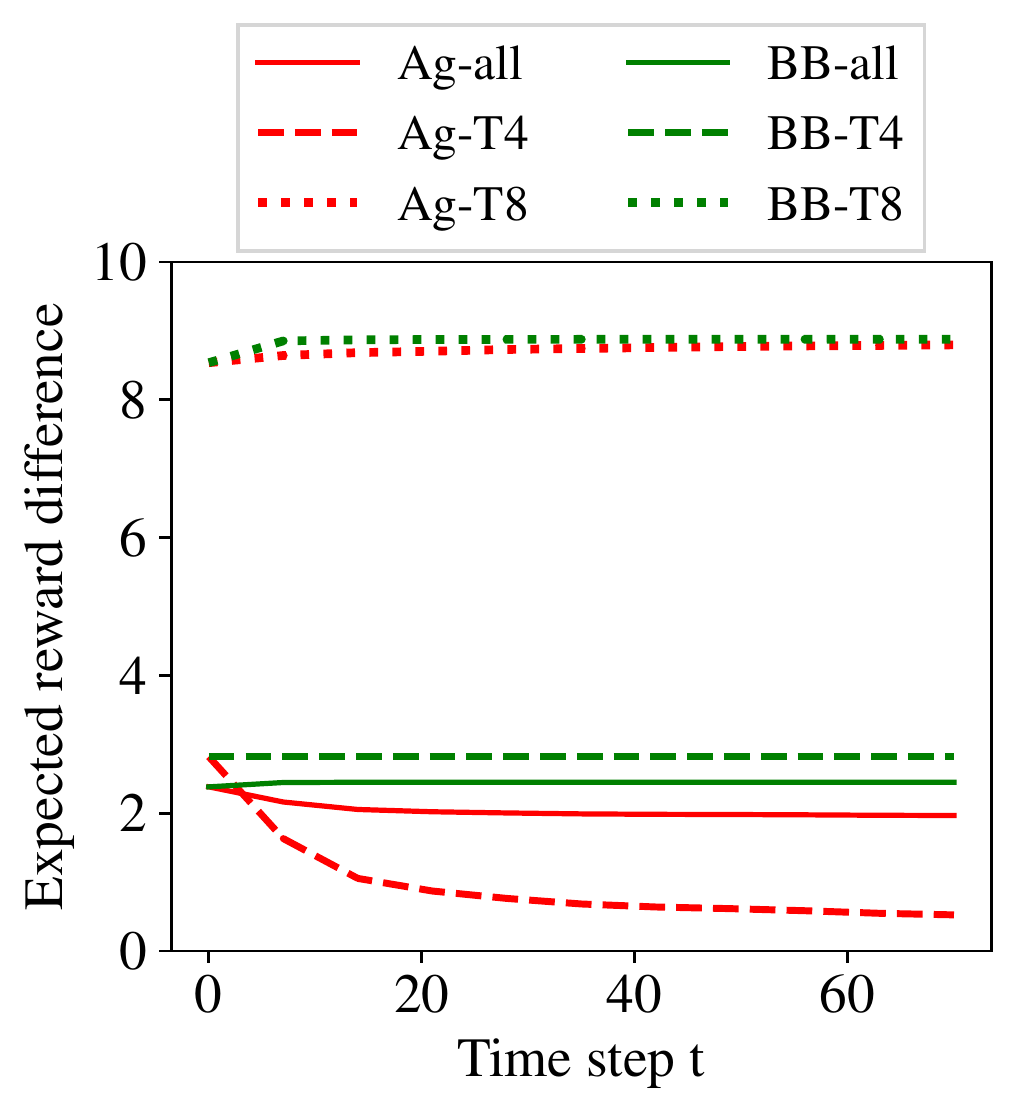}
			%\vspace{-3mm}		
			%\caption{Convergence of the learner's total expected reward $\nu$ in different tasks}
			% (see Section~\ref{sec.model.objective})
			\caption{}
			\label{fig:results.non.reward.linl}
		}
	\end{subfigure}
	\quad
	%\hfill{1mm}
	\begin{subfigure}{.47\textwidth}
		\centering
		{
			\includegraphics[height=1\textwidth]{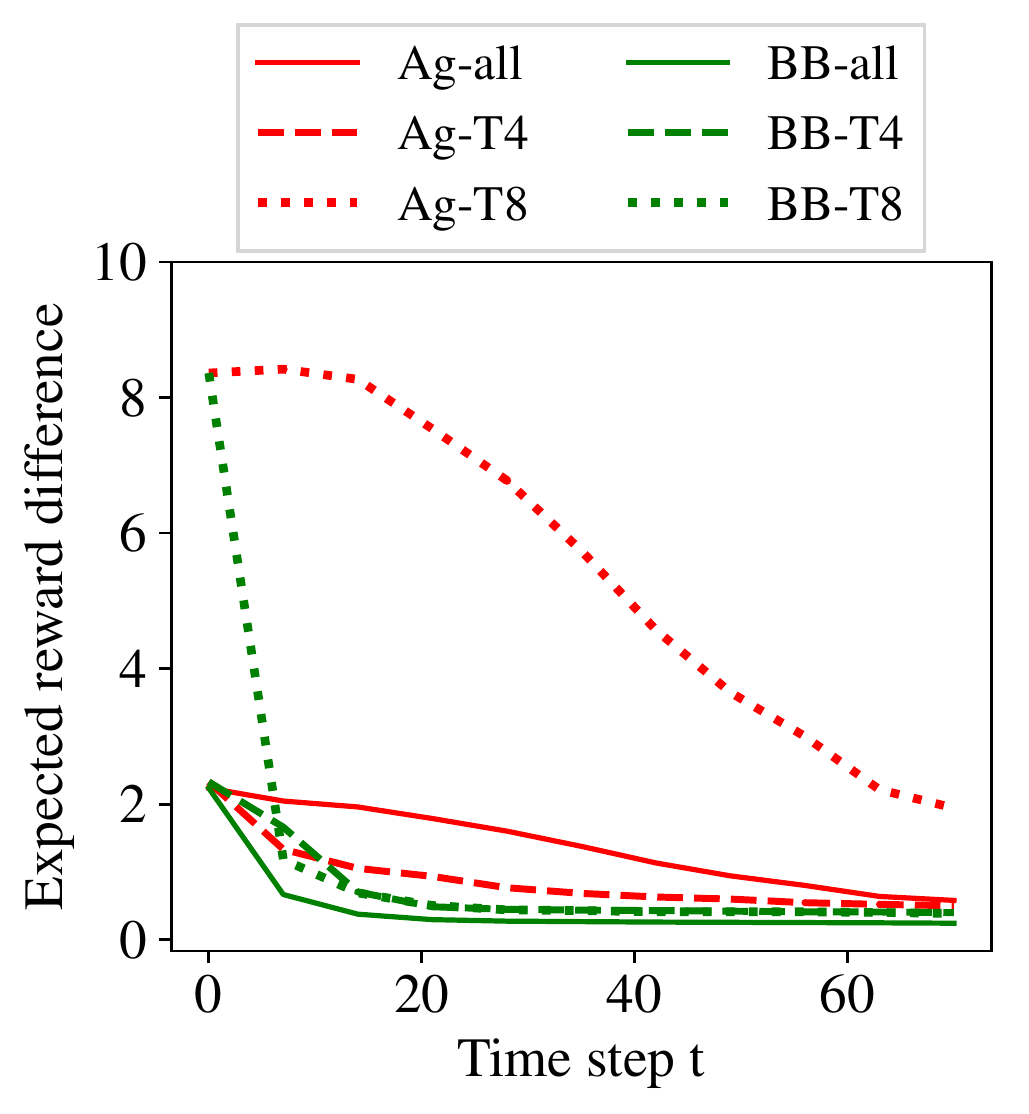}
			%\vspace{-3mm}	
			%\caption{Convergence of the learner's total expected reward $\nu$ in different tasks}
			% (see Section~\ref{sec.model.objective})
			\caption{}
			\label{fig:results.non.reward.nolinl}
		}
	\end{subfigure}	
	%\vspace{-3.5mm}	
	\caption{
	\looseness-1
	Non-linear setting of Section~\ref{sec:experiments.nonlinear}. (a) Results for a learner model with linear function $R^L_\lambda$ unable to represent the teacher's reward. (b) Results for a learner model with non-linear function $R^L_\lambda$.
	% comparing the performance of \algBbox and \algAgn teachers in terms of difference in the total expected reward $\nu$
	%Results for 
	%that is 
	}
	\label{fig:results.non}
\end{minipage}
%\vspace{-2mm}
\end{figure*}
%%%%%%%%%%%%%%%%%%%%%%%%%%%%%%%%%%%%%%%%%%%%%%%%%%%%%%%%%%%%%%%%
%%%%%%%%%%%%%%%%%%%%%%%%%%%%%%%%%%%%%%%%%%%%%%%%%%%%%%%%%%%%%%%%

%%%%%%%%%%%%%%%%%%%%%%%%%%%%%%%%%%%%%%%%%%%
%%%%%%%%%%%%%%%%%%%%%%%%%%%%%%%%%%%%%%%%%%%
%\paragraph{Environment's dynamics.}
% in this environment, 
%the set 
%agent is already in the 
%agent is already in
% by the agen
%The agent can be imagined as driving a car and 
%top
%\looseness-1
The agent's goal is to navigate starting from an initial state at the bottom left to the top of the lane. The agent can take three different actions given by $\mathcal{A} = $ \{\texttt{left}, \texttt{ straight}, \texttt{ right}\}. 
Action \texttt{left} steers the agent to the left of the current lane. If agent is already in the leftmost lane when taking action \texttt{left}, then the lane is randomly chosen with probability $0.5$. We define similar dynamics for taking action \texttt{right};  action \texttt{straight} means no change in the lane. Irrespective of the action taken, the agent always moves forward. W.l.o.g., we consider that only the agent moves in the environment.
% and rest everything is static.
%(i.e., other cars and pedestrians)
%We define similar dynamics for taking action \texttt{right}.
%Action \texttt{left} steers the agent to the left of the current lane (or stay in the same lane if already the leftmost lane) and action \texttt{right} steers to the right of the current lane (or stay in the same lane if already in the rightmost lane).  Irrespective of the action taken, the agent always move forward.
%(e.g., cell $\texttt{T0}_{0,a}$ represents an initial .
%each task given by $\{\texttt{T0}_{0,a}, \texttt{T1}_{0,a}, \texttt{T2}_{0,a}, \texttt{T3}_{0,a}, \texttt{T4}_{0,a}, \texttt{T5}_{0,a}, \texttt{T6}_{0,a}, \texttt{T7}_{0,a}\}$. 
%%%%%%%
%For instance, consider that the agent's current state is the $\texttt{T2}_{4, a}$ grid cell;  then, the agent's next states corresponding these three actions would be $\texttt{T2}_{5, a}, \texttt{T2}_{5, a}, \textnormal{and } \texttt{T2}_{5, b}$.

%%%%%%%%%%%%%%%%%%%%%%%%%%%%%%%%%%%%%%%%%%
%%%%%%%%%%%%%%%%%%%%%%%%%%%%%%%%%%%%%%%%%%
%\vspace{-1mm}
\subsection{Linear Reward Setting}\label{sec:experiments.linear}
First, we study a linear reward setting, and use the notation of $\mathrm{lin}$ in subscript to denote the MDP $\mathcal{M}_{\mathrm{lin}}$, the teacher's reward $R^{E_{\mathrm{lin}}}$, and the teacher's policy $\pi^{E_{\mathrm{lin}}}$.
%, and other interrelated things.
%reward, and teacher's
%$\mathrm{lin}$ subscript to distingiush...

%In the environment, there are
\paragraph{MDP.} 
We consider lanes corresponding to the first $8$ tasks in the environment, namely \texttt{T0}, \texttt{T1}, $\ldots$, \texttt{T7}.  We have $n$ number of lanes of a given task, each generated randomly according to the tasks' characteristics described above. 
%in Section~\ref{sec.env.exp.setup}
Hence, the total number of states $\abs{\mathcal{S}}$ in our MDP $\mathcal{M}_{\mathrm{lin}}$ is $n \times 8 \times 20$ where each cell represents a state, and each lane is associated with $20$ states (see \figref{fig:templates}). There is one initial state for each lane corresponding to the bottom left cell of that lane.

\paragraph{Teacher's reward and policy.} 
%\paragraph{Environment's true reward}
%defined below
% in the MDP $\mathcal{M}_{\mathrm{lin}}$. 
%function $R^{E_{\mathrm{lin}}}(s)$ 
%
%A state $s$ in our environment is represented by $8$ features. 
% to define the state space
Next, we define the reward function $R^{E_{\mathrm{lin}}}$ (i.e., the teacher's reward function). We consider a state-dependent reward that depends on the underlying features of a state $s$ given by the vector $\phi\brr{s}$ as follows:
%$\phi^{E_{\mathrm{lin}}}\brr{s}$
%We use $7$ features to represent a state $s$ in our MDP given by the vector $\phi^{E_{\mathrm{lin}}}\brr{s}$. 
%These features are of two types: 
\begin{itemize}
\item features indicating the type of the current grid cell as \texttt{stone}, \texttt{grass},  \texttt{car}, \texttt{ped}, \texttt{HOV}, and \texttt{police}.
\item features providing some look-ahead information such as whether there is a car or pedestrian in the immediate front cell (denoted as \texttt{car-in-f} and \texttt{ped-in-f}).
%about the upcoming grid cells such as 
%or otherwise (denoted as \texttt{nocar-f1} and \texttt{noped-f1}) 
%, \texttt{no\_car\_at\_next}, \texttt{pedestrian\_at\_next}, and \texttt{no\_pedestrian\_at\_next}).
\end{itemize}
%only reward which depends on the features... defined.
Given this, we define the teacher's reward function of linear form as $R^{E_{\mathrm{lin}}}\brr{s} = \ipp{w}{\phi\brr{s}}$,
%$R^{E_{\mathrm{lin}}}\brr{s} = \ipp{w^{E_{\mathrm{lin}}}}{\phi^{E_{\mathrm{lin}}}\brr{s}}$, 
where the $w$ values are given in \figref{tab:wstar}. Teacher's policy $\pi^{E_{\mathrm{lin}}}$ is then computed as the optimal policy w.r.t. this reward function and is illustrated via arrows  in \figref{fig:templates} (\texttt{T0} to \texttt{T7}) representing the path taken by the teacher when driving in this environment.

\paragraph{Learner model.}
%\subsection{Learner model}
%We consider the linear reward function that depends only on state, i.e., $R\brr{s} = \ipp{w}{\phi\brr{s}}$, where the $w$ values (without normalization) are given in Figure~\ref{tab:wstar}.
We consider the learner model with linear reward function that depends only on state, i.e., $R^L_\lambda\brr{s} = \ipp{\lambda}{\phi^L\brr{s}}$, where $\phi^L\brr{s} = \phi\brr{s}$ as given in \figref{tab:wstar}. The learner is implementing the sequential MCE-IRL in Algorithm~\ref{algo:sequential-mce-irl}, where the learner's prior knowledge is captured by the initial policy $\pi^L_1$ (corresponding to hyperparameter $\lambda_1$). 

In the experiments, we consider the following prior knowledge of the learner: $\pi^L_1$ is initially trained based on demonstrations of $\pi^{E_{\mathrm{lin}}}$ sampled only from the lanes associated with the tasks \texttt{T0}, \texttt{T1}, \texttt{T2}, and \texttt{T3}. Intuitively, the learner initially possesses skills to avoid collisions with cars and to avoid hitting stones while driving. We expect to teach three major skills to the learner, i.e., avoiding grass while driving (task \texttt{T4}, \texttt{T5}, and \texttt{T6}), maintaining distance to pedestrians (task \texttt{T6}), and not to drive on HOV (task \texttt{T7}).

\begin{figure*}[t!]
% 	\centering
% 	\begin{subfigure}[b]{.36\textwidth}
% 		\centering
% 		{
% 			\includegraphics[width=\textwidth]{fig/SGD_curriculum.pdf}
% 			\caption{SGD-curriculum}
% 			\label{fig:SGD}
% 		}
% 	\end{subfigure}
% 	\quad
	\begin{subfigure}[b]{.27\textwidth}
	\centering
	{
		\includegraphics[width=\textwidth]{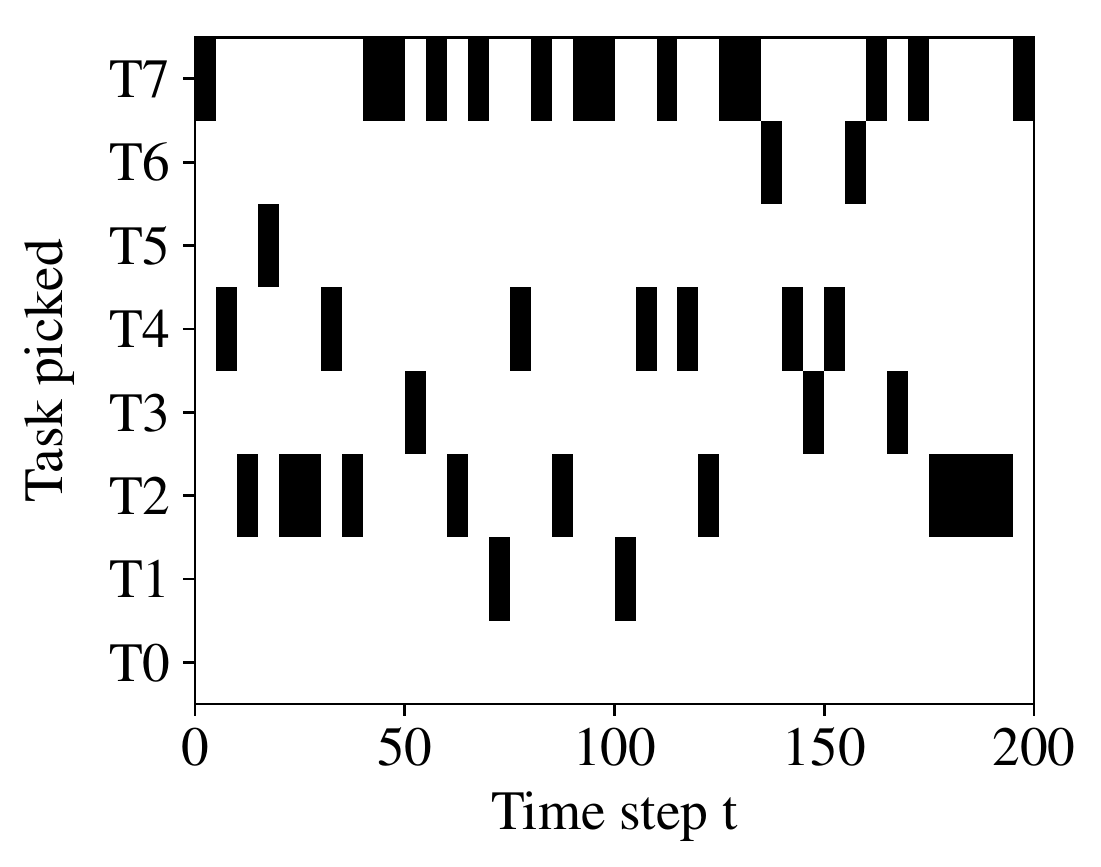}
			%\vspace{-5mm}	
		\caption{Learner's initial skills: \texttt{T0}}
		\label{fig:curriculum:A}
	}
	\end{subfigure}
	\quad \quad \quad \ \ 
	\begin{subfigure}[b]{.27\textwidth}
		\centering
		{
			\includegraphics[width=\textwidth]{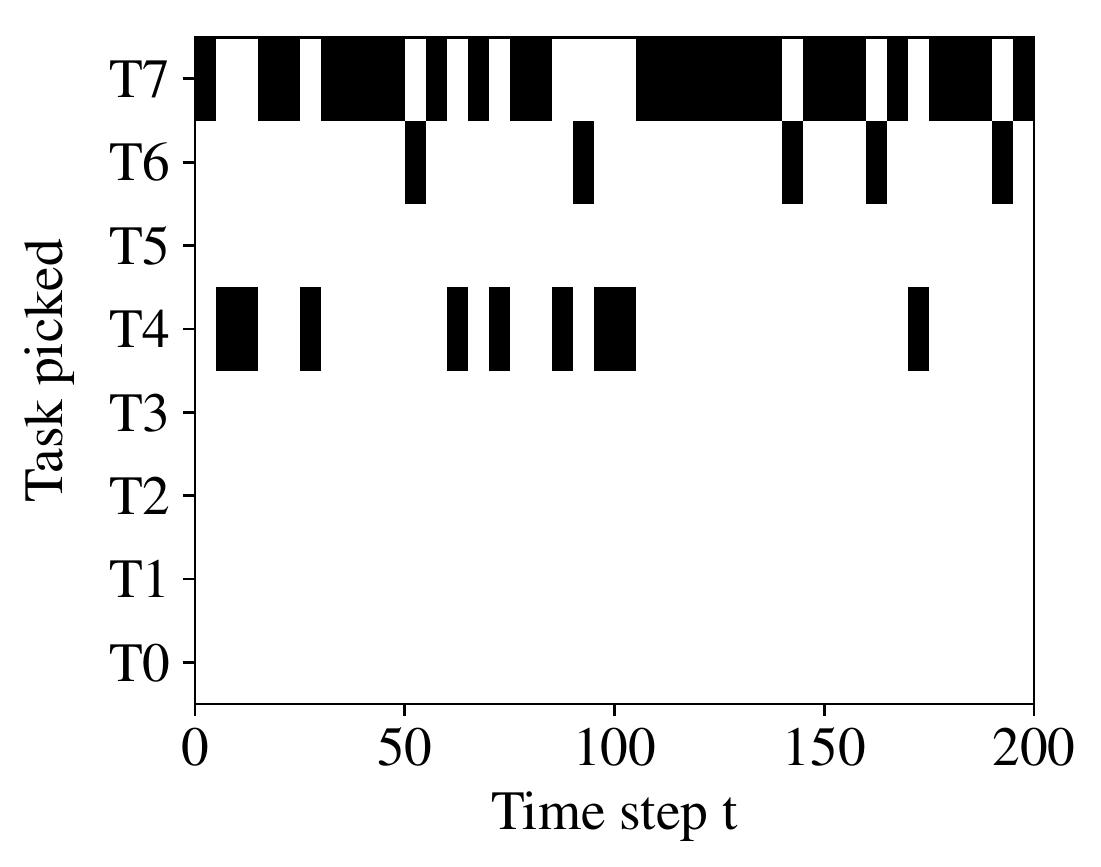}
			%\vspace{-5mm}	
			\caption{Learner's initial skills: \texttt{T0}--\texttt{T3}}
			\label{fig:curriculum:B}
		}
	\end{subfigure}
    \quad \quad \quad \ \
	\begin{subfigure}[b]{.27\textwidth}
		\centering
		{
			\includegraphics[width=\textwidth]{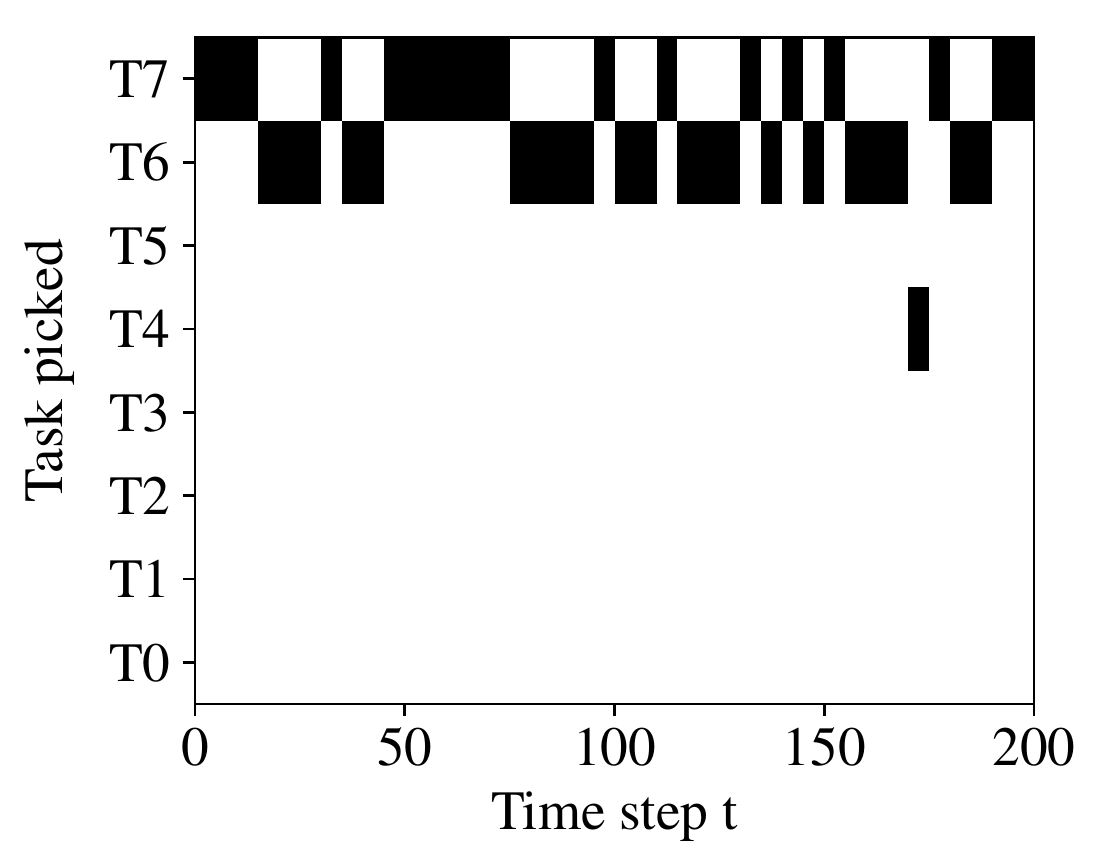}
			%\vspace{-5mm}	
			\caption{Learner's inital skills: \texttt{T0}--\texttt{T5}}
			\label{fig:curriculum:C}
		}
	\end{subfigure}
   	\quad
% 	\begin{subfigure}[b]{.36\textwidth}
% 	\centering
% 	{
% 		\includegraphics[width=\textwidth]{fig/D_curriculum.pdf}
% 		\caption{D-curriculum}
% 		\label{fig:D}
% 	}
% 	\end{subfigure}
	 %\vspace{-2.2mm}	
	\caption{Teaching curriculum (i.e., the task  associated with the picked state $s_{t,0}$ by \algBbox in Algorithm~\ref{algo:black-teacher}) for three different settings depending on the learner's initial skills trained on (a)  \texttt{T0}, (b)  \texttt{T0}--\texttt{T3}, and (c) \texttt{T0}--\texttt{T5}.}
	%\caption{Teaching curriculum (i.e., the task associated with picked state $s_t$) plots of our assistive teaching strategy for learners with different prior skills.}
	%\vspace{-2.5mm}	
	\label{fig:curriculum}
\end{figure*}
%%%%%%%%%%%%%%%%%%%%%%%%%%%%%%%%%%%%%%%%%%%%%%%%%%%%%%%%%%%%%%%%%%%%%%
%%%%%%%%%%%%%%%%%%%%%%%%%%%%%%%%%%%%%%%%%%%%%%%%%%%%%%%%%%%%%%%%%%%%%%

\paragraph{Experimental results.}
%\subsection{Experimental Results}
%Next, we report the experimental results where we vary the learner's initial skills and compare the performance of different teaching algorithms. 
We evaluate the performance of different teaching algorithms, and report the results by averaging over $10$ runs.
%Next, we report the experimental results where we compare the performance of different teaching algorithms.
%and analyze the teaching curriculum.
%when varying the learner's initial skills. 
%We consider the learning agent implementing a sequential MCE-IRL algorithm (Algorithm~\ref{algo:sequential-mce-irl})
%The results are reported by averaging over $10$ runs. 
% For the omniscient setting,
We use $n=5$ lanes of each task (i.e., $40$ lanes in total). \algOmn in Algorithm~\ref{algo:sequential-mce-irl-assistive} computes the target hyperparameter $\lambda^*$ as per Footnote~\ref{lambda-exist-note}. For \algBbox in Algorithm~\ref{algo:black-teacher}, we use $B = 5$ and $k=5$.

%%%%%%%%%%%%%%%%
%\footnote{We caution the reader that the target hyperparameter $\lambda^*$ for teaching a MCE-IRL learner is not equivalent to $w^{E_{\mathrm{lin}}}$.}
%%%%%%%%%%%%%%%%

%based on demonstrations sampled from all task types in the MDP~

%\vspace{-2mm}
%\paragraph{Convergence of algorithms.}
%\paragraph{Convergence to hyperparameter.}
%{\bfseries Convergence of algorithms}

%We begin by validating the theoretical findings in Lemma~\ref{greedy-mce-lemma} and Theorem~\ref{greedy-mce-theorem} for the \algOmn. 
\figref{fig:results.lin.lambda} shows the convergence of $\norm{\lambda_t - \lambda^*}_2$, where $\lambda^*$ is computed by \algOmn. As expected, \algOmn outperforms other teaching algorithms (\algBbox and \algAgn teacher) that do not have knowledge of the learner's dynamics and are not directly focusing on teaching this hyperparameter $\lambda^*$. Interestingly, the convergence of \algBbox is still significantly faster than \algAgn teacher. In \figref{fig:results.lin.reward}, we consider the teaching objective of total expected reward difference $|\nu^{\pi^{E_{\mathrm{lin}}}} - \nu^{\pi^L_t}|$. We separately plot this objective with starting states limited to task \texttt{T4} and task \texttt{T7}. The results suggest that convergence to target  $\lambda^*$ leads to reduction in expected reward difference (one of the teaching objective). The performance of \algBbox (BB-T7) is even better than \algOmn (Om-T7) for task \texttt{T7}. This is because \algBbox (Eq.~\eqref{mce-black-teach-alt-true-eq}) picks tasks on which the learner's total expected reward difference is highest.

\paragraph{Teaching curriculum.}
%Here, we consider the linear reward experimental setting as detailed out in Section~\ref{sec:experiments.linear}. 
In Fig.~\ref{fig:curriculum}, we compare the teaching curriculum of \algBbox for three different settings, where the learner's initial policy $\pi^L_1$ is trained based on demonstrations of $\pi^{E_{\mathrm{lin}}}$ sampled only from the lanes associated with the tasks (a) \texttt{T0}, (b) \texttt{T0}--\texttt{T3}, and (c) \texttt{T0}--\texttt{T5}. The curriculum here refers to the task associated with the state $s_{t,0}$ picked by the teacher at time $t$ to provide the demonstration. In these plots, we can see specific structures and temporal patterns emerging in the curriculum. In particular, we can see that the teaching curriculum focuses on tasks that help the learner acquire new skills. For instance, in Fig.~\ref{fig:curriculum:B}, the teacher primarily picks tasks that provide new skills corresponding to the features \texttt{grass}, \texttt{HOV}, and \texttt{ped}. Recall that, for \algBbox (Algorithm~\ref{algo:black-teacher}), we use $B = 5$ and $k=5$ for our experiments. As a result, the curriculum plots show blocks of length $B = 5$ as the algorithm ends up picking the same task until new tests are performed to get a fresh estimate of the learner's policy.

%\vspace{-2mm}
\subsection{Non-linear Reward Setting}\label{sec:experiments.nonlinear}
Here, we study a non-linear reward setting, and use the notation of $\mathrm{non}$ in subscript for $\mathcal{M}_{\mathrm{non}}$, $R^{E_{\mathrm{non}}}$, and $\pi^{E_{\mathrm{non}}}$.
%he relevant quantities. 
%First, we study a linear reward setting, and use the notation $\mathrm{lin}$ in subscript to denote the MDP $\mathcal{M}_{\mathrm{lin}}$, teacher's reward $R^{E_{\mathrm{lin}}}$, and policy $\pi^{E_{\mathrm{lin}}}$.

%and use the notation $\mathrm{lin}$ in subscript to denote the MDP, teacher's policy, and other interrelated things.

\paragraph{MDP.} 
We consider the MDP $\mathcal{M}_{\mathrm{non}}$ which consists of lanes corresponding to $5$ tasks: \texttt{T0}, \texttt{T1}, \texttt{T2}, \texttt{T4}, and \texttt{T8} with the total number of states being $n \times 5 \times 20$ as described above.
%where each cell represents a state and each lane is associated with $20$ states (see Figure~\ref{fig:templates}). 
%\begin{itemize}
%    \item MDP $\mathcal{M}_{\mathrm{non}}$  consists of 5 lanes with T0, T1, T2, T4, T8
%\end{itemize}

\paragraph{Teacher's reward and policy.} 
Here, we define the teacher's reward function $R^{E_{\mathrm{non}}}$. The key difference between $R^{E_{\mathrm{non}}}$ and $R^{E_{\mathrm{lin}}}$ is that the teacher in this setting prefers to drive on HOV (gets a $+1$ reward instead of $-1$ reward). However, if police is present while driving on HOV, there is a penalty of $-5$.
%Apart from the tasks with HOV feature, the reward of this teacher is same as that $R^{E_{\mathrm{lin}}}$.
Teacher's optimal policy $\pi^{E_{\mathrm{non}}}$ for \texttt{T8} is given in \figref{fig:templates}. For other tasks (apart from \texttt{T7} and \texttt{T8}), the teacher's optimal policy $\pi^{E_{\mathrm{non}}}$ is the same as $\pi^{E_{\mathrm{lin}}}$. 

%Most of the task descriptions are already provided in Section~\ref{sec.env.exp.setup}. In the task \texttt{T8}, the teacher prefers to drive in HOV, but avoid HOV in presence of police because of high penalty.
%\begin{itemize}
    %\item Teacher prefers to drive in HOV, but avoid HOV in presence of police because of high penalty. 
%    \item Reward is $R^{E_{\mathrm{non}}}$ can be defined by two components: (1) given by $R^{E_{\mathrm{lin}}}$ with $w^{E_{\mathrm{lin}}}$ for hov set to $+1$ instead of $-1$
    %$w^{E_{\mathrm{lin}}}$
    %. (2) a penalty of $-5$ when in a state with HOV and police both present.
    %\item teacher's optimal policy for T8 is given. For other tasks, teacher's optimal policy is same as .. 
%\end{itemize}

\paragraph{Learner model.}

We consider two different learner models: (i) with linear reward function $R^L_\lambda\brr{s} = \ipp{\lambda}{\phi^L\brr{s}}$, and (ii) with non-linear reward function $R^L_{\lambda}\brr{s} = \ipp{\lambda_{1:d'}}{\phi^L\brr{s}} + {\ipp{\lambda_{d'+1:2d'}}{\phi^L\brr{s}}}^2$. Here, $\phi^L\brr{s} = \phi\brr{s}$ as in Section~\ref{sec:experiments.linear} and $d'=8$ is the dimension of $\phi\brr{s}$. As a prior knowledge to get $\pi^L_1$ at time $t=1$, we train the learner initially based on demonstrations of $\pi^{E_{\mathrm{non}}}$ sampled only from the lanes associated with the tasks \texttt{T0}, \texttt{T1}, and \texttt{T2}.
%$R^L_{\lambda}\brr{s} = \ipp{\lambda_{1:d}}{\phi^L\brr{s}} + {\ipp{\lambda_{d:2d}}{\phi^L\brr{s}}}^2$, where $\phi^L\brr{s} = \phi^{E_{\mathrm{lin}}}\brr{s}$.

%We consider the learner model with linear reward function that depends only on state, i.e., $R^L_\lambda\brr{s} = \ipp{\lambda}{\phi^L\brr{s}}$, where $\phi^L\brr{s} = \phi\brr{s}$ as given in Figure~\ref{tab:wstar}. The learner is implementing a sequential MCE-IRL algorithm (Algorithm~\ref{algo:sequential-mce-irl}), where the learner's prior knowledge is captured by the initial policy $\pi_1$ (corresponding to hyperparameter $\lambda_1$). 

%\begin{itemize}
    %\item learner additionally considers a feature of police. Define in terms of Elin
    %\item First experiment with learner model same as in Section 5.2 (results in Fig 4a)
    %\item Second experiment with learner model: given the quadratic form of ..e...
    %\item initial knowledge based in tasjs T0, T1, T2.
%\end{itemize}

%\paragraph{Experimental setup}
%...
\paragraph{Experimental results.}
We use similar experimental settings as in Section~\ref{sec:experiments.linear} (i.e., $n=5$, averaging $10$ runs, etc.). We separately report results for teaching a learner with linear $R^L_{\lambda}$ (see \figref{fig:results.non.reward.linl}) and non-linear $R^L_{\lambda}$ (see \figref{fig:results.non.reward.nolinl}).

\figref{fig:results.non.reward.linl} shows that both \algBbox and \algAgn teacher are unable to make progress in teaching task \texttt{T8} to the learner.
%We have omitted the results for \algOmn as it is not applicable because of the non-existence of a target $\lambda^\star$ (see discussion in Section~\ref{sec.omni.complexity}). 
Interestingly, the overall performance measuring the total expected reward difference on the whole MDP for \algBbox (BB-all) is worse compared to \algAgn (Ag-all): This is an artifact of \algBbox's strategy in Eq.~\eqref{mce-black-teach-alt-true-eq} being stuck in always picking task \texttt{T8} (even though the learner is not making any progress).
%\ ...
%(given by the plot lines ``Ag-all" and ``BB-all"
%%%%%
\figref{fig:results.non.reward.nolinl} illustrates that the above mentioned limitations are gone when teaching a learner using a non-linear reward function. Here, the rate of reduction in total expected reward difference is significantly faster for \algBbox as compared to \algAgn, as was also observed in \figref{fig:results.lin.reward}. 

These results demonstrate that the learning progress can be sped up significantly by an adaptive teacher even with limited knowledge about the learner, as compared to an uninformative teacher. These results also signify the importance that the learner's representation of feature space and reward function should be powerful enough to learn the desired behavior.

\section{Related Work}\label{sec:relatedwork}
%In this section, we 
%\vspace{-0.5mm}
%\subsubsection{Inverse Reinforcement Learning}
%In the inverse reinforcement learning, an agent observes the behavior (batch of demonstrations) of an expert in an MDP environment with an unknown reward function. Typically, by assuming a linear form \cite{ng2000algorithms}, the IRL algorithms attempt to infer the weight parameters of the reward function.
%In general, this problem is ill-posed as there are many parameters for which the observed behavior is optimal. \citet{abbeel2004apprenticeship,ratliff2006maximum} use a \emph{max-margin} heuristic to select a single reward function that makes the demonstrated behavior optimal. 
%However, these methods fail when no single reward function makes demonstrated behavior both optimal and significantly better than any alternative behavior.  \citet{ziebart2008maximum} employ the \emph{principle of maximum entropy} (under the constraint of matching the empirical feature expectations of demonstrations) to resolve this ambiguity.
%While the above-mentioned algorithms are model-based, recently, new algorithms have been proposed for model-free setting (cf., \cite{ho2016generative}). 

%\cite{bain1999framework}
%\cite{russell1998learning}
%\cite{ho2016showing}

\paragraph{Imitation learning.} The two popular approaches for imitation learning include (i) behavioral cloning, which directly replicates the desired behavior \cite{bain1999framework}, and (ii) inverse reinforcement learning (IRL) which infers the reward function explaining the desired behavior \cite{russell1998learning}. We refer the reader to a recent survey by \citet{osa2018algorithmic} on imitation learning. 
%We refer the reader to 
%providing an algorithmic perspective and recent results on imitation learning
 
\citet{kareem2018_repeated} have studied interactive IRL algorithms that actively request the teacher to provide suitable demonstrations with the goal of reducing the number of interactions. However, the key difference in our approach is that we take the viewpoint of a teacher in how to best assist the learning agent by providing an optimal sequence of demonstrations. Our approach is inspired by real-life pedagogical settings where carefully choosing the teaching demonstrations and tasks can accelerate the learning progress \cite{ho2016showing}. \citet{hadfield2016cooperative} have studied the value alignment problem in a game-theoretic setup, and provided an approximate scheme to generate instructive demonstrations for an IRL agent. In our work, we devise a systematic procedure (with convergence guarantees for the omniscient setting) to choose an optimal sequence of demonstrations, by taking into account the learner's dynamics. 
%Their scheme is based on the target policy only, and does not incorporate the learner's current policy. 

%\paragraph{Reward shaping and advise-based interaction.}
\paragraph{Steering and teaching in reinforcement learning.}
A somewhat different but related problem setting is that of \emph{reward shaping} and \textit{environment design} where the goal is to modify/design the reward function to guide/steer the behavior of a learning agent \cite{ng1999policy,zhang2009policy,sorg2010online_gradient_asc}. Another related problem setting is considered in the \textit{advice-based interaction} framework (e.g., \cite{torrey2013teaching,DBLP:conf/ijcai/AmirKKG16}),  where the goal is to communicate advice to a suboptimal agent on how to act in the world.
\paragraph{Algorithmic teaching.}
Another line of research relevant to our work is that of algorithmic teaching.  Here, one studies the interaction between a teacher and a learner where the teacher's objective is to find an optimal training sequence to steer the learner towards a desired goal \cite{goldman1995complexity,liu2017iterative,DBLP:journals/corr/ZhuSingla18}. Algorithmic teaching provides a rigorous formalism for a number of real-world applications such as personalized education and intelligent tutoring systems \cite{patil2014optimal,rafferty2016faster,hunziker2018teachingmultiple}, social robotics \cite{cakmak2014eliciting}, and human-in-the-loop systems \cite{singla2014near,singla2013actively}. Most of the work in machine teaching is in a batch setting where the teacher provides a batch of teaching examples at once without any adaptation.  The question of how a teacher should adaptively select teaching examples for a learner has been addressed recently but only in the supervised learning setting~\cite{melo2018interactive,pmlr-v80-liu18b,chen2018understanding,yeo2019classroom}.
%andwhere it is assumed that the teacher 
% important

%program synthesis \cite{mayer2017proactive}, adversarial machine learning \cite{mei2015using}, 

%\looseness-1
% an IRL agent
\paragraph{Teaching sequential tasks.}
\citet{DBLP:conf/uai/WalshG12,cakmak2012algorithmic} have studied algorithmic teaching for sequential decision-making tasks. \citet{cakmak2012algorithmic} studied the problem of teaching an IRL agent in the batch setting, i.e., the teacher has to provide a near-optimal set of demonstrations at once. They considered the IRL algorithm by \citet{ng2000algorithms}, which could only result in inferring an equivalent class of reward weight parameters for which the observed behavior is optimal.  In recent work, \citet{danielbrown2018irl} have extended the previous work of \citet{cakmak2012algorithmic} by showing that the teaching problem can be formulated as a set cover problem.  However, their teaching strategy does not take into account how the learner progresses (i.e., it is non-interactive). In contrast, we study interactive teaching setting to teach a sequential MCE-IRL  algorithm \cite{ziebart2008maximum,rhinehart2017first}. This interactive setting, in turn, allows the teacher to design a personalized and adaptive curriculum important for efficient learning \cite{tadepalli2008learning}. \citet{haug_teaching_2018} have studied the problem of teaching an IRL agent adaptively; however, they consider a very different setting where the teacher and the learner have a mismatch in their worldviews.
\section{Conclusions}\label{sec:conclusions}
%\vspace{-0.5mm}
%%%%%%%%%%%%%%%%%%%%%%%%%%%%%%%%%%%%%%%%%%%%%%%
%\looseness-1
We studied the problem of designing interactive teaching algorithms to provide an informative sequence of demonstrations to a sequential IRL learner. In an omniscient teaching setting, we presented \algOmn which achieves the teaching objective with $\mathcal{O}\brr{\log \frac{1}{\epsilon}}$ demonstrations. Then, utilizing the insights from \algOmn, we proposed \algBbox for a more practical blackbox setting. We showed the effectiveness of our algorithms via extensive experiments in a learning environment inspired by a car driving simulator.

As future work, we will investigate extensions of our ideas to more complex environments; we hope that, ultimately, such extensions will provide a basis for designing teaching strategies for intelligent tutoring systems (see Footnote~\ref{footnote:surgical} and Footnote~\ref{footnote:driving}). It would also be interesting to benchmark active imitation methods for MCE-IRL learner using our omniscient teacher (see, e.g., \cite{danielbrown2018irl}). Our results are also important in getting a better understanding of the robustness of MCE-IRL learner against adversarial training-set poisoning attacks. Our fast convergence results in Theorem~\ref{greedy-mce-theorem} suggests that the MCE-IRL learner is actually brittle to adversarial attacks, and designing a robust MCE-IRL is an important direction of future work.

\vspace{2mm}
{\bfseries Acknowledgments.} This work was supported in part by the Swiss National Science Foundation (SNSF) under grant number 407540\_167319.
%%%%%%%%%%%%%%%%%%%%%%%%%%%%%%%%%%%%%%%%%%%%%%%%%%%%%%%%% BIB
% Bibliography
%\newpage

%\iftoggle{longversion}
%{
%\fontsize{9.4pt}{9.5pt}
%\selectfont
%}
%{
%\fontsize{9.5pt}{9.8pt}
%\selectfont
%}

\bibliography{main}

\begin{thebibliography}{}

\bibitem[\protect\citeauthoryear{Abbeel and
  Ng}{2004}]{abbeel2004apprenticeship}
Pieter Abbeel and Andrew~Y Ng.
\newblock Apprenticeship learning via inverse reinforcement learning.
\newblock In {\em ICML}, 2004.

\bibitem[\protect\citeauthoryear{Amin \bgroup \em et al.\egroup
  }{2017}]{kareem2018_repeated}
Kareem Amin, Nan Jiang, and Satinder~P. Singh.
\newblock Repeated inverse reinforcement learning.
\newblock In {\em NIPS}, pages 1813--1822, 2017.

\bibitem[\protect\citeauthoryear{Amir \bgroup \em et al.\egroup
  }{2016}]{DBLP:conf/ijcai/AmirKKG16}
Ofra Amir, Ece Kamar, Andrey Kolobov, and Barbara~J. Grosz.
\newblock Interactive teaching strategies for agent training.
\newblock In {\em IJCAI}, pages 804--811, 2016.

\bibitem[\protect\citeauthoryear{Argall \bgroup \em et al.\egroup
  }{2009}]{argall2009survey}
Brenna~D Argall, Sonia Chernova, Manuela Veloso, and Brett Browning.
\newblock A survey of robot learning from demonstration.
\newblock {\em Robotics and autonomous systems}, 2009.

\bibitem[\protect\citeauthoryear{Bain and Sommut}{1999}]{bain1999framework}
Michael Bain and Claude Sommut.
\newblock A framework for hehavioural claning.
\newblock {\em Machine Intelligence 15}, 15:103, 1999.

\bibitem[\protect\citeauthoryear{Billard \bgroup \em et al.\egroup
  }{2008}]{billard2008robot}
Aude Billard, Sylvain Calinon, Ruediger Dillmann, and Stefan Schaal.
\newblock Robot programming by demonstration.
\newblock In {\em Springer handbook of robotics}. 2008.

\bibitem[\protect\citeauthoryear{Boularias \bgroup \em et al.\egroup
  }{2011}]{boularias2011relative}
Abdeslam Boularias, Jens Kober, and Jan Peters.
\newblock Relative entropy inverse reinforcement learning.
\newblock In {\em AISTATS}, pages 182--189, 2011.

\bibitem[\protect\citeauthoryear{Brown and Niekum}{2019}]{danielbrown2018irl}
Daniel~S. Brown and Scott Niekum.
\newblock Machine teaching for inverse reinforcement learning: Algorithms and
  applications.
\newblock In {\em AAAI}, 2019.

\bibitem[\protect\citeauthoryear{Buchsbaum \bgroup \em et al.\egroup
  }{2011}]{buchsbaum2011children}
Daphna Buchsbaum, Alison Gopnik, Thomas~L Griffiths, and Patrick Shafto.
\newblock Children's imitation of causal action sequences is influenced by
  statistical and pedagogical evidence.
\newblock {\em Cognition}, 2011.

\bibitem[\protect\citeauthoryear{Cakmak and
  Lopes}{2012}]{cakmak2012algorithmic}
Maya Cakmak and Manuel Lopes.
\newblock Algorithmic and human teaching of sequential decision tasks.
\newblock In {\em AAAI}, 2012.

\bibitem[\protect\citeauthoryear{Cakmak and Thomaz}{2014}]{cakmak2014eliciting}
Maya Cakmak and Andrea~L Thomaz.
\newblock Eliciting good teaching from humans for machine learners.
\newblock {\em Artificial Intelligence}, 217:198--215, 2014.

\bibitem[\protect\citeauthoryear{Chen \bgroup \em et al.\egroup
  }{2018}]{chen2018understanding}
Yuxin Chen, Adish Singla, Oisin Mac~Aodha, Pietro Perona, and Yisong Yue.
\newblock Understanding the role of adaptivity in machine teaching: The case of
  version space learners.
\newblock In {\em Neur{I}{P}{S}}, 2018.

\bibitem[\protect\citeauthoryear{Chernova and Thomaz}{2014}]{chernova2014robot}
Sonia Chernova and Andrea~L Thomaz.
\newblock Robot learning from human teachers.
\newblock {\em Synthesis Lectures on Artificial Intelligence and Machine
  Learning}, 2014.

\bibitem[\protect\citeauthoryear{Dorsa~Sadigh \bgroup \em et al.\egroup
  }{2017}]{dorsa2017active}
Anca D~Dragan Dorsa~Sadigh, Shankar Sastry, and Sanjit~A Seshia.
\newblock Active preference-based learning of reward functions.
\newblock In {\em RSS}, 2017.

\bibitem[\protect\citeauthoryear{Finn \bgroup \em et al.\egroup
  }{2016}]{finn2016guided}
Chelsea Finn, Sergey Levine, and Pieter Abbeel.
\newblock Guided cost learning: Deep inverse optimal control via policy
  optimization.
\newblock In {\em ICML}, pages 49--58, 2016.

\bibitem[\protect\citeauthoryear{Goldman and
  Kearns}{1995}]{goldman1995complexity}
Sally~A Goldman and Michael~J Kearns.
\newblock On the complexity of teaching.
\newblock {\em Journal of Computer and System Sciences}, 50(1):20--31, 1995.

\bibitem[\protect\citeauthoryear{Hadfield-Menell \bgroup \em et al.\egroup
  }{2016}]{hadfield2016cooperative}
Dylan Hadfield-Menell, Stuart~J Russell, Pieter Abbeel, and Anca Dragan.
\newblock Cooperative inverse reinforcement learning.
\newblock In {\em NIPS}, 2016.

\bibitem[\protect\citeauthoryear{Haug \bgroup \em et al.\egroup
  }{2018}]{haug_teaching_2018}
Luis Haug, Sebastian Tschiatschek, and Adish Singla.
\newblock Teaching {Inverse} {Reinforcement} {Learners} via {Features} and
  {Demonstrations}.
\newblock In {\em Neur{I}{P}{S}}, 2018.

\bibitem[\protect\citeauthoryear{Ho and Ermon}{2016}]{ho2016generative}
Jonathan Ho and Stefano Ermon.
\newblock Generative adversarial imitation learning.
\newblock In {\em NIPS}, 2016.

\bibitem[\protect\citeauthoryear{Ho \bgroup \em et al.\egroup
  }{2016}]{ho2016showing}
Mark~K Ho, Michael Littman, James MacGlashan, Fiery Cushman, and Joseph~L
  Austerweil.
\newblock Showing versus doing: Teaching by demonstration.
\newblock In {\em NIPS}, 2016.

\bibitem[\protect\citeauthoryear{Hunziker \bgroup \em et al.\egroup
  }{2018}]{hunziker2018teachingmultiple}
Anette Hunziker, Yuxin Chen, Oisin {Mac Aodha}, Manuel Gomez{-}Rodriguez,
  Andreas Krause, Pietro Perona, Yisong Yue, and Adish Singla.
\newblock Teaching multiple concepts to a forgetful learner.
\newblock {\em CoRR}, abs/1805.08322, 2018.

\bibitem[\protect\citeauthoryear{Levine \bgroup \em et al.\egroup
  }{2010}]{levine2010feature}
Sergey Levine, Zoran Popovic, and Vladlen Koltun.
\newblock Feature construction for inverse reinforcement learning.
\newblock In {\em NIPS}, pages 1342--1350, 2010.

\bibitem[\protect\citeauthoryear{Liu \bgroup \em et al.\egroup
  }{2017}]{liu2017iterative}
Weiyang Liu, Bo~Dai, Ahmad Humayun, Charlene Tay, Chen Yu, Linda~B. Smith,
  James~M. Rehg, and Le~Song.
\newblock Iterative machine teaching.
\newblock In {\em ICML}, 2017.

\bibitem[\protect\citeauthoryear{Liu \bgroup \em et al.\egroup
  }{2018}]{pmlr-v80-liu18b}
Weiyang Liu, Bo~Dai, Xingguo Li, Zhen Liu, James Rehg, and Le~Song.
\newblock Towards black-box iterative machine teaching.
\newblock In {\em ICML}, 2018.

\bibitem[\protect\citeauthoryear{Melo \bgroup \em et al.\egroup
  }{2018}]{melo2018interactive}
Francisco~S Melo, Carla Guerra, and Manuel Lopes.
\newblock Interactive optimal teaching with unknown learners.
\newblock In {\em IJCAI}, pages 2567--2573, 2018.

\bibitem[\protect\citeauthoryear{Ng and Russell}{2000}]{ng2000algorithms}
Andrew~Y Ng and Stuart~J Russell.
\newblock Algorithms for inverse reinforcement learning.
\newblock In {\em ICML}, 2000.

\bibitem[\protect\citeauthoryear{Ng \bgroup \em et al.\egroup
  }{1999}]{ng1999policy}
Andrew~Y Ng, Daishi Harada, and Stuart Russell.
\newblock Policy invariance under reward transformations: Theory and
  application to reward shaping.
\newblock In {\em ICML}, 1999.

\bibitem[\protect\citeauthoryear{Osa \bgroup \em et al.\egroup
  }{2018}]{osa2018algorithmic}
Takayuki Osa, Joni Pajarinen, Gerhard Neumann, J~Andrew Bagnell, Pieter Abbeel,
  Jan Peters, et~al.
\newblock An algorithmic perspective on imitation learning.
\newblock {\em Foundations and Trends{\textregistered} in Robotics},
  7(1-2):1--179, 2018.

\bibitem[\protect\citeauthoryear{Patil \bgroup \em et al.\egroup
  }{2014}]{patil2014optimal}
Kaustubh~R Patil, Xiaojin Zhu, {\L}ukasz Kope{\'c}, and Bradley~C Love.
\newblock Optimal teaching for limited-capacity human learners.
\newblock In {\em NIPS}, pages 2465--2473, 2014.

\bibitem[\protect\citeauthoryear{Rafferty \bgroup \em et al.\egroup
  }{2016}]{rafferty2016faster}
Anna~N Rafferty, Emma Brunskill, Thomas~L Griffiths, and Patrick Shafto.
\newblock Faster teaching via pomdp planning.
\newblock {\em Cognitive science}, 2016.

\bibitem[\protect\citeauthoryear{Ratliff \bgroup \em et al.\egroup
  }{2006}]{ratliff2006maximum}
Nathan~D Ratliff, J~Andrew Bagnell, and Martin~A Zinkevich.
\newblock Maximum margin planning.
\newblock In {\em ICML}, pages 729--736, 2006.

\bibitem[\protect\citeauthoryear{Rhinehart and
  Kitani}{2017}]{rhinehart2017first}
Nicholas Rhinehart and Kris~M Kitani.
\newblock First-person activity forecasting with online inverse reinforcement
  learning.
\newblock In {\em ICCV}, pages 3696--3705, 2017.

\bibitem[\protect\citeauthoryear{Russell}{1998}]{russell1998learning}
Stuart Russell.
\newblock Learning agents for uncertain environments.
\newblock In {\em COLT}, pages 101--103, 1998.

\bibitem[\protect\citeauthoryear{Schaal}{1997}]{schaal1997learning}
Stefan Schaal.
\newblock Learning from demonstration.
\newblock In {\em NIPS}, pages 1040--1046, 1997.

\bibitem[\protect\citeauthoryear{Shafto \bgroup \em et al.\egroup
  }{2014}]{shafto2014rational}
Patrick Shafto, Noah~D Goodman, and Thomas~L Griffiths.
\newblock A rational account of pedagogical reasoning: Teaching by, and
  learning from, examples.
\newblock {\em Cognitive psychology}, 71:55--89, 2014.

\bibitem[\protect\citeauthoryear{Singla \bgroup \em et al.\egroup
  }{2013}]{singla2013actively}
Adish Singla, Ilija Bogunovic, G~Bart{\'o}k, A~Karbasi, and A~Krause.
\newblock On actively teaching the crowd to classify.
\newblock In {\em NIPS Workshop on Data Driven Education}, 2013.

\bibitem[\protect\citeauthoryear{Singla \bgroup \em et al.\egroup
  }{2014}]{singla2014near}
Adish Singla, Ilija Bogunovic, G{\'a}bor Bart{\'o}k, Amin Karbasi, and Andreas
  Krause.
\newblock Near-optimally teaching the crowd to classify.
\newblock In {\em ICML}, 2014.

\bibitem[\protect\citeauthoryear{Sorg \bgroup \em et al.\egroup
  }{2010}]{sorg2010online_gradient_asc}
Jonathan Sorg, Satinder~P. Singh, and Richard~L. Lewis.
\newblock Reward design via online gradient ascent.
\newblock In {\em NIPS}, pages 2190--2198, 2010.

\bibitem[\protect\citeauthoryear{Sun \bgroup \em et al.\egroup
  }{2018}]{sun2018dual}
Wen Sun, Geoffrey~J Gordon, Byron Boots, and J~Andrew Bagnell.
\newblock Dual policy iteration.
\newblock {\em arXiv:1805.10755}, 2018.

\bibitem[\protect\citeauthoryear{Tadepalli}{2008}]{tadepalli2008learning}
Prasad Tadepalli.
\newblock Learning to solve problems from exercises.
\newblock {\em Computational Intelligence}, 2008.

\bibitem[\protect\citeauthoryear{Torrey and Taylor}{2013}]{torrey2013teaching}
Lisa Torrey and Matthew Taylor.
\newblock Teaching on a budget: Agents advising agents in reinforcement
  learning.
\newblock In {\em AAMAS}, pages 1053--1060, 2013.

\bibitem[\protect\citeauthoryear{Walsh and
  Goschin}{2012}]{DBLP:conf/uai/WalshG12}
Thomas~J. Walsh and Sergiu Goschin.
\newblock Dynamic teaching in sequential decision making environments.
\newblock In {\em UAI}, 2012.

\bibitem[\protect\citeauthoryear{Wulfmeier \bgroup \em et al.\egroup
  }{2015}]{wulfmeier2015maximum}
Markus Wulfmeier, Peter Ondruska, and Ingmar Posner.
\newblock Maximum entropy deep inverse reinforcement learning.
\newblock {\em CoRR}, abs/1507.04888, 2015.

\bibitem[\protect\citeauthoryear{Yeo \bgroup \em et al.\egroup
  }{2019}]{yeo2019classroom}
Teresa Yeo, Parameswaran Kamalaruban, Adish Singla, Arpit Merchant, Thibault
  Asselborn, Louis Faucon, Pierre Dillenbourg, and Volkan Cevher.
\newblock Iterative classroom teaching.
\newblock In {\em AAAI}, 2019.

\bibitem[\protect\citeauthoryear{Zhang \bgroup \em et al.\egroup
  }{2009}]{zhang2009policy}
Haoqi Zhang, David~C Parkes, and Yiling Chen.
\newblock Policy teaching through reward function learning.
\newblock In {\em EC}, pages 295--304, 2009.

\bibitem[\protect\citeauthoryear{Zhou \bgroup \em et al.\egroup
  }{2018}]{zhou2018mdce}
Zhengyuan Zhou, Michael Bloem, and Nicholas Bambos.
\newblock Infinite time horizon maximum causal entropy inverse reinforcement
  learning.
\newblock {\em {IEEE} Trans. Automat. Contr.}, 63(9):2787--2802, 2018.

\bibitem[\protect\citeauthoryear{Zhu \bgroup \em et al.\egroup
  }{2018}]{DBLP:journals/corr/ZhuSingla18}
Xiaojin Zhu, Adish Singla, Sandra Zilles, and Anna~N. Rafferty.
\newblock An overview of machine teaching.
\newblock {\em CoRR}, abs/1801.05927, 2018.

\bibitem[\protect\citeauthoryear{Ziebart \bgroup \em et al.\egroup
  }{2008}]{ziebart2008maximum}
Brian~D Ziebart, Andrew~L Maas, J~Andrew Bagnell, and Anind~K Dey.
\newblock Maximum entropy inverse reinforcement learning.
\newblock In {\em AAAI}, 2008.

\bibitem[\protect\citeauthoryear{Ziebart}{2010}]{ziebart2010modeling}
Brian~D Ziebart.
\newblock {\em Modeling purposeful adaptive behavior with the principle of
  maximum causal entropy}.
\newblock Carnegie Mellon University, 2010.

\end{thebibliography}
\bibliographystyle{named}
%%%%%%%%%%%%%%%%%%%%%%%%%%%%%%%%%%%%%%%%%%%%%%%%%%%%%%%%%
%%%%%%%%%%%%%%%%%%%%%%%%%%%%%%%%%%%%%%%%%%%%%%%%%%%%%%%%%% APPENDIX
\iftoggle{longversion}{
\clearpage
\onecolumn
\appendix
{\allowdisplaybreaks
%\input{8.0_appendix-table-of-contents}
%\input{8.4_appendix-temp}
%!TEX root = main.tex
%%%%%%%%%%%%%%%%%%%%%%%%%%%%%%%%%%%%%%%%%%%%%%%%%%%%%%%%%

\section{Gradient for Sequential MCE-IRL}\label{sec.appendix.maxent}
%In this section, we compute the gradient $g_t$ for the sequential MCE-IRL learner at time $t$. 

% As mentioned in Section~\ref{sec.omni-teaching}, we consider a learner update rule given by $\lambda_{t+1} ~=~ \lambda_t - \eta_t g_t$ with $g_t$ given by 
% \[
% g_t ~=~ \sum_{s,a} \bss{\rho^{\pi^L_t , s_{t,0}} \brr{s,a} - \rho^{\xi_t} \brr{s,a}} \nabla_\lambda R^L_\lambda \brr{s,a} \big \vert_{\lambda = \lambda_t} .
% \]

\noindent
In this section, we show that $g_t$ can be seen as an empirical counterpart of the gradient of the following loss function:
\begin{equation*}
%\label{batch-mce-opt}
c\brr{\lambda ; \pi^E} ~=~ \underset{\bcc{\brr{s_\tau , a_\tau}}_\tau \sim \brr{\pi^E , \mathcal{M}}}{\mathbb{E}} \bss{- \sum_\tau \gamma^\tau \log \pi^L_\lambda \brr{a_{\tau} \mid s_{\tau}}}
\end{equation*}
capturing the discounted negative log-likelihood of teacher's demonstrations. In Proposition~\ref{mce-gradient-proposition}, we show that gradient $\nabla_\lambda c\brr{\lambda ; \pi^E}$ is given by $\sum_{s,a} \bss{\rho^{\pi^L_\lambda} \brr{s,a} - \rho^{\pi^E} \brr{s,a}} \nabla_\lambda R^L_\lambda \brr{s,a}$ (see Eq.~\eqref{appendix.eq:max-ent-grad-exp}).

\vspace{2mm}
\noindent
Given the teacher's demonstration $\xi_t$ with starting state $s_{t,0}$, we compute $g_t$ as follows:
\begin{itemize}
    \item In Eq.~\eqref{appendix.eq:max-ent-grad-exp}, consider the gradient component corresponding to teacher's policy, i.e., $- \sum_{s,a} \rho^{\pi^E} \brr{s,a} \nabla_\lambda R^L_\lambda \brr{s,a}$. We replace $\rho^{\pi^E} \brr{s,a}$ with it's empirical counterpart using $\xi_t$. This results in the following component in $g_t$:
    \[
    - \sum_{s,a} \rho^{\xi_t} \brr{s,a} \nabla_\lambda R^L_\lambda \brr{s,a} \big \vert_{\lambda = \lambda_t}
    \]
    \item In Eq.~\eqref{appendix.eq:max-ent-grad-exp}, consider the gradient component corresponding to learner's policy, i.e., $\sum_{s,a} \rho^{\pi^L_\lambda} \brr{s,a} \nabla_\lambda R^L_\lambda \brr{s,a}$. We compute $\rho^{\pi^L_\lambda} \brr{s,a}$ with $s_{t,0}$ as the only initial state, i.e., $P_0(s_{t,0}) = 1$. This results in the following component in $g_t$:
    \[
    \sum_{s,a} \rho^{\pi^L_t , s_{t,0}} \brr{s,a} \nabla_\lambda R^L_\lambda \brr{s,a} \big \vert_{\lambda = \lambda_t}
    \]
\end{itemize}
Hence, $g_t$ given by 
\[
g_t ~=~ \sum_{s,a} \bss{\rho^{\pi^L_t , s_{t,0}} \brr{s,a} - \rho^{\xi_t} \brr{s,a}} \nabla_\lambda R^L_\lambda \brr{s,a} \big \vert_{\lambda = \lambda_t} .
\]

%shows that $g_t$ is the gradient of the discounted negative log-likelihood of teacher's demonstration $\xi_t$. 
%Below, we provide a proof of the proposition.
\begin{proposition}
\label{mce-gradient-proposition}
Consider the loss function defined as follows:
\[
c\brr{\lambda ; \pi^E} ~:=~ \underset{\bcc{\brr{s_\tau , a_\tau}}_\tau \sim \brr{\pi^E , \mathcal{M}}}{\mathbb{E}} \bss{- \sum_\tau \gamma^\tau \log \pi^L_\lambda \brr{a_{\tau} \mid s_{\tau}}} .
\]
Then the gradient of $c\brr{\lambda ; \pi^E}$ w.r.t. $\lambda$ is given by 
\begin{equation}
\label{appendix.eq:max-ent-grad-exp}
\nabla_\lambda c\brr{\lambda ; \pi^E} ~=~ \sum_{s,a} \bss{\rho^{\pi^L_\lambda} \brr{s,a} - \rho^{\pi^E} \brr{s,a}} \nabla_\lambda R^L_\lambda \brr{s,a} .
\end{equation}
\end{proposition}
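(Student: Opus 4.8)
The plan is to reduce the objective to a difference of two discounted-reward expectations — one taken under $\pi^E$ and one under the learner's own soft Bellman policy $\pi^L_\lambda$ — and then differentiate. First I would substitute the soft Bellman identity $\log \pi^L_\lambda(a\mid s) = Q_\lambda(s,a) - V_\lambda(s)$ into $c(\lambda;\pi^E)$ and expand $Q_\lambda$ using $Q_\lambda(s,a) = R^L_\lambda(s,a) + \gamma\sum_{s'}T(s'\mid s,a)V_\lambda(s')$. Inside the trajectory expectation the term $\gamma\sum_{s'}T(s'\mid s_\tau,a_\tau)V_\lambda(s')$ has the same expectation as $\gamma V_\lambda(s_{\tau+1})$, so the value terms $\sum_\tau \gamma^\tau\big(\gamma V_\lambda(s_{\tau+1}) - V_\lambda(s_\tau)\big)$ telescope to $-V_\lambda(s_0)$ (using $\gamma^\tau V_\lambda \to 0$). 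This yields the clean identity
\[
-c(\lambda;\pi^E) = \mathbb{E}_{\pi^E}\Big[\sum_\tau \gamma^\tau R^L_\lambda(s_\tau,a_\tau)\Big] - \mathbb{E}_{s_0\sim P_0}\big[V_\lambda(s_0)\big].
\]

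Next I would differentiate each term in $\lambda$. Since the trajectory distribution induced by $\pi^E$ does not depend on $\lambda$, the first term contributes $\mathbb{E}_{\pi^E}[\sum_\tau \gamma^\tau \nabla_\lambda R^L_\lambda(s_\tau,a_\tau)]$, which rewrites via the definition of $\rho^{\pi^E}$ as $\frac{1}{1-\gamma}\sum_{s,a}\rho^{\pi^E}(s,a)\nabla_\lambda R^L_\lambda(s,a)$. For the second term, differentiating $V_\lambda(s)=\log\sum_a \exp Q_\lambda(s,a)$ gives $\nabla_\lambda V_\lambda(s) = \sum_a \pi^L_\lambda(a\mid s)\,\nabla_\lambda Q_\lambda(s,a)$, and differentiating $Q_\lambda$ gives $\nabla_\lambda Q_\lambda(s,a) = \nabla_\lambda R^L_\lambda(s,a) + \gamma\sum_{s'}T(s'\mid s,a)\nabla_\lambda V_\lambda(s')$. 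Together these show that $\nabla_\lambda V_\lambda$ satisfies a policy-evaluation Bellman equation for the policy $\pi^L_\lambda$ with per-step ``reward'' $\nabla_\lambda R^L_\lambda$.

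I would then unroll this fixed-point relation into $\nabla_\lambda V_\lambda(s) = \mathbb{E}_{\pi^L_\lambda}[\sum_\tau \gamma^\tau \nabla_\lambda R^L_\lambda(s_\tau,a_\tau)\mid s_0=s]$; averaging over $s_0\sim P_0$ and invoking the definition of $\rho^{\pi^L_\lambda}$ converts this into $\frac{1}{1-\gamma}\sum_{s,a}\rho^{\pi^L_\lambda}(s,a)\nabla_\lambda R^L_\lambda(s,a)$. Subtracting the two pieces then gives the claimed expression, up to the global $\frac{1}{1-\gamma}$ normalization carried by the occupancy measure — which is immaterial here, since it is absorbed into the learning rate $\eta_t$ of the gradient update (hence the proposition phrases $g_t$ as an empirical \emph{counterpart}).

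The main obstacle will be the treatment of the value function's gradient: recognizing that $\nabla_\lambda V_\lambda$ is itself the solution of a discounted Bellman equation under $\pi^L_\lambda$, and justifying its unrolling into a trajectory expectation (convergence of the infinite discounted sum and the legitimacy of interchanging $\nabla_\lambda$ with both the expectation and the infinite sum). The telescoping step additionally needs the mild regularity $\gamma^\tau V_\lambda(s_\tau)\to 0$, which holds because $V_\lambda$ is bounded whenever $R^L_\lambda$ is bounded.
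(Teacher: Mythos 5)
Your proof is correct, and it reaches Eq.~\eqref{appendix.eq:max-ent-grad-exp} by a genuinely different organization of the computation than the paper's. The paper never forms your closed-form identity
\[
-c\brr{\lambda;\pi^E} ~=~ \expctover{\pi^E}{\sum_\tau \gamma^\tau R^L_\lambda\brr{s_\tau,a_\tau}} - \expctover{s_0 \sim P_0}{V_\lambda\brr{s_0}} ;
\]
instead it substitutes $\log \pi^L_\lambda = Q_\lambda - V_\lambda$, regroups the resulting infinite series so that each $-\gamma^\tau Q_\lambda$ block is paired with the following $\gamma^{\tau+1} V_\lambda$ block (keeping $\sum_s P_0(s) V_\lambda(s)$ separate), and only \emph{then} differentiates: in each paired block the expansion $\nabla_\lambda Q_\lambda = \nabla_\lambda R^L_\lambda + \gamma \sum_{s'} T(s' \mid s,a) \nabla_\lambda V_\lambda(s')$ makes the value-gradient terms cancel exactly, leaving $-\gamma^\tau \sum_{s,a} P^E_\tau(s,a)\, \nabla_\lambda R^L_\lambda(s,a)$, while the initial-value term is unrolled---via the same softmax derivative $\nabla_\lambda V_\lambda(s) = \sum_a \pi^L_\lambda(a \mid s)\, \nabla_\lambda Q_\lambda(s,a)$ that you use---into the learner's occupancy term. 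So the two proofs perform the same telescoping cancellations, you before differentiating and the paper after; differentiation and telescoping commute, and both arguments hinge on the log-sum-exp derivative producing $\pi^L_\lambda$ together with the Bellman recursion satisfied by $\nabla_\lambda V_\lambda$. What your packaging buys: the intermediate identity is of independent interest (it is the familiar ``expected discounted learner reward under $\pi^E$ minus expected soft value at the start'' form of the MCE log-likelihood), the gradient step splits into two clean, self-contained pieces (a $\lambda$-independent trajectory distribution for the teacher term; a policy-evaluation fixed point for $\nabla_\lambda V_\lambda$) rather than an infinite bookkeeping of paired terms, and you are explicit about the regularity facts ($\gamma^\tau V_\lambda \to 0$ from boundedness of $R^L_\lambda$, and the interchange of $\nabla_\lambda$ with the expectation and the infinite sums) that the paper leaves tacit. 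One further point in your favor: you correctly flag the global $\frac{1}{1-\gamma}$ factor. With the paper's $(1-\gamma)$-normalized occupancy measure in Eq.~\eqref{state-action-visit-eq}, the honest output of \emph{both} derivations is $\nabla_\lambda c\brr{\lambda;\pi^E} = \frac{1}{1-\gamma} \sum_{s,a} \bss{\rho^{\pi^L_\lambda}(s,a) - \rho^{\pi^E}(s,a)} \nabla_\lambda R^L_\lambda(s,a)$; the paper's proof silently drops this positive constant in its final lines, whereas you note explicitly that it is immaterial because it rescales the gradient direction and is absorbed into the learning rate $\eta_t$.
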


\begin{proof}
We will make use of the following quantities as part of the proof:
\begin{itemize}
    \item $P^L_{\tau}\brr{s}$ is defined as the probability of visiting state $s$ at time $\tau$ by the policy $\pi^L_\lambda$.
    \item $P^L_{\tau}\brr{s, a}$ is defined as the probability of taking action $a$ from state $s$ at time $\tau$ by the policy $\pi^L_\lambda$.
    \item $P^E_{\tau}\brr{s}$ is defined as the probability of visiting state $s$ at time $\tau$ by the policy $\pi^E$.
    \item $P^E_{\tau}\brr{s, a}$ is defined as the probability of taking action $a$ from state $s$ at time $\tau$ by the policy $\pi^E$.
\end{itemize}

First, we rewrite $c\brr{\lambda ; \pi^E}$ as follows:
\begin{align*}
c\brr{\lambda ; \pi^E} ~=~& - \sum_{\tau = 0}^{\infty} \sum_{s , a} P^E_\tau\brr{s} P^E_\tau\brr{a \mid s} \gamma^\tau \bss{Q_\lambda \brr{s, a} - V_\lambda \brr{s}} \\
~=~& - \sum_{s,a} P_0\brr{s} P^E_0\brr{a \mid s} \bss{Q_\lambda \brr{s,a} - V_\lambda \brr{s}} \\
& - \sum_{s,a} P^E_1\brr{s} P^E_1\brr{a \mid s} \gamma \bss{Q_\lambda \brr{s,a} - V_\lambda \brr{s}} \\ 
& - \sum_{s,a} P^E_2\brr{s} P^E_2\brr{a \mid s} \gamma^2 \bss{Q_\lambda \brr{s,a} - V_\lambda \brr{s}} \\
& - \sum_{s,a} P^E_3\brr{s} P^E_3\brr{a \mid s} \gamma^3 \bss{Q_\lambda \brr{s,a} - V_\lambda \brr{s}} \\
& \dots \\
~=~& \sum_{s,a} P_0\brr{s} P^E_0\brr{a \mid s} V_\lambda \brr{s} \\
& - \sum_{s,a} P_0\brr{s} P^E_0\brr{a \mid s} Q_\lambda \brr{s,a} + \gamma \sum_{s,a} P^E_1\brr{s} P^E_1\brr{a \mid s} V_\lambda \brr{s} \\
& - \gamma \sum_{s,a} P^E_1\brr{s} P^E_1\brr{a \mid s} Q_\lambda \brr{s,a} + \gamma^2 \sum_{s,a} P^E_2\brr{s} P^E_2\brr{a \mid s} V_\lambda \brr{s} \\
& - \gamma^2 \sum_{s,a} P^E_2\brr{s} P^E_2\brr{a \mid s} Q_\lambda \brr{s,a} + \gamma^3 \sum_{s,a} P^E_3\brr{s} P^E_3\brr{a \mid s} V_\lambda \brr{s}\\
&\dots \\
~=~& \textcolor{red}{\sum_{s} P_0\brr{s} V_\lambda \brr{s}}  \\
& \textcolor{blue}{- \sum_{s,a} P^E_0\brr{s,a} Q_\lambda \brr{s,a} + \gamma \sum_{s} P^E_1\brr{s} V_\lambda \brr{s}} \\
& \textcolor{magenta}{- \gamma \sum_{s,a} P^E_1\brr{s,a} Q_\lambda \brr{s,a} + \gamma^2 \sum_{s} P^E_2\brr{s} V_\lambda \brr{s}} \\
& \textcolor{violet}{- \gamma^2 \sum_{s,a} P^E_2\brr{s,a} Q_\lambda \brr{s,a} + \gamma^3 \sum_{s} P^E_3\brr{s} V_\lambda \brr{s}}\\
& \dots \\
\end{align*}
Now, we compute the gradient of the \textcolor{red}{first term}: 
\begin{align*}
\nabla_\lambda \sum_{s} P_0\brr{s} V_\lambda \brr{s} ~=~& \sum_{s} P_0\brr{s} \nabla_\lambda V_\lambda  \brr{s} \\
~=~& \sum_{s} P_0\brr{s} \nabla_\lambda \log \sum_{a}{\exp \brr{ Q_\lambda \brr{s, a} }} \\
~=~& \sum_{s} P_0\brr{s} \frac{\sum_{a}{\exp \brr{ Q_\lambda \brr{s, a} } \nabla_\lambda Q_\lambda \brr{s, a}}}{\sum_{a}{\exp \brr{ Q_\lambda \brr{s, a} }}} \\
~=~& \sum_{s} P_0\brr{s} \sum_{a} \pi^L_{\lambda} (a \mid s) \nabla_\lambda Q_\lambda \brr{s, a} \\
~=~& \sum_{s , a} P^L_0\brr{s , a} \bss{\nabla_\lambda R^L_\lambda \brr{s, a} + \gamma \sum_{s'}{T(s' \mid s , a) \nabla_\lambda V_\lambda \brr{s'}}} \\
~=~& \sum_{s , a} P^L_0\brr{s , a} \nabla_\lambda R^L_\lambda \brr{s, a} + \sum_{s'} \sum_{s , a} P^L_0\brr{s , a} T(s' \mid s , a) \gamma \nabla_\lambda V_\lambda \brr{s'} \\
~=~& \sum_{s , a} P^L_0\brr{s , a} \nabla_\lambda R^L_\lambda \brr{s, a} + \sum_{s'} P^L_1\brr{s'} \gamma \nabla_\lambda V_\lambda \brr{s'} \\
~=~& \sum_{s , a} P^L_0\brr{s , a} \nabla_\lambda R^L_\lambda \brr{s, a} + \sum_{s , a} P^L_1\brr{s , a} \gamma \nabla_\lambda R^L_\lambda \brr{s, a} + \sum_{s'} P^L_2\brr{s'} \gamma^2 \nabla_\lambda V_\lambda \brr{s'} \\
& \vdots \\
~=~& \sum_{s , a} \rho^{\pi^L_\lambda} \brr{s,a} \nabla_\lambda R^L_\lambda \brr{s , a} . 
\end{align*}
Now, we compute the gradient of the \textcolor{blue}{second term}: 
\begin{align*}
& - \sum_{s,a} P^E_0\brr{s,a} \nabla_\lambda Q_\lambda \brr{s,a} + \gamma \sum_{s} P^E_1\brr{s} \nabla_\lambda V_\lambda \brr{s} \\
~=~& - \sum_{s,a} P^E_0\brr{s,a} \bss{\nabla_\lambda R^L_\lambda \brr{s,a} + \gamma \sum_{s'}{T(s' \mid s,a) \nabla_\lambda V_\lambda \brr{s'}}} + \gamma \sum_{s} P^E_1\brr{s} \nabla_\lambda V_\lambda \brr{s} \\
~=~& - \sum_{s,a} P^E_0\brr{s,a} \nabla_\lambda R^L_\lambda \brr{s,a} - \gamma \sum_{s'} \sum_{s,a} P^E_0\brr{s,a} T(s' \mid s,a) \nabla_\lambda V_\lambda \brr{s'} + \gamma \sum_{s} P^E_1\brr{s} \nabla_\lambda V_\lambda \brr{s} \\
~=~& - \sum_{s,a} P^E_0\brr{s,a} \nabla_\lambda R^L_\lambda \brr{s,a} - \gamma \sum_{s'} P^E_1\brr{s'} \nabla_\lambda V_\lambda \brr{s'} + \gamma \sum_{s} P^E_1\brr{s} \nabla_\lambda V_\lambda \brr{s} \\
~=~& - \sum_{s,a} P^E_0\brr{s,a} \nabla_\lambda R^L_\lambda \brr{s,a} .
\end{align*}
By following similar steps, the gradient of all the terms except the \textcolor{red}{first term} is given by
\[
- \sum_{\tau = 0}^{\infty} \sum_{s,a} P^E_\tau\brr{s,a} \gamma^\tau \nabla_\lambda R^L_\lambda \brr{s,a} ~=~ - \sum_{s , a} \rho^{\pi^E} \brr{s,a} \nabla_\lambda R^L_\lambda \brr{s , a} . 
\]
Thus the full gradient is given by
\[
\nabla_\lambda c\brr{\lambda ; \pi^E} ~=~ \sum_{s,a} \bss{\rho^{\pi^L_\lambda} \brr{s,a} - \rho^{\pi^E} \brr{s,a}} \nabla_\lambda R^L_\lambda \brr{s,a} .
\]
\end{proof}

\section{Proofs}\label{sec.appendix.maxent-assistive-teaching}

In this section, we provide proofs of Lemma~\ref{greedy-mce-lemma} and Lemma~\ref{smooth-lemma}.

%\subsection{Compatible Learner}
%\label{sec.comp.learner}

\subsection{Proof of Lemma~\ref{greedy-mce-lemma}}
\begin{proof}
The demonstration $\xi_t$ (starting from $ s_{t,0}$) provided by the teacher at time $t$ satisfies the following:
\begin{equation}
\label{proof-lemma-1-eq-1}
\mu^{\xi_t} ~=~ \mu^{\pi^L_t , s_{t,0}} - \beta_t \brr{\lambda_t - \lambda^*} + \delta_t ,
\end{equation}
for some $\delta_t$ s.t. $\norm{\delta_t}_{2} \leq \Delta_{\mathrm{max}}$, and $\beta_t \in \bss{0,\frac{1}{\eta_t}}$. Consider: 
\begin{align*}
\norm{\lambda_{t+1} - \lambda^*}^2 ~=~& \norm{\Pi_{\Omega} \bss{\lambda_{t} - \eta_t (\mu^{\pi^L_t , s_{t,0}} - \mu^{\xi_t})} - \lambda^*}^2 \\
~\overset{(i)}{\leq}~& \norm{\lambda_t - \eta_t \brr{\mu^{\pi^L_t , s_{t,0}} - \mu^{\xi_t}} - \lambda^*}^2 \\
~=~& \norm{\lambda_t - \lambda^*}^2 - 2 \eta_t \ipp{\lambda_t - \lambda^*}{\mu^{\pi^L_t , s_{t,0}} - \mu^{\xi_t}} + \eta_t^2 \norm{\mu^{\pi^L_t , s_{t,0}} - \mu^{\xi_t}}^2 \\
~\overset{(ii)}{=}~& \norm{\lambda_t - \lambda^*}^2 - 2 \eta_t \ipp{\lambda_t - \lambda^*}{\beta_t \brr{\lambda_t - \lambda^*} - \delta_t} + \eta_t^2 \norm{\beta_t \brr{\lambda_t - \lambda^*} - \delta_t}^2 \\
%~=~& \norm{\lambda_t - \lambda^*}^2 - 2 \eta_t \ipp{\lambda_t - \lambda^*}{\mu^{\pi^L_t , s_{t,0}} - \mu_{\beta_t}} + \eta_t^2 \norm{\mu^{\pi^L_t , s_{t,0}} - \mu_{\beta_t}}^2 \\
%& + 2 \eta_t \ipp{\lambda_t - \lambda^*}{\delta_t} + \eta_t^2 \norm{\delta_t}^2 - 2 \eta_t^2 \ipp{\mu^{\pi^L_t , s_{t,0}} - \mu_{\beta_t}}{\delta_t} \\
~=~& \norm{\lambda_t - \lambda^*}^2 - 2 \eta_t \beta_t \ipp{\lambda_t - \lambda^*}{\lambda_t - \lambda^*} + \eta_t^2 \beta_t^2 \norm{\lambda_t - \lambda^*}^2 \\
& + 2 \eta_t \ipp{\lambda_t - \lambda^*}{\delta_t} + \eta_t^2 \norm{\delta_t}^2 - 2 \eta_t^2 \beta_t \ipp{\lambda_t - \lambda^*}{\delta_t} \\
~=~& \brr{1 - \eta_t \beta_t}^2 \norm{\lambda_t - \lambda^*}^2 + 2 \eta_t (1-\eta_t \beta_t) \ipp{\lambda_t - \lambda^*}{\delta_t} + \eta_t^2 \norm{\delta_t}^2 \\
~\overset{(iii)}{\leq}~& \brr{1 - \eta_t \beta_t}^2 \norm{\lambda_t - \lambda^*}^2 + 2 \eta_t (1-\eta_t \beta_t) \norm{\lambda_t - \lambda^*}\norm{\delta_t} + \eta_t^2 \norm{\delta_t}^2 \\
~\overset{(iv)}{\leq}~& \brr{1 - \beta}^2 \norm{\lambda_t - \lambda^*}^2 + 2 \eta_t (1-\beta) \norm{\lambda_t - \lambda^*}\norm{\delta_t} + \eta_t \norm{\delta_t} \\
~\overset{(v)}{\leq}~& \brr{1 - \beta}^2 \norm{\lambda_t - \lambda^*}^2 + 4 \eta_t (1-\beta) z \norm{\delta_t} + \eta_t \norm{\delta_t} \\
~\leq~& \brr{1 - \beta}^2 \norm{\lambda_t - \lambda^*}^2 + \eta_{\mathrm{max}} \bss{4 (1-\beta) z + 1} \Delta_{\mathrm{max}} ,
\end{align*}

where $(i)$ is by the property of projection, $(ii)$ is by applying Eq.~\eqref{proof-lemma-1-eq-1}, $(iii)$ is due to Cauchy–Schwarz inequality, $(iv)$ is by definition $\beta = \min_t \eta_t \beta_t$, and $(v)$ is due to the fact $\lambda \in \Omega = \bcc{\lambda: \norm{\lambda}_2 \leq z}$.

By using the fact that $\sqrt{a + b} \leq \sqrt{a} + \sqrt{b}$ for all positive $a,b$ (and by recurrence), we have:
\begin{align*}
\norm{\lambda_{t+1} - \lambda^*} ~\leq~& \brr{1 - \beta} \norm{\lambda_t - \lambda^*} + \sqrt{\eta_{\mathrm{max}} \bss{4 (1-\beta) z + 1} \Delta_{\mathrm{max}}} \\
~\leq~& \brr{1 - \beta}^t \norm{\lambda_1 - \lambda^*} + \sqrt{\eta_{\mathrm{max}} \bss{4 (1-\beta) z + 1} \Delta_{\mathrm{max}}} \sum_{s=0}^{\infty}{\brr{1 - \beta}^s} \\
~\leq~& \brr{1 - \beta}^t \norm{\lambda_1 - \lambda^*} + \sqrt{\eta_{\mathrm{max}} \bss{4 (1-\beta) z + 1} \Delta_{\mathrm{max}}} \cdot \frac{1}{\beta} \\
~\leq~& \frac{\epsilon'}{2} + \frac{\epsilon'}{2} ~=~ \epsilon' ,
\end{align*}
for $t = \brr{\log \frac{1}{1 - \beta}}^{-1} \log \frac{2 \norm{\lambda_1 - \lambda^*}}{\epsilon'}$, and $\Delta_{\mathrm{max}} = \frac{{\epsilon'}^2 \beta^2}{4 \eta_{\mathrm{max}} \bss{4 (1-\beta) z + 1}}$.
\end{proof}

\subsection{Proof of Lemma~\ref{smooth-lemma}}
\begin{proof}
%Let 
%\[
%\rho^{\pi^L_\lambda}(s,a) ~=~ \pi^L_{\lambda}(a|s) \cdot (1-\gamma) \cdot \sum_{\tau=0}^{\infty}\gamma^\tau \mathbb{P}\bcc{S_\tau=s|\pi^L_{\lambda},\mathcal{M}} .
%\]
We use the following inequalities in the proof:
\begin{itemize}
\item Pinsker's inequality: If $P$  and  $Q$ are two probability distributions on a measurable space, then
\begin{equation}
D_{\mathrm{TV}}(P,Q) ~\leq~ \sqrt{2D_{\mathrm{KL}}(P,Q)}
\label{eq:pinkers_inequality}
\end{equation}
\item Log-sum inequality: For non-negative $a_{1}, a_{2} ..., a_{n}$ and $b_{1}, b_{2} ..., b_{n}$ we have:
\begin{equation}
\sum_{i=1}^{n} a_{i}\log \frac{a_{i}}{b_{i}} ~\geq~ \left(\sum_{i=1}^{n} a_{i}\right) \log \frac{\sum_{i=1}^{n} a_{i}}{\sum_{i=1}^{n} b_{i}}
\label{eq:lo_sum_ineq}
\end{equation}
\item For any two policy $\pi$ and $\pi'$ (acting in the MDP $\mathcal{M}$), we have \cite[Lemma~A.1]{sun2018dual}:
\begin{equation}
D_{\mathrm{TV}}(\rho^{\pi}, \rho^{\pi'}) ~\leq~ \frac{2}{1-\gamma}  \cdot \max_{s} D_{\mathrm{TV}}(\pi(\cdot|s),\pi'(\cdot|s)) , 
\label{lips-mdp}
\end{equation}
where $D_{\mathrm{TV}}(p,q) := \sum_x \abs{p(x) - q(x)}$ is the TV-divergence between two distributions $p$ and $q$. This implies that ``similar" policies behave ``similarly" in the MDP. 
%The following lemma illustrates the inherent smoothness of the policies calculated  by \textnormal{Soft-Value-Iteration} in the MCE-IRL algorithm. It suggests that convergence to the target hyperparameter guarantees the convergence to the target state-action visitation frequency.
\end{itemize}
Then due to \eqref{lips-mdp}, and Pinsker's inequality~\eqref{eq:pinkers_inequality}, we have:
\begin{align}
D_{\mathrm{TV}} \brr{\rho^{\pi^L_\lambda},\rho^{\pi^L_{\lambda'}}} ~\leq~& \frac{2}{1-\gamma}  \cdot \max_{s} D_{\mathrm{TV}}\brr{\pi^L_\lambda(\cdot|s),\pi^L_{\lambda'}(\cdot|s)} 
\nonumber \\
~\leq~& \frac{2}{1-\gamma}  \cdot \sqrt{2 \max_{s} D_{\mathrm{KL}}\brr{\pi^L_\lambda(\cdot|s),\pi^L_{\lambda'}(\cdot|s)}} .\label{eq:delta_d_bound_eq_8}
\end{align}
The soft Bellman policy associated with the parameter $\lambda$ is given by:
\begin{align}
\pi^L_{\lambda} (a \mid s) ~=~& \frac{Z_{a \mid s , \lambda}}{Z_{s , \lambda}} \nonumber \\
\log Z_{s , \lambda}  ~=~& \log \sum_{a}{Z_{a \mid s , \lambda}} \nonumber \\
\log Z_{a \mid s , \lambda} ~=~& R^L_\lambda \brr{s,a} + \gamma \sum_{s'}{T(s' \mid s , a) \log Z_{s' , \lambda}}  .\label{eq:log_z_def_eq_1}
\end{align}
Consider for any state $s$:
\begin{align}
D_{\mathrm{KL}}(\pi^L_{\lambda}(\cdot|s),\pi^L_{\lambda'}(\cdot|s)) =  \sum_{a} {\pi^L_{\lambda}(a|s) \bcc{\log \frac{Z_{a \mid s , \lambda}}{Z_{a \mid s , \lambda'}} +\log \frac{Z_{s,\lambda'}}{Z_{s,\lambda}}}} \nonumber \\
= \left(\sum_{a}\pi^L_{\lambda}(a|s)\log \frac{Z_{a \mid s , \lambda}}{Z_{a \mid s , \lambda'}}\right) + \log \frac{Z_{s,\lambda'}}{Z_{s,\lambda}} ,
\label{eq:max_D_KL_eq_2}
\end{align}
where the first equation is by the definition of $D_{\mathrm{KL}}$, and second is due to the fact that $\sum_{a}\pi^L_{\lambda}(a|s) =1$. By using log-sum inequality~\eqref{eq:lo_sum_ineq} we have:
\begin{align}
\log \frac{Z_{s,\lambda'}}{Z_{s,\lambda}} ~=~& \underbrace{\frac{1}{Z_{s,\lambda'}} \cdot \sum_{a} {Z_{a|s,\lambda'} \cdot}}_{=1} \log \frac{\sum_{a} Z_{a|s,\lambda'}}{\sum_{a} Z_{a|s,\lambda}} \nonumber \\
~\leq~& \frac{1}{Z_{s,\lambda'}}\sum_{a}{Z_{a|s,\lambda'} \log \frac{Z_{a|s,\lambda'}}{Z_{a|s,\lambda}}} \nonumber \\
~=~& \sum_{a}\pi^L_{\lambda'}(a|s) \log \frac{Z_{a|s,\lambda'}}{Z_{a|s,\lambda}}
\label{eq:log_z_s_lambda_eq_3}
\end{align}
From \eqref{eq:max_D_KL_eq_2} and \eqref{eq:log_z_s_lambda_eq_3} we have:
\begin{align}
D_{\mathrm{KL}}(\pi^L_{\lambda}(\cdot|s),\pi^L_{\lambda'}(\cdot|s)) ~\leq~& \sum_{a} \left(\pi^L_{\lambda}(a|s)-\pi^L_{\lambda'}(a|s)\right) \log \frac{Z_{a \mid s , \lambda}}{Z_{a \mid s , \lambda'}} \nonumber \\
~\leq~& \sum_{a} \abs{\pi^L_{\lambda}(a|s)-\pi^L_{\lambda'}(a|s)} \cdot \abs{\log \frac{Z_{a \mid s , \lambda}}{Z_{a \mid s , \lambda'}}} \nonumber \\
~\leq~& \sum_{a} \abs{\pi^L_{\lambda}(a|s)-\pi^L_{\lambda'}(a|s)} \cdot \max_{a} \abs{\log \frac{Z_{a \mid s , \lambda}}{Z_{a \mid s , \lambda'}}} \nonumber \\
~\leq~& \max_{a} \abs{\log \frac{Z_{a \mid s , \lambda}}{Z_{a \mid s , \lambda'}}} D_{\mathrm{TV}}(\pi^L_{\lambda}(\cdot|s),\pi^L_{\lambda'}(\cdot|s)) \nonumber \\
~\leq~& \max_{a} \abs{\log \frac{Z_{a \mid s , \lambda}}{Z_{a \mid s , \lambda'}}} ,
\label{eq:bound_D_KL_pi_eq_4}
\end{align}
where the last inequality is due to the fact that $D_{\mathrm{TV}}(\pi^L_{\lambda}(\cdot|s),\pi^L_{\lambda'}(\cdot|s)) \leq 1$. Let
\[
(a^*,s^*) ~:=~ \argmax_{a,s} \abs{\log \frac{Z_{a|s,\lambda}}{Z_{a|s,\lambda'}}}
\]
Then we have
\begin{align}
\log \frac{Z_{a^*|s^*,\lambda}}{Z_{a^*|s^*,\lambda'}} ~=~& \log {Z_{a^*|s^*,\lambda}} - \log{Z_{a^*|s^*,\lambda'}} \nonumber \\
~=~& R^L_\lambda \brr{s^*,a^*} - R^L_{\lambda'} \brr{s^*,a^*} + \gamma \sum_{s'} {T(s'|s^*,a^*)\log \frac{Z_{s',\lambda}}{Z_{s',\lambda'}}}
\nonumber \\
~\leq~& R^L_\lambda \brr{s^*,a^*} - R^L_{\lambda'} \brr{s^*,a^*} + \gamma \sum_{s'} {T(s'|s^*,a^*)\sum_{a}{\pi^L_{\lambda}(a|s') \log \frac{Z_{a|s',\lambda}}{Z_{a|s',\lambda'}} }} , 
\label{eq:log_z_s_a_lambda_bound_eq_5}
\end{align}
where the last inequality follows by argument similar to \eqref{eq:log_z_s_lambda_eq_3}. Now consider:
\begin{align*}
\abs{\log \frac{Z_{a^*|s^*,\lambda}}{Z_{a^*|s^*,\lambda'}}} ~\leq~& \abs{R^L_\lambda \brr{s^*,a^*} - R^L_{\lambda'} \brr{s^*,a^*} + \gamma \sum_{s'} {T(s'|s^*,a^*)\sum_{a}{\pi^L_{\lambda}(a|s') \log \frac{Z_{a|s',\lambda}}{Z_{a|s',\lambda'}}}}} \\
~\leq~& \abs{R^L_\lambda \brr{s^*,a^*} - R^L_{\lambda'} \brr{s^*,a^*}} + \gamma \sum_{s'} {T(s'|s^*,a^*)\sum_{a}{\pi^L_{\lambda}(a|s') \abs{\log \frac{Z_{a|s',\lambda}}{Z_{a|s',\lambda'}}} }} \\
~\leq~& \abs{R^L_\lambda \brr{s^*,a^*} - R^L_{\lambda'} \brr{s^*,a^*}} + \gamma \sum_{s'} {T(s'|s^*,a^*)\sum_{a}{\pi^L_{\lambda}(a|s') \abs{\log \frac{Z_{a^*|s^*,\lambda}}{Z_{a^*|s^*,\lambda'}}} }} \\
~=~& \abs{R^L_\lambda \brr{s^*,a^*} - R^L_{\lambda'} \brr{s^*,a^*}} + \gamma \abs{\log \frac{Z_{a^*|s^*,\lambda}}{Z_{a^*|s^*,\lambda'}}} \sum_{s'} {T(s'|s^*,a^*)\sum_{a}{\pi^L_{\lambda}(a|s')}} \\
~=~& \abs{R^L_\lambda \brr{s^*,a^*} - R^L_{\lambda'} \brr{s^*,a^*}} + \gamma \abs{\log \frac{Z_{a^*|s^*,\lambda}}{Z_{a^*|s^*,\lambda'}}} \\
~\leq~& \max_{s,a} \abs{R^L_\lambda \brr{s,a} - R^L_{\lambda'} \brr{s,a}} + \gamma \abs{\log \frac{Z_{a^*|s^*,\lambda}}{Z_{a^*|s^*,\lambda'}}}
\end{align*}
Hence,
\begin{align}
(1-\gamma) \abs{\log \frac{Z_{a^*|s^*,\lambda}}{Z_{a^*|s^*,\lambda'}}} ~\leq~& \max_{s,a} \abs{R^L_\lambda \brr{s,a} - R^L_{\lambda'} \brr{s,a}} .
\label{eq:delta_d_bound_eq_6}
\end{align}
Considering \eqref{eq:bound_D_KL_pi_eq_4} and \eqref{eq:delta_d_bound_eq_6} we have:
\begin{align}
\max_s D_{\mathrm{KL}}(\pi^L_{\lambda}(\cdot|s),\pi^L_{\lambda'}(\cdot|s)) ~\leq~& \frac{1}{1-\gamma} \max_{s,a} \abs{R^L_\lambda \brr{s,a} - R^L_{\lambda'} \brr{s,a}} .
\label{eq:delta_d_bound_eq_7}
\end{align}
Combining \eqref{lips-reward}, \eqref{eq:delta_d_bound_eq_8}, and \eqref{eq:delta_d_bound_eq_7} completes the proof. The second bound follows directly from the following inequality:
\begin{align}
\abs{\nu^{\pi^L_{\lambda}} - \nu^{\pi^L_{\lambda'}}} ~=~& \frac{1}{1-\gamma} \abs{\sum_{s,a}{\bcc{\rho^{\pi^L_{\lambda}}(s,a)-\rho^{\pi^L_{\lambda'}}(s,a)} R^E(s,a)}} \nonumber \\ 
~\leq~& \frac{1}{1-\gamma} \sum_{s,a}{\abs{\rho^{\pi^L_{\lambda}}(s,a)-\rho^{\pi^L_{\lambda'}}(s,a)} \abs{R^E(s,a)}} 
\nonumber \\
~\leq~& \frac{1}{1-\gamma} \sum_{s,a}{\abs{\rho^{\pi^L_{\lambda}}(s,a)-\rho^{\pi^L_{\lambda'}}(s,a)} R^E_{\mathrm{max}}} 
\nonumber \\
~=~& \frac{R^E_{\mathrm{max}}}{1-\gamma} D_{\mathrm{TV}} (\rho^{\pi^L_{\lambda}},\rho^{\pi^L_{\lambda'}}) \nonumber 
\end{align}
\end{proof}

\section{Computing Target Hyperparameter $\lambda^*$ for Linear Reward Function}\label{sec.appendix.policy-hyper-teaching}

In this section, we consider a learner model with linear reward function $R^L_\lambda (s,a) = \ipp{\lambda}{\phi^L(s,a)}$, and teacher with linear reward function $R^E (s,a) = \ipp{w^E}{\phi^L(s,a)}$.  For this case, we show that there exists a $\lambda^*$ such that $\abs{\nu^{\pi^L_{\lambda^*}} - \nu^{\pi^E}} ~\leq~ \frac{\epsilon}{2}$, and that $\lambda^*$ can be computed efficiently.\footnote{Note that the learner uses soft Bellman policy, and using $\lambda = w^E$ might not satisfy  $\abs{\nu^{\pi^L_{\lambda}} - \nu^{\pi^E}} ~\leq~ \frac{\epsilon}{2}$.} 

\paragraph{Existence of $\lambda^*$}
Consider the following optimization problem:
\[
\max_\lambda ~ \underset{\bcc{\brr{s_\tau , a_\tau}}_\tau \sim \brr{\pi^E , \mathcal{M}}}{\mathbb{E}} \bss{\sum_\tau \gamma^\tau \log \pi^L_\lambda \brr{a_{\tau} \mid s_{\tau}}}
\]
\cite{ziebart2010modeling} have shown that the above optimization problem has a unique solution $\lambda^\mathrm{opt}$, and it satisfies $\nu^{\pi^L_{\lambda^\mathrm{opt}}} = \nu^{\pi^E}$.

\paragraph{Computation of $\lambda^*$}
Proposition~\ref{mce-prop-linear} provides a constructive way of obtaining such $\lambda^*$ with high probability. By invoking the Proposition~\ref{mce-prop-linear} with $\tilde \epsilon = \frac{(1-\gamma) \epsilon}{2 \norm{w^E}}$, with probability at least $1-\delta$, we get  the following:
    \begin{equation}
    \label{lin-final-proo-1}
    \abs{\nu^{\pi^L_{\lambda^*}} - \nu^{\pi^E}} ~\leq~ \frac{\norm{w^E}}{1-\gamma} \cdot \tilde \epsilon ~=~ \frac{\epsilon}{2} .
    \end{equation}

\begin{proposition}
Given $\tilde \epsilon > 0$ and $\delta > 0$, let $\Xi^E = \bcc{\xi_t}_{t=1,2,\dots,m}$ (where $\xi_t = \bcc{\brr{s_{t,\tau}, a_{t,\tau}}}_{\tau=0,1,\dots}$) be a collection of $m$ demonstrations generated by following the policy $\pi^E$ in the MDP $\mathcal{M}$ starting from $s \sim P_0$. Here $m \geq \frac{2 d}{\tilde \epsilon^2} \log \frac{2 d}{\delta}$, and the demonstrations are truncated at length $H = \log_{\gamma} \bss{\frac{\tilde \epsilon}{2 \sqrt{d}}}$. Define 
\begin{equation}
\label{app-target-hyper-eq}
\lambda^* ~=~ \argmax_\lambda ~~ \sum_{t=1}^m \sum_\tau \gamma^\tau \log \pi^L_\lambda \brr{a_{t,\tau} \mid s_{t,\tau}} .
\end{equation}
Then with probability at least $1-\delta$, we have:
\[
\abs{\nu^{\pi^L_{\lambda^*}} - \nu^{\pi^E}} ~\leq~  \frac{\norm{w^E}}{1-\gamma} \cdot \tilde \epsilon .
\]
\label{mce-prop-linear}
\end{proposition}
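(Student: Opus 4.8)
The plan is to reduce the reward gap to a feature-expectation gap using linearity of the reward, invoke the maximum-likelihood (feature-matching) optimality of $\lambda^*$, and then control the feature-expectation gap by splitting it into a truncation term and a sampling term, each of size $\tilde{\epsilon}/2$.

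\textbf{Reduction to feature matching.} Since both rewards are linear in the shared features, I would write $\nu^{\pi} = \frac{1}{1-\gamma}\ipp{w^E}{\mu^{\pi}}$, where $\mu^{\pi} = \sum_{s,a}\rho^{\pi}\brr{s,a}\phi^L\brr{s,a}$ is the discounted feature expectation (here $\nabla_\lambda R^L_\lambda = \phi^L$). Cauchy--Schwarz then gives $\abs{\nu^{\pi^L_{\lambda^*}} - \nu^{\pi^E}} \leq \frac{\norm{w^E}}{1-\gamma}\norm{\mu^{\pi^L_{\lambda^*}} - \mu^{\pi^E}}$, so it suffices to prove $\norm{\mu^{\pi^L_{\lambda^*}} - \mu^{\pi^E}} \leq \tilde{\epsilon}$. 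Next, because $R^L_\lambda$ is linear the objective in Eq.~\eqref{app-target-hyper-eq} is concave, so its maximizer obeys the first-order stationarity condition; by the gradient computation of Proposition~\ref{mce-gradient-proposition} applied to the empirical truncated demonstrations, this condition is exactly the feature-matching identity $\mu^{\pi^L_{\lambda^*}} = \widehat{\mu}^{\Xi^E}$, where $\widehat{\mu}^{\Xi^E} = \frac{1}{m}\sum_{t}\sum_{\tau}\brr{1-\gamma}\gamma^\tau \phi^L\brr{s_{t,\tau},a_{t,\tau}}$ is the empirical (truncated) feature count of the demonstrations. Hence $\norm{\mu^{\pi^L_{\lambda^*}} - \mu^{\pi^E}} = \norm{\widehat{\mu}^{\Xi^E} - \mu^{\pi^E}}$, turning the problem into a purely statistical estimation bound.

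\textbf{Truncation and concentration.} I would split $\norm{\widehat{\mu}^{\Xi^E} - \mu^{\pi^E}} \leq \norm{\widehat{\mu}^{\Xi^E} - \mu^{\pi^E}_H} + \norm{\mu^{\pi^E}_H - \mu^{\pi^E}}$, where $\mu^{\pi^E}_H$ is the expected feature count truncated at horizon $H$. For the truncation term, using $\phi^L \in [0,1]^d$ (so $\norm{\phi^L}_2 \leq \sqrt{d}$) and the choice $H = \log_\gamma\bss{\tilde{\epsilon}/(2\sqrt{d})}$ (so that $\gamma^H = \tilde{\epsilon}/(2\sqrt{d})$), the discounted tail beyond $H$ gives $\norm{\mu^{\pi^E}_H - \mu^{\pi^E}} \leq \brr{1-\gamma}\sqrt{d}\sum_{\tau \geq H}\gamma^\tau = \sqrt{d}\,\gamma^H = \tilde{\epsilon}/2$. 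For the sampling term, each coordinate of a single demonstration's truncated count lies in $[0,1]$, so Hoeffding applied coordinate-wise with a union bound over the $d$ coordinates yields, for $m \geq \frac{2d}{\tilde{\epsilon}^2}\log\frac{2d}{\delta}$, that $\norm{\widehat{\mu}^{\Xi^E} - \mu^{\pi^E}_H}_\infty \leq \tilde{\epsilon}/(2\sqrt{d})$ and hence $\norm{\widehat{\mu}^{\Xi^E} - \mu^{\pi^E}_H}_2 \leq \tilde{\epsilon}/2$ with probability at least $1-\delta$. Summing the two halves gives $\norm{\mu^{\pi^L_{\lambda^*}} - \mu^{\pi^E}} \leq \tilde{\epsilon}$ with probability at least $1-\delta$, and combining with the linearity reduction yields the claimed bound $\abs{\nu^{\pi^L_{\lambda^*}} - \nu^{\pi^E}} \leq \frac{\norm{w^E}}{1-\gamma}\tilde{\epsilon}$.

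\textbf{Main obstacle.} The delicate step is the feature-matching identity. The soft Bellman policy $\pi^L_\lambda$ is defined over the \emph{infinite} horizon, whereas the demonstrations are truncated at $H$, so the stationarity condition only matches the learner's infinite-horizon feature count to the \emph{truncated} empirical count up to a boundary term of order $\gamma^H \cdot \frac{\sqrt{d}}{1-\gamma}$ that arises when telescoping the value-gradient recursion $\nabla_\lambda V_\lambda\brr{s} = \sum_a \pi^L_\lambda\brr{a\mid s}\nabla_\lambda Q_\lambda\brr{s,a}$ along a sampled trajectory. I would handle this by absorbing the $\gamma^H$-order residual into the truncation budget (choosing $H$ with a marginally larger constant, which does not change the stated order), and separately verify that $\lambda^*$ is interior to $\Omega$ so that the first-order condition genuinely applies. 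Lining up these constants so they reproduce exactly the stated $H$ and $m$ is the principal bookkeeping challenge; the concentration and linearity steps are routine by comparison.
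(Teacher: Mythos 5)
Your proposal takes essentially the same route as the paper's proof, which is a three-line assembly of cited facts: the triangle inequality $\norm{\mu^{\pi^L_{\lambda^*}} - \mu^{\pi^E}} \leq \norm{\mu^{\pi^L_{\lambda^*}} - \mu^{\Xi^E}} + \norm{\mu^{\Xi^E} - \mu^{\pi^E}}$, the reduction $\abs{\nu^{\pi^L_{\lambda^*}} - \nu^{\pi^E}} \leq \frac{\norm{w^E}}{1-\gamma}\norm{\mu^{\pi^L_{\lambda^*}} - \mu^{\pi^E}}$ (your Cauchy--Schwarz step), and two imported results: exact feature matching $\mu^{\pi^L_{\lambda^*}} = \mu^{\Xi^E}$, cited from Ziebart (2010), and $\norm{\mu^{\Xi^E} - \mu^{\pi^E}} \leq \tilde\epsilon$, cited from Abbeel and Ng (2004). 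Where you differ is only in unpacking those citations into self-contained arguments, and your reconstruction of the second one is exactly right: with $\phi^L \in [0,1]^d$, the choice $\gamma^H = \tilde\epsilon/(2\sqrt{d})$ gives truncation error $\sqrt{d}\,\gamma^H = \tilde\epsilon/2$, and coordinatewise Hoeffding with a union bound over the $d$ coordinates yields $\ell_\infty$ error $\tilde\epsilon/(2\sqrt{d})$ precisely when $m \geq \frac{2d}{\tilde\epsilon^2}\log\frac{2d}{\delta}$, reproducing the stated $m$ and $H$ --- this is exactly the content of the Abbeel--Ng lemma the paper invokes.

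One correction to your ``main obstacle'' paragraph, where you are in fact more careful than the paper but too optimistic about the fix. The boundary residual left by telescoping $\nabla_\lambda V_\lambda$ along a truncated trajectory is of order $\gamma^H \sqrt{d}/(1-\gamma)$, which at the stated $H$ equals $\tilde\epsilon/(2(1-\gamma))$; for $\gamma$ close to $1$ this is not absorbable by ``a marginally larger constant''. Killing it requires $\gamma^H \leq c\,(1-\gamma)\,\tilde\epsilon/\sqrt{d}$, i.e., an additive enlargement of $H$ by roughly $\log_\gamma(1-\gamma)$, which would change the stated horizon. The paper sidesteps this bookkeeping entirely by taking Ziebart's duality result at face value: there the truncated empirical count is itself the feature-matching constraint, so the maximizer of Eq.~\eqref{app-target-hyper-eq} satisfies $\mu^{\pi^L_{\lambda^*}} = \mu^{\Xi^E}$ exactly and no residual arises. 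So to recover the proposition with the constants exactly as stated, you should either invoke that result as the paper does, or accept the modified $H$; with that caveat, your overall structure is sound and coincides with the paper's.
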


\begin{proof}
First note that
\[
\norm{\mu^{\pi^L_{\lambda^*}} - \mu^{\pi^{E}}} ~\leq~ \norm{\mu^{\pi^L_{\lambda^*}} - \mu^{\Xi^{E}}} + \norm{\mu^{\Xi^{E}} - \mu^{\pi^E}} ,
\]
where $\mu^{\pi} = \sum_{s,a} \rho^{\pi} (s,a) \cdot \phi^L (s,a)$, and $\mu^{\Xi^E} = \sum_{s,a} \rho^{\Xi^E} (s,a) \cdot \phi^L (s,a)$. The proof completes by combining the following results:

\begin{enumerate}[(i)]
    \item \cite{ziebart2010modeling} have shown that $\lambda^*$ given in Eq.~\eqref{app-target-hyper-eq} exists, and that $\lambda^*$ satisfies $\mu^{\pi^L_{\lambda^*}} = \mu^{\Xi^{E}}$. 
    \item \cite{abbeel2004apprenticeship} have shown that the construction scheme of the demonstration set $\Xi^E$ satisfies $\norm{\mu^{\Xi^{E}} - \mu^{\pi^E}} \leq \tilde \epsilon$.
    \item $\abs{\nu^{\pi^L_{\lambda^*}} - \nu^{\pi^E}} \leq \frac{1}{1-\gamma} \cdot \norm{w^E} \cdot \norm{\mu^{\pi^L_{\lambda^*}} - \mu^{\pi^E}}$.
\end{enumerate}
\end{proof}

}
}
{}
%%%%%%\input{8.0_appendix-table-of-contents}
%%%%%%\input{8.4_appendix-experiments}
%%%%%%\input{9.1_appendix-agnostic-teaching}
%%%%%%\input{9.2_appendix-robust-teaching}
%%%%%%\input{9.3_appendix-misc}
%%%%%%%%%%%%%%%%%%%%%%%%%%%%%%%%%%%%%%%%%%%%%%%%%%%%%%%%% END
\end{document}